
\documentclass[journal]{IEEEtran}

\newtheorem{thm}{Theorem}[section]

\newtheorem{lem}[thm]{Lemma}
\newtheorem{prop}[thm]{Proposition}
\newtheorem{problem}{Problem}
\newtheorem{ex}[thm]{Example}
\newtheorem{defn}[thm]{Definition}
\newtheorem{rem}[thm]{Remark}

\newcommand{\set}[1]{\left\{#1\right\}}

\IEEEoverridecommandlockouts
\overrideIEEEmargins

\usepackage{graphicx} 

\usepackage{epsfig} 
\usepackage{amsmath}
\usepackage{amssymb}
\usepackage{epstopdf}
\usepackage{cite}
\usepackage[noend,ruled,linesnumbered]{algorithm2e}
\SetKwComment{Comment}{$\triangleright$\ } {}
\usepackage{multirow}
\usepackage{rotating}
\usepackage{subfigure}
\usepackage{color}
\usepackage{mysymbol}
\usepackage{url}

\begin{document}
\bstctlcite{IEEEexample:BSTcontrol}
\title{Temporal Logic Task Planning and Intermittent Connectivity Control of Mobile Robot Networks}

\author{Yiannis~Kantaros,~\IEEEmembership{Student Member,~IEEE,} Meng~Guo,~\IEEEmembership{Student Member,~IEEE,} and Michael~M.~Zavlanos,~\IEEEmembership{Member,~IEEE}
\thanks{Yiannis Kantaros, Meng Guo, and Michael M. Zavlanos are with the Department of Mechanical Engineering and Materials Science, Duke University, Durham, NC 27708, USA. $\left\{\text{yiannis.kantaros, meng.guo, michael.zavlanos}\right\}$@duke.edu. This work is supported in part by NSF under grant CNS \#1302284 and by ONR under grant \#N000141812374.}
}

\maketitle
\thispagestyle{empty}
\pagestyle{empty}
\begin{abstract}
In this paper, we develop a distributed intermittent communication and task planning framework for mobile robot teams. The goal of the robots is to accomplish complex tasks, captured by local Linear Temporal Logic formulas, and share the collected information with all other robots and possibly also with a user. Specifically, we consider situations where the robot communication capabilities are not sufficient to form reliable and connected networks while the robots move to accomplish their tasks. In this case, intermittent communication protocols are necessary that allow the robots to temporarily disconnect from the network in order to accomplish their tasks free of communication constraints. We assume that the robots can only communicate with each other when they meet at common locations in space. Our distributed control framework jointly determines local plans that allow all robots fulfill their assigned temporal tasks, sequences of communication events that guarantee information exchange infinitely often, and optimal communication locations that minimize red a desired distance metric. Simulation results verify the efficacy of the proposed controllers.
\end{abstract}

\begin{IEEEkeywords} 
Multi-robot networks, intermittent communication, distributed LTL-based planning.
\end{IEEEkeywords}

\section{Introduction}

\IEEEPARstart{R}{ecently}, there has been a large amount of work focused on designing controllers that ensure point-to-point or end-to-end network connectivity of mobile robot networks for all time. Such controllers either rely on graph theoretic approaches  \cite{zavlanos2007potential,ji2007distributed,Zavlanos_IEEETRO08,sabattini2013decentralized,Zavlanos_IEEE11} or employ more realistic communication models that take into account path loss, shadowing, and multi-path fading as well as optimal routing decisions for desired information rates \cite{zavlanos2013network,yan2012robotic,kantaros2016distributed,kantaros2016global,stephan2017concurrent}. 
%
%
However, due to the uncertainty in the wireless channel, it is often impossible to ensure all-time connectivity in practice. Moreover, such methods often prevent the robots from accomplishing their tasks, as motion planning is always restricted by connectivity constraints on the network. Therefore, a much preferred solution is to allow robots to communicate in an intermittent fashion and operate in disconnect mode the rest of the time. 

Intermittent communication in multi-agent systems has been studied in consensus problems \cite{wen2014distributed}, coverage problems \cite{wang2010awareness}, and in delay-tolerant networks \cite{lindgren2003probabilistic, jones2007practical}. The common assumption in these works is that the communication network is connected over time, infinitely often. Relevant is also the work on event-based network control \cite{dimarogonas2012distributed,tabuada2007event} where, although the network is assumed to be connected for all time, messages between the agents are exchanged intermittently when certain events take place. In this paper, we lift all connectivity assumptions and, instead, control the communication network itself so that it is guaranteed to be intermittently connected, infinitely often.
%
%
%
Specifically, we assume that robots can only communicate when they are physically close to each other. The intermittent connectivity requirement is captured by
a global Linear Temporal Logic (LTL) statement that forces small groups of robots, also called teams, to meet infinitely often at locations in space that are common for each team, but possibly different across teams. We assume that every robot belongs to at least one team and that there is a path, i.e., a sequence of teams where consecutive teams have non-empty intersections, connecting every two teams of robots, so that
information can propagate in the network.

In addition to the intermittent communication requirement, we also assume that the robots are responsible for accomplishing independent tasks that are specified by local LTL formulas. These tasks can be, e.g., gathering of information in the environment that needs to reach all other robots and possibly a user through the proposed intermittently connected network. Given the global LTL statement comprised of the intermittent communication requirement and the local LTL tasks, existing control synthesis approaches for global LTL specifications \cite{kloetzer2008distributed,chen2011synthesis,chen2012formal} that rely on transition systems to abstract robot mobility can be used to obtain correct-by-construction controllers. Nevertheless, such approaches do not optimize task performance. Optimal control synthesis algorithms for mobile robot networks under global LTL specifications are proposed in \cite{ulusoy2013optimality,ulusoy2014optimal,kantaros2017Csampling,kantaros2017Dsampling}. Common in \cite{ulusoy2013optimality,ulusoy2014optimal} is that they rely on the construction of a synchronous product automaton among all robots and the application of graph search methods to synthesize optimal plans. Therefore, these approaches are resource demanding and scale poorly with the number of robots. 
%
Sampling-based optimal control synthesis methods under global LTL specifications have also been proposed by the authors in \cite{kantaros2017Csampling} that scale better than the methods in \cite{ulusoy2013optimality,ulusoy2014optimal}. 
The methods proposed in \cite{ulusoy2013optimality,ulusoy2014optimal,kantaros2017Csampling} are all centralized and offline and, therefore, not reactive to new tasks. Moreover, they require as an input the B$\ddot{\text{u}}$chi automaton that corresponds to the global LTL formula, which is generated by a computationally expensive process.
A distributed implementation of \cite{kantaros2017Csampling} that can optimize feasible motion plans online is presented in \cite{kantaros2017Dsampling}. However, \cite{kantaros2017Dsampling} requires an all-time connected communication network which is not the case here. A new logic, called counting linear temporal logic, is proposed in \cite{sahin2017provably} that can be used for coordination of large collections of agents. However, this approach is centralized, offline, and assumes that the identity of the agents is not important for the successful accomplishment of the task, which is not the case here due to the intermittent connectivity requirement. 

In this work, our goal is to synthesize motion plans for all robots so that both the local LTL tasks and the global LTL formula capturing the intermittent connectivity requirement are satisfied, while minimizing a desired distance metric. To achieve that, we avoid the construction of the product automaton altogether and instead propose an online and distributed framework to design correct-by-construction controllers for the robots. 
In particular, we first focus on the intermittent connectivity requirement and propose a new distributed framework to design sequences of communication events, also called communication schedules,  for all teams of robots. Then, we develop discrete plans for the robots that satisfy the local LTL tasks while ensuring that teams can communicate according to the predetermined schedules. The locations of the communication events in these discrete plans are selected so that they optimize a desired distance metric. The proposed controllers are synthesized in a distributed and online fashion, and can be executed asynchronously, which is not the case in most relevant literature as, e.g., in \cite{kloetzer2010automatic,kantaros2017Csampling,kantaros2017Dsampling,sahin2017provably,pola2016decentralized}. 

To the best of our knowledge, the most relevant works to the one proposed here are recent works by the authors \cite{kantaros2016distributedInterm,kantaros2016simultaneous,kantaros15asilomar,guo2017distributed}. Specifically, \cite{kantaros2016distributedInterm} proposes an asynchronous distributed intermittent communication framework that is a special case of the one proposed here in that every robot belongs to exactly two teams and the robots in every team can only meet at a single predetermined location. This framework is extended in \cite{kantaros2016simultaneous}, where robots can belong to any number of teams and every team can select among multiple locations to meet, same as in the work considered here. Nevertheless, neither of the approaches in \cite{kantaros2016distributedInterm,kantaros2016simultaneous} consider concurrent task planning. Intermittent communication control and task planning is considered in \cite{kantaros15asilomar} that relies on the construction of a synchronous product automaton among all robots and, therefore, this approach is centralized and does not scale well with the number of robots. A distributed online approach to this problem is proposed in \cite{guo2017distributed}. The method proposed here is more general in that it can handle the data gathering tasks and the star communication topology in \cite{guo2017distributed} that considers information flow only to the root/user. In fact, in the proposed method, information can flow intermittently between any pair of robots and possibly a user in a multi-hop fashion. Another fundamental difference with \cite{guo2017distributed} is that here the robots first decide \textit{how} they want to communicate by constructing abstract schedules of communication events and then decide \textit{where} they want to communicate by embedding online and optimally these schedules in the workspace so that the desired tasks are also satisfied. In fact, this is a unique feature of the proposed approach that differentiates it from existing literature on communication control where communication is always state-dependent. Other relevant methods that do not rely on LTL for intermittent communication control are presented in \cite{zavlanos2010synchronous,hollinger2010multi}. However, these methods impose strong restrictions on the communication pattern that can be achieved, while \cite{zavlanos2010synchronous} also does not consider concurrent task planning. We provide theoretical guarantees supporting the proposed framework, as well as numerical simulations showing its ability to solve very large and complex planning problems that existing model checking techniques cannot solve.  To the best of our knowledge, this is the first  distributed, online, and asynchronous framework for temporal logic path planning and intermittent communication control that can be applied to large-scale multi-robot systems.

The rest of this paper is organized as follows. In Section \ref{sec:prel} we present some preliminaries in LTL. The problem formulation is described in Section \ref{sec:prob}. In Section \ref{sec:commun}, we design a distributed schedules of communication events that ensure intermittent connectivity. In Section \ref{sec:integration}, we design discrete motion plans that satisfy the assigned local LTL tasks and the intermittent connectivity requirement as per the communication schedules, while minimizing a distance metric. Theoretical guarantees of the proposed algorithm are presented in Section \ref{sec:analysis}. Simulation results are included in Section \ref{sec:sim}. 

\section{Preliminaries}\label{sec:prel} 
The basic ingredients of Linear Temporal Logic are a set of atomic propositions $\mathcal{AP}$, the boolean operators, i.e., conjunction $\wedge$, and negation $\neg$, and two temporal operators, next $\bigcirc$ and until $\mathcal{U}$. LTL formulas over a set $\mathcal{AP}$ can be constructed based on the following grammar: $\phi::=\text{true}~|~\pi~|~\phi_1\wedge\phi_2~|~\neg\phi~|~\bigcirc\phi~|~\phi_1~\mathcal{U}~\phi_2$, where $\pi\in\mathcal{AP}$. For the sake of brevity we abstain from presenting the derivations of other Boolean and temporal operators, e.g., \textit{always} $\square$, \textit{eventually} $\lozenge$, \textit{implication} $\Rightarrow$, which can be found in \cite{baier2008principles}. An infinite \textit{word} $\sigma$ over the alphabet $2^{\mathcal{AP}}$ is defined as an infinite sequence  $\sigma=\pi_0\pi_1\pi_2\dots\in (2^{\mathcal{AP}})^{\omega}$, where $\omega$ denotes infinite repetition and $\pi_k\in2^{\mathcal{AP}}$, $\forall k\in\mathbb{N}$. The language $\texttt{Words}(\phi)=\left\{\sigma\in (2^{\mathcal{AP}})^{\omega}|\sigma\models\phi\right\}$ is defined as the set of words that satisfy the LTL formula $\phi$, where $\models\subseteq (2^{\mathcal{AP}})^{\omega}\times\phi$ is the satisfaction relation.

Any LTL formula $\phi$ can be translated into a Nondeterministic B$\ddot{\text{u}}$chi Automaton (NBA) over $2^{\mathcal{AP}}$ denoted by $B$, which is defined as follows \cite{vardi1986automata}:
\begin{defn}[NBA]
A \textit{Nondeterministic B$\ddot{\text{u}}$chi Automaton} (NBA) $B$ over $2^{\mathcal{AP}}$ is defined as a tuple $B=\left(\ccalQ_{B}, \ccalQ_{B}^0,\Sigma,\rightarrow_B,\mathcal{F}_B\right)$, where $\ccalQ_{B}$ is the set of states, $\ccalQ_{B}^0\subseteq\ccalQ_{B}$ is a set of initial states, $\Sigma=2^{\mathcal{AP}}$ is an alphabet, $\rightarrow_{B}\subseteq\ccalQ_{B}\times \Sigma\times\ccalQ_{B}$ is the transition relation, and $\ccalF_B\subseteq\ccalQ_{B}$ is a set of accepting/final states. 
\end{defn}
An \textit{infinite run} $\rho_B$ of $B$ over an infinite word $\sigma=\pi_0\pi_1\pi_2\dots$, $\pi_k\in\Sigma=2^{\mathcal{AP}}$ $\forall k\in\mathbb{N}$ is a sequence $\rho_B=q_B^0q_B^1q_B^2\dots$ such that $q_B^0\in\ccalQ_B^0$ and $(q_B^{k},\pi_k,q_B^{k+1})\in\rightarrow_{B}$, $\forall k\in\mathbb{N}$. An infinite run $\rho_B$ is called \textit{accepting} if $\texttt{Inf}(\rho_B)\cap\ccalF_B\neq\varnothing$, where $\texttt{Inf}(\rho_B)$ represents the set of states that appear in $\rho_B$ infinitely often. The words $\sigma$ that result in an accepting run of $B$ constitute the accepted language of $B$, denoted by $\ccalL_B$. Then it is proven \cite{baier2008principles} that the accepted language of a NBA $B$, associated with an LTL formula $\phi$, is equivalent to the words of $\phi$, i.e., $\ccalL_B=\texttt{Words}(\phi)$.

\section{Problem Formulation}\label{sec:prob}

 
Consider $N\geq 1$ mobile robots operating in a workspace $\ccalW\subset\mathbb{R}^d$, $d\in\set{2,3}$, containing $W>0$ locations of interest denoted by $\bbv_j$, $j\in\ccalI:=\set{1,\dots,W}$. Mobility of robot $i\in\mathcal{N}:=\{1,\dots,N\}$ in $\ccalW$ is captured by a weighted Transition System (wTS) that is defined as follows:

\begin{defn}[weighted Transition System]\label{defn:wTS}
A weighted \textit{Transition System} for robot $i$, denoted by $\text{wTS}_{i}$ is a tuple $\text{wTS}_{i}=\left(\ccalQ_{i}, q_{i}^0,\rightarrow_{i}, w_i, \mathcal{AP},L_{i}\right)$ where (a) $\ccalQ_{i}=\{q_{i}^{\bbv_j}, j\in\ccalI\}$ is the set of states, where a state $q_{i}^{\bbv_j}$ indicates that robot $i$ is at location $\bbv_j\in\ccalW$; (b) $q_{i}^0\in\ccalQ_{i}$ is the initial state of robot $i$; (c) $\rightarrow_{i}\subseteq\ccalQ_{i}\times\ccalQ_{i}$ is a given transition relation such that $(q_i^{\bbv_j},q_i^{\bbv_e})\in\rightarrow_i$ if there exists a controller that can drive robot $i$ from location $\bbv_j$ to $\bbv_e$ in finite time without going through any other location $\bbv_c$; 
(d) $w_{i}:\ccalQ_{i}\times\ccalQ_{i}\rightarrow \mathbb{R}_+$ is a weight function that captures the distance that robot $i$ needs to travel to move from $\bbv_j$ to $\bbv_e$;\footnote{Note that alternative weights can be assigned to the transitions of the wTSs that can capture e.g.,the time, or energy required for robot $i$ to move from $\bbv_j$ to $\bbv_e$.}
(e) $\mathcal{AP}=\{\{\pi_{i}^{\bbv_j}\}_{i=1}^N\}_{j\in\ccalI}$ is the set of atomic propositions associated with each state; and 
(f) $L_{i}:\ccalQ_{i}\rightarrow \mathcal{AP}$ is defined as $L_i(q_i^{\bbv_j})=\pi_i^{\bbv_j}$, for all $i\in\ccalN$ and $j\in\ccalI$.
\end{defn} 

Every robot $i\in \ccalN$ is responsible for accomplishing high-level tasks associated with some of the locations $\bbv_j$, $j\in\ccalI$. Hereafter, we assume that the tasks assigned to the robots are independent from each other. Specifically, we assume that the task assigned to robot $i$ is captured by a local $\text{LTL}_{-\bigcirc}$ formula  $\phi_i$ \cite{clarke1999model} specified over the set of atomic propositions $\mathcal{AP}=\{\{\pi_{i}^{\bbv_j}\}_{i=1}^N\}_{j\in\ccalI}$, where $\pi_{i}^{\bbv_j}=1$ if $\left\|\bbx_{i}-\bbv_j\right\|\leq\epsilon$, for a sufficiently small $\epsilon>0$, and 0 otherwise, for all $i\in\ccalN$ and $j\in\ccalI$.\footnote{The syntax of $\text{LTL}_{-\bigcirc}$ is the same as the syntax of LTL excluding the `next' operator. The choice of $\text{LTL}_{-\bigcirc}$ over LTL is motivated by the fact that we are interested in the continuous time execution of the synthesized plans, in which case the next operator is not meaningful. This choice is common in relevant works, see, e.g., \cite{kloetzer2008fully} and the references therein.}
%
Namely, the atomic proposition $\pi_{i}^{\bbv_j}$ is true if robot $i$ is sufficiently close to location $\bbv_j$. For example, an $\text{LTL}_{-\bigcirc}$ task for robot $i$ can be: $\phi_i=(\square\Diamond \pi_i^{\bbv_4}) \wedge ((\neg\pi_i^{\bbv_4})\mathcal{U}\pi_i^{\bbv_8}) \wedge (\Diamond \pi_i^{\bbv_5}) \wedge (\square \neg \pi_i^{\bbv_3})\wedge (\square\Diamond \pi_i^{\bbv_1})$, which requires robot $i$ to (i) visit location $\bbv_4$ infinitely often, (ii) never visit location $\bbv_4$ until location $\bbv_8$ is visited, (iii) eventually visit location $\bbv_5$, (iv) always avoid an obstacle located at $\bbv_3$, and (v) visit location at $\bbv_1$ infinitely often. Together with accomplishing local tasks, robots are also responsible for communicating with each other so that any information that is collected as part of these tasks is propagated in the network and, possibly, eventually reaches a user. 

To define a communication network among the robots, we first partition  the robot team into $M\geq 1$ robot subgroups, called also teams, and require that every robot belongs to at least one subgroup. The indices $i$ of the robots that belong to the $m$-th subgroup are collected in a set denoted by $\mathcal{T}_m$, for all $m\in\mathcal{M}:=\{1,2,\dots,M\}$. We define the set that collects the indices of teams that robot $i$ belongs to as $\ccalM_i=\{m|i\in\ccalT_m,~m\in\mathcal{M}\}$. Also, for robot $i$ we define the set that collects the indices of all other robots that belong to common teams with robot $i$ as $\ccalN_i=\{j|j\in\ccalT_m, \forall m\in\ccalM_i\}\setminus\{i\}$, $\forall i\in \mathcal{N}$. 
%
Given the robot teams $\ccalT_m$, for all $m\in\mathcal{M}$, we can define the graph over these teams as follows. 

\begin{defn}[Team Membership Graph $\mathcal{G}_{\mathcal{T}}$]
The graph over the teams $\ccalT_m$, $m\in\ccalM$ is defined as $\ccalG_{\ccalT}=(\ccalV_{\ccalT},\ccalE_{\ccalT})$, where the set of nodes $\ccalV_{\ccalT}=\mathcal{M}$ is indexed by the teams $\ccalT_m$ and set of edges $\ccalE_{\ccalT}$ is defined as $\ccalE_{\ccalT}=\{(m,n)|\ccalT_m\cap\ccalT_n\neq\emptyset, \forall m,n\in\mathcal{M}, m\neq n\}$.
\end{defn}

Given the team membership graph $\ccalG_\ccalT$, we can also define the set
$\ccalN_{\ccalT_m}:=\left\{e\in\ccalV_{\ccalT}|(m,e)\in\ccalE_{\ccalT}\right\}$ that collects all neighboring teams of team $\ccalT_m $ in $\ccalG_{\ccalT}$. Since the robots have limited communication capabilities, we assume
that the robots in every subgroup $\ccalT_m$ can only communicate if all of them are simultaneously present at a common location $\bbv_j \in \ccalW$, hereafter called a communication point. We assume that there are
$R\geq 1$ available communication points in the workspace at locations $\bbv_j \in \ccalW$, where $j \in \ccalC \subset \ccalI$. Among those communication points, the ones that are specifically available to the robotic team $\ccalT_m$ are collected in a finite set $\ccalC_m\subseteq\ccalC$, where the sets $\ccalC_m$ are not necessarily disjoint. 
When all robots in a team $\ccalT_m$ have arrived at a communication location, we assume that communication happens and the robots leave to accomplish their tasks or communicate with other teams. This way, a dynamic robot communication network is constructed, defined as follows:

\begin{defn}[Communication Network $\ccalG_c(t)$]
The communication network among the robots is defined as a dynamic undirected graph $\ccalG_c(t)=(\mathcal{V}_c,\mathcal{E}_c(t))$, where the set of nodes $\ccalV_c$ is indexed by the robots, i.e., $\ccalV_c=\ccalN$, and $\ccalE_c(t)\subseteq\ccalV_c\times\ccalV_c$ is the set of communication links that emerge among robots in every team $\ccalT_m$, when they all meet at a common communication point $\bbv_j$, for some $j\in\ccalC_m$ simultaneously, i.e., $\ccalE_c(t)=\{(e,i), \forall \; i,e\in\mathcal{T}_m, \; \forall m\in\mathcal{M}~|~\bbx_i(t)=\bbx_e(t)=\bbv_j,~\text{for some}~ j\in\ccalC_m\}$.
\end{defn}

To ensure that information is continuously transmitted across the network of robots, we require that the communication graph $\ccalG_c(t)$ is \textit{connected over time infinitely often}, i.e., that all robots in every team $\ccalT_m$ meet infinitely often at a common communication point $\bbv_j$, $j\in\ccalC_m$, that does not need to be fixed over time. For this, it is necessary to assume that the graph of teams $\ccalG_\ccalT$ is connected. Specifically, if $\ccalG_\ccalT$ is connected, then information can be propagated intermittently across teams through robots that are common to these teams and, in this way, information can reach all robots in the network. Connectivity of $\ccalG_{\ccalT}$ and the fact that robots can be members of only a few teams means that information can be transferred over long distances, possibly to reach a remote user, without requiring that the robots leave their assigned regions of interest defined by their assigned tasks and communication points corresponding to the teams they belong to. Moreover, we assume that the teams $\ccalT_m$  are \textit{a priori} known and can be selected arbitrarily as long as the graph of
teams $\ccalG_\ccalT$ is connected. 

Intermittent connectivity of the communication network $\ccalG_c(t)$ can be captured by the global LTL formula 
\begin{equation}\label{eq:globalLTL}
\phi_{\text{com}}=\wedge_{ m\in\ccalM}\left(\square\Diamond\left(\vee_{j\in\ccalC_m}(\wedge_{ i\in\ccalT_m}\pi_{i}^{\bbv_j})\right)\right),
\end{equation}
specified over the set of atomic propositions $\{\{\pi_{i}^{\bbv_j}\}_{i=1}^N\}_{j\in\ccalC}$.
Composing $\phi_\text{com}$ with the local $\text{LTL}_{-\bigcirc}$ formulas $\phi_i$, yields the following global LTL statement
\begin{equation}\label{eq:taskandcom}
\phi=\left(\wedge_{i\in\ccalN}\phi_i\right)\wedge\phi_{\text{com}},
\end{equation} 
\noindent that captures the local tasks assigned to every robot and intermittent connectivity of the communication network $\ccalG_c$.

Given the $\text{wTS}_{i}$, for all robots $i\in\ccalN$, and the global LTL formula \eqref{eq:taskandcom}, the goal is to synthesize motion plans $\tau_i$,  for all $i\in\ccalN$, whose execution satisfies the global LTL formula \eqref{eq:taskandcom}. Typically, such motion plans are  infinite paths in $\text{wTS}_{i}$ \cite{clarke1999model}, i.e., infinite sequences of states in $\text{wTS}_i$, such that $\tau_{i}(1)=q_{i}^0$, $\tau_{i}(\kappa)\in\ccalQ_{i}$, and $(\tau_{i}(\kappa),\tau_{i}(\kappa+1))\in\rightarrow_{i}$, $\forall \kappa\in\mathbb{N}_+$. In this form, they cannot be manipulated in practice. This issue can be resolved by representing these plans in a prefix-suffix form \cite{vardi1986automata}, i.e., $\tau_{{i}}=\tau_{{i}}^{\text{pre}}\left[\tau_{{i}}^{\text{suf}}\right]^{\omega}$, where the prefix part $\tau_{{i}}^{\text{pre}}$ and suffix part $\tau_{{i}}^{\text{suf}}$ are both finite paths in $\text{wTS}_i$, for all robots $i\in\ccalN$. The prefix $\tau_{{i}}^{\text{pre}}$ is executed once and the suffix $\tau_{{i}}^{\text{suf}}$ is repeated indefinitely. The cost associated with a plan $\tau_{{i}}=\tau_{{i}}^{\text{pre}}\left[\tau_{{i}}^{\text{suf}}\right]^{\omega}$ is defined as
\begin{align}\label{eq:cost2}
%
J_p(\tau_i)=\alpha J(\tau_i^{\text{pre}})+(1-\alpha)J(\tau_i^{\text{suf}}),
\end{align}
where $ J(\tau_i^{\text{pre}})$ and $J(\tau_i^{\text{suf}})$ represent the cost of the prefix and the suffix part, respectively, and $\alpha\in[0,1]$ is a user-specified parameter. The cost $J(\tau_i^{\text{suf}})$ of the suffix part is defined as
\begin{equation}\label{eq:cost}
J(\tau_i^{\text{suf}})=\sum_{\kappa=1}^{|\tau_i^{\text{suf}}|}w_i(\tau_i^{\text{suf}}(\kappa),\tau_i^{\text{suf}}(\kappa+1)),
\end{equation}
where $|\tau_i^{\text{suf}}|$ stands for the number of states in the finite path $\tau_i^{\text{suf}}$, $\tau_i^{\text{suf}}(\kappa)$ denotes the $\kappa$-th state in $\tau_i^{\text{suf}}$, and $w_i$ are the weights defined in Definition \ref{defn:wTS}. The cost $J(\tau_i^{\text{pre}})$ of the prefix part is defined accordingly. In words, $J_p(\tau_i)$ captures the distance that robot $i$ needs to travel during a single execution of the prefix and suffix part weighted by a user-specified parameter $\alpha>0$.


 The problem that is addressed in this paper can be summarized as follows:
\begin{problem}\label{pr:pr1}
Consider any initial configuration of a network of $N$ mobile robots in their respective wTSs, and any partition of the network in $M$ subgroups $\ccalT_m$, $m\in\mathcal{M}$ so that the associated graph $\ccalG_{\ccalT}$ is connected.  Determine discrete motion plans $\tau_i$, i.e., sequences of states $q_i^{\bbv_j}\in\ccalQ_i$, in prefix-suffix structure, for all robots such that the LTL specification $\phi$ defined in \eqref{eq:taskandcom} is satisfied, i.e., (i) the local $\text{LTL}_{-\bigcirc}$ tasks $\phi_i$ are satisfied, for all $i\in\ccalN$, (ii) intermittent communication among robots captured by $\phi_{\text{com}}$ is ensured infinitely often, and (iii) the distance metric $\sum_{i\in\ccalN}J_p(\tau_i)$ is minimized.
\end{problem}

%
%

\begin{figure}[t]
    \centering
    \label{mobility}
     \includegraphics[width=0.5\linewidth]{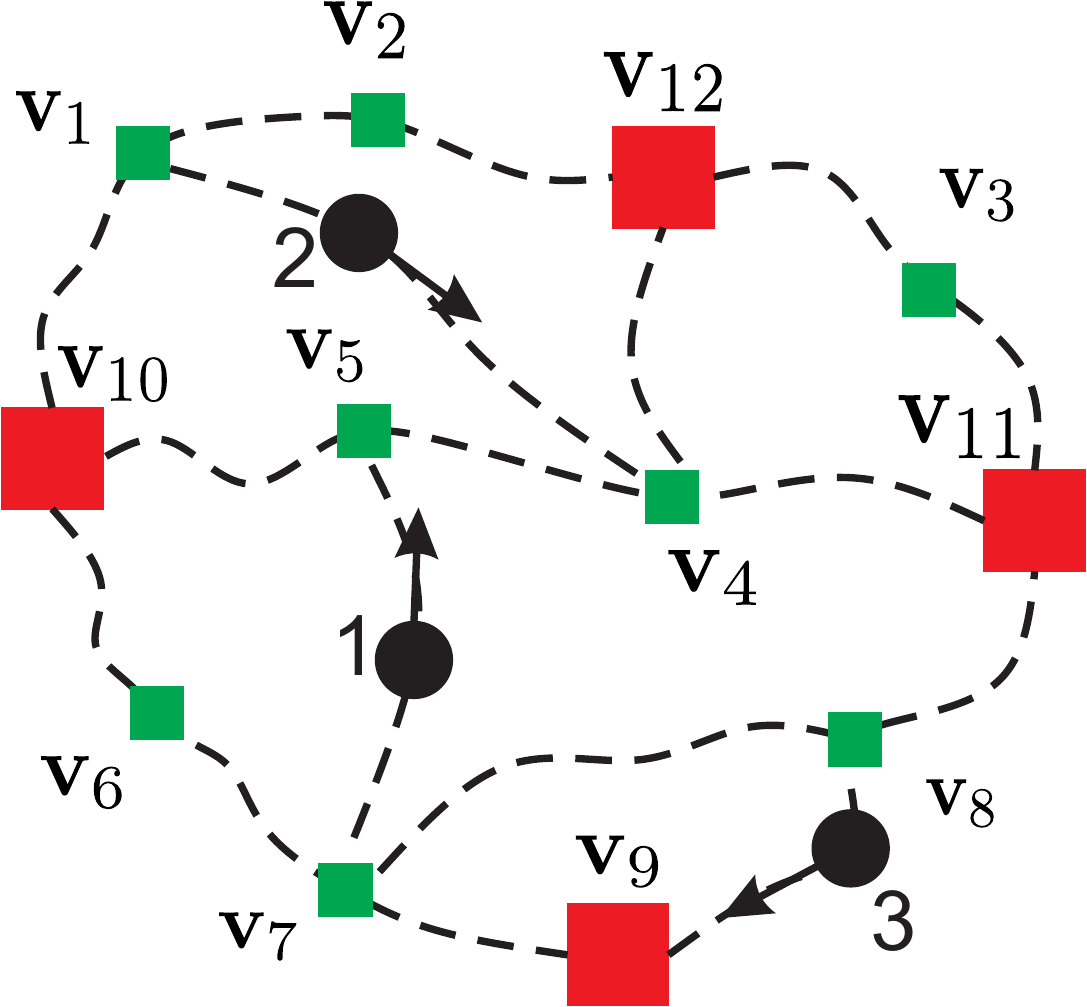}  \caption{A graphical illustration of the problem formulation. A network of $N=3$ robots (black dots) divided into $M=3$ teams is depicted. The robot teams are selected to be: $\ccalT_1=\{1,2\}$, $\ccalT_2=\{2,3\}$, and $\ccalT_3=\{3,1\}$. The set $\ccalI$ consists of locations represented by red and green squares. Red squares comprise set $\ccalC$ and represent communication points. Black dashed lines stand for paths in the workspace $\ccalW$ that connect locations $\bbv_e$ and $\bbv_j$. The sets of communications points for each team are defined as $\ccalC_1=\{\bbv_9,\bbv_{10}\}$, $\ccalC_2=\{\bbv_{10},\bbv_{11}\}$, and $\ccalC_3=\{\bbv_{12}\}$.}\label{fig:problem}
\end{figure} 

To solve Problem \ref{pr:pr1}, we propose a distributed algorithm that consists of two main parts. First, we design offline schedules of communication events for all robots, independently of their assigned tasks, that ensure intermittent communication among robots in every team infinitely often; see Section \ref{sec:commun}. These communication events depend on the structure of the graph $\ccalG_{\ccalT}$ and are not associated with specific locations in space. Then, in Section \ref{sec:integration} we design online discrete plans for the robots that satisfy their local tasks while ensuring that the robots in each team communicate as per the schedules defined in Section \ref{sec:commun}. The location of these communication events are selected so that the distance metric $\sum_{i\in\ccalN}J_p(\tau_i)$ is minimized. 


\section{Intermittent Communication Control}\label{sec:commun}

 In this section we construct infinite sequences of communication events (also called communication schedules) so that intermittent connectivity infinitely often as per \eqref{eq:globalLTL} is guaranteed. Construction of the communication schedules occurs offline i.e., before the robots are deployed in the workspace to satisfy the assigned $\text{LTL}_{-\bigcirc}$ tasks $\phi_i$, and requires that the robots are connected so that they can share information with each other. Then, in Section \ref{sec:integration}, these schedules are integrated online with task planning to synthesize discrete motion plans that ensure that the local tasks are satisfied, the network is intermittently connected as per the designed schedules, and the cost function  defined in Section \ref{sec:prob} is minimized.


Since every robot can be a member of more than one team, the objective in designing the proposed communication schedules is that no teams that share common robots communicate at the same time, as this would require that the shared robots are present at more than one possibly different communication points at the same time. We call such schedules conflict-free. To construct such conflict-free schedules of communication events we define a sequence $S$ of teams that determines the order in which the robots construct their schedules.
\begin{defn}[Sequence $S$]
\label{def:seqS}
The finite sequence $S$ is a sequence of teams defined as $S=\ccalT_n,\ccalT_m,\dots$. The sequence $S$ satisfies two requirements: (i) all teams $\ccalT_m$, $m\in\mathcal{M}$ appear in $S$; and (ii) consecutive teams $\ccalT_n$ and $\ccalT_m$ that appear in $S$ are neighboring nodes in the graph $\ccalG_{\ccalT}$, i.e., $m\in\ccalN_{\ccalT_n}:=\left\{e\in\ccalV_{\ccalT}|(n,e)\in\ccalE_{\ccalT}\right\}$.
\end{defn}

In what follows, we assume that the sequence $S$ is user-defined and known by all robots. Moreover, we denote by $S(k)$ the $k$-th team in $S$, $\forall k\in\{1,\dots,|S|\}$ where $|S|$ stands for the length of $S$. Using the sequence $S$ we construct communication schedules $\texttt{sched}_i$ for all robots $i$ that determine the order in which those robots participate in communication events for teams $\ccalT_m$, $\forall m\in\ccalM_i$ and are defined as follows: 
\begin{defn}[Schedule of Communication Events]\label{defn:sched}
The schedule $\texttt{sched}_i$ of communication events of robot $i$ is defined as an infinite repetition of the finite sequence 
\begin{align}\label{eq:si} 
 s_i=&X,\dots,X,\ccalM_i(1),X,\dots,X,\ccalM_i(2),X,\dots,X,\nonumber \\ &\ccalM_i(\left|\ccalM_i\right|),X,\dots, X,
\end{align} 
i.e., $\texttt{sched}_i=s_i,s_i,\dots=s_i^{\omega}$, where $\omega$ stands for the infinite repetition of $s_i$.
\end{defn}

In \eqref{eq:si}, $\ccalM_i(e)$, $e\in\{1,\dots,|\ccalM_i|\}$ stands for the $e$-th entry of $\ccalM_i$ and represents a communication event for team with index $\ccalM_i(e)$, and the discrete states $X$ indicate that there is no communication event for robot $i$. The length of the sequence $s_i$ is $\ell=\max\left\{d_{\ccalT_m}\right\}_{m=1}^M+1$ for all $i\in\ccalN$, where $d_{\ccalT_m}$ is the degree of node $m\in\ccalV_{\ccalT}$. It is shown in Proposition \ref{prop:ell} that this length $\ell$ is sufficient for the construction of conflict-free communication schedules as per the algorithm described bellow. The schedule $\texttt{sched}_i$ defines the order in which robot $i$ participates in the communication events for the teams $m\in\ccalM_i$, for all robots $i\in\ccalN$. Specifically, at a discrete time step $z\in\mathbb{N}_+$, robot $i$ either communicates with all robots that belong to team $\ccalT_m$, for $m\in\ccalM_i$ if $\texttt{sched}_i(z)=m$, or does not need to participate in any communication event if $\texttt{sched}_i(z)=X$.


In what follows we present a distributed process that relies on two rules that the robots execute in order to construct the schedules $\texttt{sched}_i$. These schedules are constructed sequentially across the teams $\ccalT_m$, $m\in\ccalM$, in an order that is determined by the sequence $S$. In other words, robots in team $S(k)$ will construct their respective schedules, only if all robots in team $S(k-1)$ have already designed their schedules. Assume that according to the sequence $S$, robots in team $S(k)=\ccalT_m$, for some $k\geq 1$ are due to construct their schedules. By construction of the sequence $S$, consecutive teams in $S$ are always neighboring teams, which means that there exists a team $\ccalT_n$ with $n\in \ccalN_{\ccalT_m}$ such that $S(k-1)=\ccalT_n$ and $\ccalT_m\cap \ccalT_n \neq \emptyset$. Consequently, there exist also robots $j \in \ccalT_m \cap \ccalT_n$ that previously constructed their sequences $s_{j}$. These robots $j$ never re-construct their schedules. Instead, one of the robots $j\in S(k)\cap S(k-1)$ is tasked with providing information to the other robots $i\in S(k)=\ccalT_m$ that is necessary to construct their sequences $s_i$.

Specifically, this robot $j\in S(k)\cap S(k-1)$ first notifies the robots in team $S(k)=\ccalT_m$ that it is their turn to construct their communication schedules.\footnote{Note that if the teams in $S$ were not necessarily neighboring teams, then robot $j\in S(k-1)=\ccalT_n$ would have to know who the members of team $ S(k)=\ccalT_m$, $m\notin\ccalM_j$, are in order to notify them that it is their turn to construct the communication schedules. Due to the fact that $S$ connects neighboring teams, every robot $j$ needs to know only the structure of teams $\ccalT_m$, $m\in\ccalM_j$.} Second, robot $j$ transmits to robots $i\in \ccalT_m$ all sequences $s_b$ that were have been constructed so far by the robots in teams $S(1),\dots,S(k-1)$. Among all those sequences $s_b$, robots $i\in\ccalT_m$ use only the sequences of robots $b\in\ccalN_i$ to construct their sequences $s_i$.\footnote{Note that robot $j$ is not aware of the sets $\ccalN_i$ and, therefore, it transmits all the sequences $s_b$ that have already been constructed to robots $i\in\ccalT_m$.}
As a result, all robots $i\in\ccalT_m$ that have not constructed $s_i$ yet, are aware of the indices $n_b^{\ccalT_g}$ that point to entries in $s_b$ associated with some communication events $g$. These indices satisfy $s_b(n_b^{\ccalT_g})=g$, $b\in\ccalN_i$.\footnote{Note that the discrete time instants at which the communication event $g\in\ccalM_i$ will take place are $n_i^{\ccalT_g}+z \ell$, where $z\in\mathbb{N}$, by definition of $\texttt{sched}_i$.} Notice that this means that robots $i\in\ccalT_m$ are also aware of the indices $n_b^{\ccalT_m}$. Using this information, every robot $i\in\ccalT_m$ constructs the sequence $s_{i}$ based on the following two rules that determine the indices $n_i^{\ccalT_g}$ that point to entries in $s_i$ where the communication event $g$ will be placed, i.e., $s_i(n_i^{\ccalT_g})=g$, for all $g\in\ccalM_i$. 

\begin{enumerate}
\item \textit{First rule:} Let $n_{i}^{\ccalT_g}$ denote the index of the entry at which the communication event $g\in\ccalM_i$ will be placed into $s_i$. If there exists a robot $b\in\ccalN_i$ that has selected $n_{b}^{\ccalT_g}$ so that $s_b(n_{b}^{\ccalT_g})=g$, then $n_{i}^{\ccalT_g}=n_{b}^{\ccalT_g}$. In this way,  all robots $b\in\ccalT_g$, including robot $i\in\ccalT_m\cap\ccalT_g$ will select the same index $n_{b}^{\ccalT_g}$ and will participate in the same communication event $g$ at the same discrete time instant; see line \ref{schedule:line4}, Alg. \ref{alg:schedule}.
%


\item \textit{Second rule:} If there do not exist robots $b\in\ccalN_i$ that have selected indices $n_b^{\ccalT_g}$, for communication event $g\in\ccalM_i$, then the communication event $g$ can be placed  at any available entry $n_{i}^{\ccalT_g}$ of $s_i$ that satisfies the following requirement. The entry $n_{i}^{\ccalT_g}$ in all sequences $s_j$ of robots $j\in\ccalN_{i}$ that have already been constructed should not contain communication events $h$ such that $h\in\ccalN_{\ccalT_g}$; see line \ref{schedule:line6}, Alg. \ref{alg:schedule}. 
\end{enumerate}
Note that the index $n_i^{\ccalT_m}$ will always be determined by the first rule, since robot $j\in S(k)\cap S(k-1)$ has already constructed its sequence $s_j$ by placing the event $m$ at an entry of $s_j$ with index $n_j^{\ccalT_m}$.
To highlight the role of the second rule assume that $h\in\ccalN_{\ccalT_g}$. Then, this means that there exists at least one robot $r\in\ccalT_h\cap\ccalT_g$. Notice that without the second rule, at a subsequent iteration of this procedure, robot $r\in\ccalT_h\cap\ccalT_g$ would have to place communication events for teams $\ccalT_g$ and $\ccalT_h$ at a common entry of $s_r$, i.e., $n_{r}^{\ccalT_g}=n_{r}^{\ccalT_h}$, due to the first rule and, therefore, a conflicting communication event in schedule $\texttt{sched}_r$ would occur.  In all the remaining entries of $s_i$, $X$'s are placed; see line \ref{schedule:line7}, Alg. \ref{alg:schedule}. By construction of $s_i$, there are $\ell-|\ccalM_i|$ $X$'s in $s_i$. 

Once all robots $i$ in team $S(k)$ have constructed the sequences $s_i$, a robot $j\in S(k)\cap S(k+1)$ will notify all robots in team $S(k+1)$ that it is their turn to compute their respective schedules. The procedure is repeated sequentially over the teams in $S$ until all robots have computed their respective schedules of meeting events. This process is summarized in Algorithm \ref{alg:schedule} and it is also illustrated in Example \ref{ex:ex1}.

\begin{algorithm}[t]
\caption{Distributed construction of sequence $s_i$, $i\in\ccalT_m$}
\LinesNumbered
\label{alg:schedule}
\KwIn {Already constructed sequences $s_{b}$, $\forall b\in\ccalN_i$.}
\KwOut {Schedule of meeting events: $\texttt{sched}_i=[s_i]^{\omega}$}
Construct an empty finite sequence $s_i$ of length $\ell$. \;
\For {$g\in\ccalM_i$}{\label{schedule:line2}
	\If {there exist constructed sequences $s_b$, $b\in\ccalT_g$}{\label{schedule:line3}
	   $s_i(n_{i}^{\ccalT_g}):=g$, where $n_{i}^{\ccalT_g}:=n_{b}^{\ccalT_g},~\forall b\in\ccalT_g$ \Comment*[r]{First rule}}\label{schedule:line4}
	\Else{\label{schedule:line5}
	 Choose an available $n_{i}^{\ccalT_g}\in\{1,\dots,\ell\}$ such that it holds either 	$s_j(n_{i}^{\ccalT_g}):=X$, or $s_j(n_{i}^{\ccalT_g}):=h$ with $h\notin\ccalN_{\ccalT_g}$, $~\forall j\in\ccalN_{i}$. Then set $s_i(n_{i}^{\ccalT_g}):=g$. \Comment*[r]{Second rule}}}\label{schedule:line6}
    Put $X$ in the remaining entries\;\label{schedule:line7}
\end{algorithm}
 
\begin{ex}[Algorithm \ref{alg:schedule}]
To illustrate Algorithm \ref{alg:schedule}, consider the network of $N=3$ robots shown in Figure \ref{fig:problem}, where the teams of robots are designed as $\ccalT_1=\{1,2\}$, $\ccalT_2=\{2,3\}$, and $\ccalT_3=\{3,1\}$. Let the sequence $S$ be $S=\ccalT_1,\ccalT_2,\ccalT_3$. Hence, initially the robots 1 and 2 in team $\ccalT_1$ coordinate to construct their respective sequences $s_i$. Assume that initially robot $1$ constructs the sequence $s_1$ of length equal to $\ell=\text{max}\left\{d_{\ccalT_m}\right\}_{m=1}^3+1=3$. Robot $1$ belongs to teams $\ccalT_1$ and $\ccalT_2$ and it arbitrarily constructs $s_1$ as follows: $s_1=1,3,X$. Then the sequence $s_1$ is transmitted to robot $2$ that belongs to teams $\ccalT_1$ and $\ccalT_2$. Now robot $2$ is responsible for constructing the sequence $s_2$. To construct $s_2$, according to the first rule, team $\ccalT_1$ is placed at the first entry of $s_2$, i.e., $n_{2}^{\ccalT_1}=n_{1}^{\ccalT_1}=1$. Next, the index $n_{2}^{\ccalT_2}$ is determined by the second rule. Specifically, notice that among the two available entries in $s_2$ for team $\ccalT_3$ the entry $n_{2}^{\ccalT_3}=2$ is invalid, since robot $1\in\ccalT_1$ has already constructed its sequence $s_1$ so that $n_{1}^{\ccalT_3}=2$ and for teams $\ccalT_3$ and $\ccalT_2$ it holds that $3\in\ccalN_{\ccalT_2}$. Therefore, robot $2$ selects $n_{2}^{\ccalT_2}=2$ and constructs the sequence $s_2=1,X,2$. At the next iteration of Algorithm \ref{alg:schedule} the robots 2 and 3 in team $\ccalT_2$ coordinate to construct their sequences $s_i$. Robot $2$ has already constructed the sequence $s_2$ at the previous iteration and it transmits its constructed sequence $s_2$ and the previously constructed sequence $s_1$ to robot $3$. Thus robot $3$ has now access to all already constructed sequences $s_e$, for $e\in\ccalN_3=\{1,2\}$. Robot $3$ constructs $s_3=X,3,2$ using the first rule. Finally, the robots in the third team $\ccalT_3=\{3,1\}$ have already constructed their finite paths at previous iterations.
\label{ex:ex1}
\end{ex}

In the following proposition we show that Algorithm \ref{alg:schedule} can always construct sequences $s_i$ if the length $\ell$ of $s_i$ is selected as $\ell=\text{max}\left\{d_{\ccalT_m}\right\}_{m=1}^M+1$. 

\begin{prop}\label{prop:ell}
Algorithm \ref{alg:schedule} can always construct sequences $s_i$, for all $i\in\ccalN$, if the length $\ell$ of $s_i$ is selected as $\ell=\text{max}\left\{d_{\ccalT_m}\right\}_{m=1}^M+1$.

\end{prop}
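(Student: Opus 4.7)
The plan is to verify that, with $\ell=\max\{d_{\ccalT_m}\}_{m=1}^M+1$, every placement requested by Algorithm~\ref{alg:schedule} can indeed be carried out: the second rule always has at least one admissible entry of $s_i$, and the first rule never collides with an entry already filled in $s_i$. The key structural remark that I would invoke repeatedly is that any two distinct events $g,g'\in\ccalM_i$ correspond to neighboring teams in $\ccalG_\ccalT$, because $i\in\ccalT_g\cap\ccalT_{g'}$ forces $g'\in\ccalN_{\ccalT_g}$.

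I would first establish, by induction on the order in which the algorithm performs its placements across the teams of $S$, the consistency invariant that every already-placed event $h$ occupies a single common entry $n_h^\star\in\{1,\dots,\ell\}$ across every sequence $s_b$ with $b\in\ccalT_h$ that has been built. The first rule preserves this invariant by construction, and the second rule preserves it trivially because it is only invoked the very first time $h$ is placed anywhere. Granted the invariant, the entries forbidden to a second-rule placement of $g$, namely those $n$ for which $s_j(n)=h$ with $j\in\ccalN_i$ and $h\in\ccalN_{\ccalT_g}$, collapse into $\{n_h^\star : h\in\ccalN_{\ccalT_g},\ h\ \text{already placed}\}$, whose cardinality is at most $|\ccalN_{\ccalT_g}|=d_{\ccalT_g}\leq\max\{d_{\ccalT_m}\}_{m=1}^M$. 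Entries already occupied in $s_i$ by earlier events $g'\in\ccalM_i\setminus\{g\}$ are subsumed by this count via the structural remark, so at least $\ell-d_{\ccalT_g}\geq 1$ admissible entries remain and the second rule always succeeds.

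For the first rule, I would argue by contradiction: if the forced entry $n_g^\star$ coincided with an entry already filled by some $g'\in\ccalM_i\setminus\{g\}$ in $s_i$, we would have $n_{g'}^\star=n_g^\star$ with $g'\in\ccalN_{\ccalT_g}$, contradicting the second-rule check made by the robot that placed $g$ globally for the first time. The main obstacle I anticipate is making this last step airtight when the originating placements of $g$ and $g'$ were produced by robots sitting in disjoint neighborhoods of $\ccalG_\ccalT$, so that neither robot sees the other's entry directly; one then invokes the connector $j\in S(k)\cap S(k-1)$ built into the ordering $S$, whose already-constructed sequence propagates the relevant entry information to the team currently being processed and thereby rules out the conflict.
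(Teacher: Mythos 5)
Your handling of the second rule is essentially the paper's own proof made rigorous. The paper argues by contradiction that if event $g$ could not be placed anywhere in $s_i$ then $\ccalT_g$ would need at least $\ell$ neighbors in $\ccalG_{\ccalT}$, contradicting $\ell=\max\set{d_{\ccalT_m}}_{m=1}^M+1$; your consistency invariant (each placed event $h$ occupies one common entry $n_h^\star$ in every constructed $s_b$, $b\in\ccalT_h$) and your observation that entries already occupied in $s_i$ by other events $g'\in\ccalM_i$ are absorbed into the same count because $g'\in\ccalN_{\ccalT_g}$ are precisely the details the paper leaves implicit. That part of your argument is sound and is the whole content of the paper's proof.

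The gap is exactly where you flagged it: the first-rule collision. Your proposed repair --- that the connector robot in $S(k)\cap S(k-1)$ propagates the relevant entry information --- does not close it, because the second rule in Algorithm~\ref{alg:schedule} only inspects sequences $s_j$ with $j\in\ccalN_i$; the connector does forward all constructed sequences, but the algorithm explicitly discards those of non-neighbors when checking admissibility. Concretely, take robots $a,b,c,i$ and teams $\ccalT_1=\set{a,c}$, $\ccalT_2=\set{c,b}$, $\ccalT_3=\set{a,i}$, $\ccalT_4=\set{b,i}$, so that $\ccalG_{\ccalT}$ is a $4$-cycle with every degree $2$ and $\ell=3$, and let $S=\ccalT_1,\ccalT_2,\ccalT_4,\ccalT_3$. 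If $a$ initializes with $s_a=1,X,3$ and $c$ then builds $s_c=1,2,X$ (both legal), robot $b$ places event $4$ by the second rule while checking only $s_c$ (since $a\notin\ccalN_b$), so entry $3$ is admissible and $b$ may set $s_b=X,2,4$; robot $i$, with $\ccalM_i=\set{3,4}$, is then forced by the first rule to put both events $3$ and $4$ at entry $3$. So the step you called the ``main obstacle'' is not merely hard to make airtight: as stated it fails, and one needs either an additional hypothesis or a strengthened second rule that checks \emph{all} previously constructed sequences (under which your invariant-based count still leaves $\ell-d_{\ccalT_g}\geq 1$ free slots, so the rest of your argument goes through). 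For what it is worth, the paper's own one-line proof is entirely silent on first-rule collisions, so you have surfaced a genuine issue rather than missed a step the paper supplies.
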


\begin{proof}
The proof is based on contradiction. Assume that a robot $i$ requires a sequence $s_i$ of length greater than $\ell=\max\left\{d_{\ccalT_e}\right\}_{e=1}^{M}+1$ when Algorithm \ref{alg:schedule} is applied. This means that there is team $\ccalT_m$, $m\in\ccalM_i$, which cannot be placed at any of the first $\ell$ entries of $s_i$. By construction of Algorithm \ref{alg:schedule}, this means that the team $\ccalT_m$ has at least $\ell$ neighbors in graph $\ccalG_{\ccalT}$, i.e.,  $d_{\ccalT_m}\geq\ell$, which can never happen, which completes the proof.
\end{proof}

\begin{rem}[Repeated teams in $S$ and initialization]
Due to the requirement that consecutive teams in $S$ need to be neighbors in $\ccalG_{\ccalT}$, it is possible that a team $\ccalT_m$ may appear more than once in $S$, depending on the structure of the graph $\ccalG_{\ccalT}$. In this case, robots $i\in\ccalT_m$ construct the sequences $s_i$ only the first time that team $\ccalT_m$ appears in $S$. Also, at the first iteration of Algorithm \ref{alg:schedule}, robots of team $S(1)$ have to construct their sequences $s_i$, $i\in S(1)$. In this case, a randomly selected robot $j\in S(1)$ creates arbitrarily its sequence $s_j$ by placing the teams $m\in\ccalM_j$  at the $n_{j}^{\ccalT_m}$-th entry of $s_j$. Then the procedure described in Algorithm \ref{alg:schedule} follows. 
\end{rem}

\begin{rem}[Discrete states $X$]\label{rem:X}
In the schedules $\texttt{sched}_i$, defined in Definition \ref{defn:sched} and constructed using Algorithm \ref{alg:schedule}, the states $X$ indicate that no communication events occur for robot $i$ at the corresponding discrete time instants. These states are used to synchronize the communication events over the discrete time instants $c\in\mathbb{N}_+$, i.e., to ensure that the discrete time instant $z$ at which communication happens for team $\ccalT_m$, $m\in\ccalM$, is the same for all robots $i\in\ccalT_m$; see also Example \ref{ex:ex1}. Nevertheless, as it will be shown in Theorem \ref{thm:com}, in Section \ref{sec:analysis}, it is the order of communication events in $\texttt{sched}_i$ that is critical to ensure intermittent communication, not the time instants that they take place. This is due to a communication policy proposed in \ref{sec:execution}. 
\end{rem}

\section{Integrated Task Planning and Intermittent Communication Control}\label{sec:integration}

In this section, we propose a distributed and online algorithm to synthesize motion plans for all robots $i$ so that the global LTL formula \eqref{eq:globalLTL} is satisfied, i.e., the assigned local $\text{LTL}_{-\bigcirc}$  tasks are accomplished, and the network is intermittently connected. These plans are generated iteratively and have the following prefix-suffix structure
\begin{equation}\label{eq:taui}
\tau_i^{n_i}=\texttt{path}_i^{0}|\texttt{path}_i^{1}|\dots|[\texttt{path}_i^{n_i}]^{\omega},
\end{equation}
where $n_i\in\mathbb{N}$ is the iteration index associated with robot $i$,  $\texttt{path}_i^{n_i}$ is a finite sequence of states in $\text{wTS}_i$, $|$ denotes the concatenation of discrete paths $\texttt{path}_i^{n_i}$, and $\omega$ denotes the infinite repetition. 
Each path $\texttt{path}_i^{n_i}$ is constructed so that (i) execution of $\texttt{path}_i^{n_i}$, for a every given $n_i$ ensures that robot $i$ will communicate exactly once with all teams $\ccalT_m$, $m\in\ccalM_i$ in an order that respects the schedules $\texttt{sched}_i$ designed in Section \ref{sec:commun}, and (ii) execution of $\tau_i^{n_i}$ guarantees that the assigned local $\text{LTL}_{-\bigcirc}$  tasks $\phi_i$ are satisfied. 
In Section \ref{sec:init}, we discuss the distributed construction of the initial paths $\texttt{path}_i^{0}$ given the communication schedules $\texttt{sched}_i$. In Section \ref{sec:pathni}, we present the  distributed construction of all subsequent paths $\texttt{path}_i^{n_i}$ that occurs online as the robots navigate the worskpace. 

\subsection{Construction of Initial Paths}\label{sec:init} 
Once robot $i$ constructs its schedule $\texttt{sched}_i$, it locally designs the initial path $\texttt{path}_i^{0}$. To do this, feasible initial communication points for all teams $\ccalT_m$, $m\in\ccalM$, need to be selected first, that do not violate the local tasks $\phi_i$. These can be found by exhaustively searching through the set of possible combinations of communication points for all teams.
%
%
Specifically, let $\texttt{comb}_b$ denote any candidate combination of communication points that can be assigned to all teams $\ccalT_m$, $m\in\ccalM$, where $b\in\set{1,\dots,\prod_{m\in\ccalM}|\ccalC_m|}$.
Given the communication points $\bbv_j$, $j\in\ccalC_m$, in the candidate combination $\texttt{comb}_b$, every robot constructs the NBA $B_{i}$ that corresponds to the following LTL formula
\begin{equation}\label{eq:psi}
\psi_i=\underbrace{\phi_i}_{\text{task}}\wedge \underbrace{\phi_{\text{com},i}}_{\text{communication}},
\end{equation}
where
\begin{equation}\label{eq:phicomi}
 \phi_{\text{com},i}=\wedge_{m\in\ccalM_i}(\square\Diamond\bbv_{j\in\ccalC_m}),
\end{equation}
In words, the LTL formula  $\phi_{\text{com},i}$ requires robot $i$ to visit infinitely often the candidate communication points $\bbv_j$, $j\in\ccalC_m$, of all teams $\ccalT_m$, $m\in\ccalM_i$, that are specified in $\texttt{comb}_b$. Then, given the $\text{wTS}_i$ and the NBA $B_{i}$, every robot can synthesize a motion plan $\tilde{\tau}_i^0\models\psi_{i}$, if it exists, which will be used to construct the initial path $\texttt{path}_i^{0}$. This process is repeated for all $b\in\set{1,\dots,\prod_{m\in\ccalM}|\ccalC_m|}$ until feasible plans $\tilde{\tau}_i^0\models\psi_{i}$ can be constructed for all robots~$i\in\ccalN$. Later, in Lemma \ref{lem:init}, we show that the robots can search locally over the combinations $\texttt{comb}_b$ reducing in this way the computational cost of finding a feasible plan $\tilde{\tau}_i^0$.


Specifically, given candidate initial communication points for all teams $\ccalT_m$, $m\in\ccalM_i$, the motion plan $\tilde{\tau}_i^0$ can be constructed by checking the non-emptiness of the language of the \textit{Product B$\ddot{\text{u}}$chi Automaton} (PBA) $P_{i}=\text{wTS}_{i}\otimes B_{i}$, defined as~follows~\cite{baier2008principles}:

\begin{defn}[Product B$\ddot{\text{u}}$chi Automaton]\label{def:pba}
Given the weighted transition system $\text{wTS}_{i}=\left(\ccalQ_{i}, q_{i}^0,\rightarrow_{i},w_{P_i},\mathcal{AP},L_{i}\right)$ and the NBA $B_{i}=\left(\ccalQ_{B_{i}}, \ccalQ_{B_{i}}^0,2^{\mathcal{AP}},\rightarrow_{B_{i}},\mathcal{F}_{B_{i}}\right)$, the \textit{Product B$\ddot{\text{u}}$chi Automaton} $P_{i}=\text{wTS}_{i}\otimes B_{i}$ is a tuple $\left(\ccalQ_{P_{i}}, \ccalQ_{{P_{i}}}^0,\longrightarrow_{P_{i}},w_{P_i},\ccalF_{P_{i}}\right)$ where (a) $\ccalQ_{P_{i}}=\ccalQ_{{i}}\times\ccalQ_{B_{i}}$ is the set of states; (b) $\ccalQ_{P_{i}}^0=q_{i}^0\times\ccalQ_{B_{i}}^0$ is a set of initial states; (c) $\longrightarrow_{P_{i}}\subseteq\ccalQ_{P_{i}}\times\ccalQ_{P_{i}}$ is the transition relation. Transition $(q_P,q_P')\in\rightarrow_{P_i}$, where $q_P=(q_i^{\bbv_j},q_B)\in\ccalQ_{P_i}$ and $q_P'=(q_i^{\bbv_e},q_B')\in\ccalQ_{P_i}$, exists if $(q_i^{\bbv_j},q_i^{\bbv_e})\in\rightarrow_i$ and $(q_B,L_i(q_i^{\bbv_j}),q_B')\in\rightarrow_B$;
(d) $w_{P_i}:\ccalQ_{P_{i}}\times\ccalQ_{P_{i}}\rightarrow\mathbb{R}_+$ is the weight function, defined as: $w_{P_i}((q_i^{\bbv_j},q_B),(q_i^{\bbv_e},q_B'))=w_i(q_i^{\bbv_j},q_i^{\bbv_e})$; and (e) $\ccalF_{P_{i}}=\ccalQ_{{i}}\times\ccalF_{B_{i}}$ is a set of accepting/final states. 
\end{defn}

More precisely, a motion plan $\tilde{\tau}_i^0$ that satisfies $\psi_i$ can be derived using graph search techniques on $P_i$, which can be viewed as a weighted graph $\ccalG_{P_{i}}=\left\{\ccalV_{P_{i}},\ccalE_{P_{i}},w_{P_i}\right\}$, where $\ccalV_{P_{i}}=\ccalQ_{P_{i}}$, the set of edges $\ccalE_{P_{i}}$ is determined by the transition relation $\longrightarrow_{P_{i}}$, and the weight function $w_{P_i}$ is defined in Definition \ref{def:pba}; see e.g., \cite{ulusoy2013optimality,ulusoy2014optimal,kantaros2017Csampling,kantaros2017Dsampling,smith2011optimal,guo2015multi}. Then, a path from an initial state to an accepting state in $\ccalG_{P_{i}}$ (the prefix path) followed by a cycle around this accepting state (the suffix path), which is repeated indefinitely, results in an accepting run of the PBA that has the following prefix-suffix structure 
\begin{align}\label{eq:rho0}
 \rho_{P_{i}}^0=&\rho_{P_{i}}^{\text{pre},0}\left[\rho_{P_{i}}^{\text{suf},0}\right]^{\omega}=\underbrace{(q_{\text{wTS}_{i}}^0,q_{B_{i}}^0)}_{\in\ccalQ_{P_{i}}^0}(q_{\text{wTS}_{i}}^1,q_{B_{i}}^1)\dots\underbrace{(q_{\text{wTS}_{i}}^F,q_{B_{i}}^F)}_{=q_{P_i}^F\in\ccalF_{P_{i}}}\nonumber\\&\left[(q_{\text{wTS}_{i}}^F, q_{B_{i}}^F)\dots(q_{\text{wTS}_{i}}^L,q_{B_{i}}^L)\right]^\omega,
\end{align}
where with slight abuse of notation, $q_{\text{wTS}_{i}}^\beta$ and $q_{B_{i}}^\beta$ denote a state of $\text{wTS}_i$ and $B_i$, respectively, for all $\beta\in\set{0,\dots,F,\dots,L}$. The projection of $\rho_{P_{i}}^0$ onto the state-space of $\text{wTS}_{i}$, denoted by $\Pi|_{\text{wTS}_{i}}\rho_{P_{i}}^0$, results in the desired prefix-suffix motion plan 
\begin{align} \tilde{\tau}_i^0&=\Pi|_{\text{wTS}_{i}}\rho_{P_{i}}^0=\tilde{\tau}_i^{\text{pre},0}\left[\tilde{\tau}_i^{\text{suf},0}\right]^{\omega}\nonumber\\&=\left[q_{\text{wTS}_{i}}^0\dots q_{\text{wTS}_{i}}^F\right]\left[q_{\text{wTS}_i}^F\dots q_{\text{wTS}_{i}}^L\right]^\omega,
\end{align} 
that satisfies $\psi_{i}$ provided feasible initial communication points have been selected \cite{vardi1986automata}. To reduce the computational cost of synthesizing $\tilde{\tau}_i^0$, we only require a feasible plan $\tilde{\tau}_i^0 $ and not the optimal one that minimizes \eqref{eq:cost2}, especially since subsequent paths $\texttt{path}_i^{n_i}$ will get optimized online. 


%

Given the motion plans $\tilde{\tau}_i^0=\tilde{\tau}_i^{\text{pre},0}[\tilde{\tau}_i^{\text{suf},0}]^{\omega}$, we design the discrete paths $\texttt{path}_i^0$ as follows. 
First, we initialize $\texttt{path}_i^0$ as $\texttt{path}_i^0=\tilde{\tau}_i^{\text{pre},0}|\tilde{\tau}_i^{\text{suf},0}$. Recall that all paths $\texttt{path}_i^0$ are designed so that if executed, then robot $i$ will communicate once with all teams $\ccalT_m$, $m\in\ccalM_i$, in an order that respects the schedules $\texttt{sched}_i$. Therefore, the state $q_i^{\bbv_j}$ corresponding to the candidate communication point $\bbv_j$, $j\in\ccalC_m$, appears at least once in the suffix part of $\tilde{\tau}_i^0$, by definition of $\psi_i$, for all $m\in\ccalM_i$. 
However, these communication states may not appear in $\texttt{path}_i^0=\tilde{\tau}_i^{\text{pre},0}|\tilde{\tau}_i^{\text{suf},0}$ in an order that respects the schedules $\texttt{sched}_i$, as this is not required by the LTL formula $\psi_i$ in \eqref{eq:psi}. Therefore, we append at the end of $\texttt{path}_i^0$ the suffix part $\tilde{\tau}_{{i}}^{\text{suf},0}$ enough times so that $\texttt{path}_i^0=\tilde{\tau}_i^{\text{\text{pre}},0} | \tilde{\tau}_i^{\text{suf},0} | \dots | \tilde{\tau}_i^{\text{suf},0}$ respects
 the schedule $\texttt{sched}_i$, i.e., there exists a sequence of indices $\kappa_i^m$ that point to entries in 
$\texttt{path}_i^0$ corresponding to states $q_i^{\bbv_j}$ with $\bbv_j$, $j\in\ccalC_m$, that satisfy $\kappa_i^m<\kappa_i^h$, if the communication event for team $\ccalT_m$ appears before the communication event for team $\ccalT_h$ in  
$\texttt{sched}_i$, for all teams $\ccalT_m,~\ccalT_h$, $m,h\in\ccalM_i$; see also Example \ref{ex:path0}. Note that since the state $q_i^{\bbv_j}$, $j\in\ccalC_m$, appears at least once in the suffix part of $\tilde{\tau}_i^0$, for all $m\in\ccalM_i$, the suffix part $\tilde{\tau}_i^{\text{suf},0}$ will be appended to $\texttt{path}_i^0$ at most $|\ccalM_i|-1$ times. With slight abuse of notation, the initial path $\tau_i^{0}$ in \eqref{eq:taui} is defined using only $\texttt{path}_i^0$ as follows:
\begin{equation}\label{eq:taui0}
\tau_i^{0} =\tilde{\tau}_i^{\text{pre},0}[  \tilde{\tau}_i^{\text{suf},0}|\dots| \tilde{\tau}_i^{\text{suf},0}]^{\omega}
\end{equation}

%
In what follows, we show that to find a feasible initial combination of communication points $\texttt{comb}_b$ that is needed to determine initial plans $\tilde{\tau}_i^0$, the robots can search locally in the set of $\prod_{m\in\ccalM}|\ccalC_m|$ possible combinations of communication points by solving at most $\prod_{m\in\ccalM_i}|\ccalC_m|$ control synthesis problems each, instead of $\prod_{m\in\ccalM}|\ccalC_m|$.
To see this, observe that, for any robot $i\in \ccalN$, there exist multiple combinations $\texttt{comb}_b$ that share the same communication points for all teams $\ccalT_m$, $m\in \ccalM_i$, and only differ in the communication points for teams $\ccalT_m$, $m\in \ccalM\setminus\ccalM_i$. All these combinations, correspond to the same formula $\psi_i$, which means that that robot $i$ needs to solve a single control synthesis problem to determine if they are feasible. Motivated by this observation, in the following lemma, we show that if every robot $i\in\ccalN$ solves locally at most $\prod_{m\in\ccalM_i}|\ccalC_m|$ control synthesis problems, then all combinations $\texttt{comb}_b$ will be exhaustively explored. By combining the feasible local combinations of communication points $\texttt{comb}_{b_i}^i$ that can be assigned to teams $\ccalT_m$, $m\in\ccalM_i$, where $b_i\in\{1,..., \prod_{m\in \ccalM_i}|\ccalC_m|\}$, that are identified by all robots $i$, it it easy to obtain feasible global combinations $\texttt{comb}_b$. Note that, in general, it holds that $\prod_{m\in\ccalM_i}|\ccalC_m|\leq \prod_{m\in\ccalM}|\ccalC_m|$, where the equality holds if $\ccalM_i=\ccalM$ or if $|\ccalC_m|=1$, for all~$m\in\ccalM\setminus\ccalM_i$. Moreover, $\prod_{m\in\ccalM_i}|\ccalC_m|$ is smaller for sparse graphs $\ccalG_{\ccalT}$, given a fixed number of teams and fixed~sets~$\ccalC_m$.

\begin{lem}[Complexity of initialization]\label{lem:init}
Let $\texttt{comb}_{b_i}^i$ with $b_i\in\set{1,\dots,\prod_{m\in\ccalM_i}|\ccalC_m|}$ denote a combination of communication points that can be assigned to teams~$\ccalT_m$,~$m\in\ccalM_i$. Moreover, assume that every robot $i\in\ccalN$ solves $\prod_{m\in\ccalM_i}|\ccalC_m|$ control synthesis problems using the LTL formula \eqref{eq:psi}, one for every combination $\texttt{comb}_{b_i}^i$. Then, the robots can collectively detect any feasible combination of communication points $\texttt{comb}_{b}$, $b\in\set{1,\dots,\prod_{m\in\ccalM}|\ccalC_m|}$, if it exists, that can be assigned to all teams~$\ccalT_m$,~$m\in\ccalM$.
\end{lem}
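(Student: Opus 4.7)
The plan is to exploit the fact that the LTL formula $\psi_i$ in \eqref{eq:psi} refers only to communication points of teams $\ccalT_m$ with $m\in\ccalM_i$: the task formula $\phi_i$ is local to robot $i$'s wTS, and $\phi_{\text{com},i}=\wedge_{m\in\ccalM_i}(\square\Diamond\bbv_{j\in\ccalC_m})$ is indexed only by $\ccalM_i$. Consequently, whether a plan $\tilde\tau_i^0\models\psi_i$ exists under a candidate global combination $\texttt{comb}_b$ depends only on the restriction of $\texttt{comb}_b$ to the teams in $\ccalM_i$, which is by definition one of the local combinations $\texttt{comb}_{b_i}^i$.

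First, I would introduce a projection map $\pi_i$ that sends every global combination $\texttt{comb}_b$ to the local combination $\texttt{comb}_{b_i}^i=\pi_i(\texttt{comb}_b)$ obtained by retaining only the communication point assignments for teams $\ccalT_m$, $m\in\ccalM_i$. A one-line syntactic check then shows that $\psi_i$ is satisfiable under $\texttt{comb}_b$ if and only if it is satisfiable under $\pi_i(\texttt{comb}_b)$, so the PBA $P_i$ from Definition~\ref{def:pba} has nonempty language in one case iff it has nonempty language in the other. Because each robot $i$ by assumption solves one control synthesis problem for every one of the $\prod_{m\in\ccalM_i}|\ccalC_m|$ local combinations, $\pi_i$ is surjective onto robot $i$'s enumeration, and for any globally feasible $\texttt{comb}_b$ the projection $\pi_i(\texttt{comb}_b)$ is flagged as feasible by robot $i$.

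To detect a feasible global combination from the local lists, I would have the robots search for a tuple $(\texttt{comb}_{b_1}^1,\dots,\texttt{comb}_{b_N}^N)$ of locally feasible combinations that is \emph{consistent}, meaning that for every team $\ccalT_m$ with $m\in\ccalM_i\cap\ccalM_j$ the combinations $\texttt{comb}_{b_i}^i$ and $\texttt{comb}_{b_j}^j$ assign the same point in $\ccalC_m$. Any such consistent tuple glues into a well-defined global combination $\texttt{comb}_b$ (every team $\ccalT_m$ lies in at least one $\ccalM_i$ and the consistency condition forces a unique choice), and this $\texttt{comb}_b$ is feasible since each $\psi_i$ is satisfied under $\pi_i(\texttt{comb}_b)=\texttt{comb}_{b_i}^i$ by the equivalence above. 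Conversely, every feasible global $\texttt{comb}_b$ projects to a consistent tuple of locally feasible combinations, so no feasible global combination is missed by the collective search.

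The step I expect to require the most care is the consistency bookkeeping across teams shared by several robots, in particular arguing cleanly that the $N$ local enumerations together cover every possible $\texttt{comb}_b$ through the projections $\pi_i$ and that gluing a consistent tuple really yields a single unambiguous assignment to all teams in $\ccalM$. Once that projection/gluing correspondence is set up, the lemma follows directly from the decoupling of $\psi$ into the local $\psi_i$, and the reduction from $\prod_{m\in\ccalM}|\ccalC_m|$ global synthesis problems to the stated $\prod_{m\in\ccalM_i}|\ccalC_m|$ local ones per robot is immediate.
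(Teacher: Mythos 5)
Your proposal is correct and rests on the same key observation as the paper's proof: satisfiability of $\psi_i$ depends only on the restriction of a global combination to the teams in $\ccalM_i$ (the paper's $\Pi|_{\ccalM_i}\texttt{comb}_b$, your $\pi_i$), so exhaustive local enumeration cannot miss the projection of any feasible global combination. You phrase it as a direct projection/gluing correspondence where the paper argues by contradiction and leaves the gluing of locally feasible combinations into a global one to the discussion preceding the lemma, but the substance is the same.
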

\begin{proof}
In what follows, we show by contradiction that under this local construction of $\texttt{comb}_{b}$, the robots can detect all feasible combinations $\texttt{comb}_{b}$. 
Assume that there exists a feasible combination $\texttt{comb}_{b}$, that cannot be detected if all robots solve their respective $\prod_{m\in\ccalM_i}|\ccalC_m|$ control synthesis problems. Also, let $\Pi|_{\ccalM_i}\texttt{comb}_{b}$ denote the combination of communication points in $\texttt{comb}_{b}$ that correspond to all teams $\ccalT_m$, $m\in\ccalM_i$.
Since $\texttt{comb}_{b}$ cannot be detected by the robots, this means that there exists at least one robot $i$ that either could not find a feasible solution to the control synthesis problem that corresponds to the combination $\Pi|_{\ccalM_i}\texttt{comb}_{b}$ or did not consider the combination $\Pi|_{\ccalM_i}\texttt{comb}_{b}$. The first case contradicts the assumption that $\texttt{comb}_{b}$ is a feasible combination of communication points that can be assigned to all teams $\ccalT_m$, $m\in\ccalM$, while the second case contradicts the assumption that every robot $i\in\ccalN$ searches over all combinations $\texttt{comb}_{b_i}^i$, completing the proof.
\end{proof}

\begin{ex}[Construction of $\texttt{path}_i^0$]
Consider a robot $i$ with $\ccalM_i=\set{2,3,4,5}$ and communication schedule $\texttt{sched}_i=[2,3,X,4,5]^\omega$. Consider also the motion plan $\tilde{\tau}_i^0=\tilde{\tau}_i^{\text{pre},0}[\tilde{\tau}_i^{\text{suf},0}]^{\omega}=q_i^{\bbv_1}q_i^{\bbv_6}q_i^{\bbv_4}q_i^{\bbv_5}q_i^{\bbv_2}q_i^{\bbv_3}[q_i^{\bbv_3}q_i^{\bbv_5}q_i^{\bbv_4}q_i^{\bbv_6}q_i^{\bbv_2}]^\omega$, where $\bbv_2,~\bbv_3,~\bbv_4$ are the candidate communication points for teams $\ccalT_2,~\ccalT_3,~\ccalT_4$, respectively. The path $\texttt{path}_i^0$ is initialized as $\texttt{path}_i^0=\tilde{\tau}_i^{\text{pre},0}|\tilde{\tau}_i^{\text{suf},0}$. To ensure the existence of indices $\kappa_i^m$ in $\texttt{path}_i^0$ for all teams $\ccalT_m$, $m\in\ccalM_i$, that respect the schedule $\texttt{sched}_i$, the suffix part needs to be appended to  $\texttt{path}_i^0$ once more, i.e., $\texttt{path}_i^0=q_i^{\bbv_1}q_i^{\bbv_6}q_i^{\bbv_4}q_i^{\bbv_5}q_i^{\bbv_2}q_i^{\bbv_3}[q_i^{\bbv_3}q_i^{\bbv_5}q_i^{\bbv_4}q_i^{\bbv_6}q_i^{\bbv_2}][q_i^{\bbv_3}q_i^{\bbv_5}q_i^{\bbv_4}q_i^{\bbv_6}q_i^{\bbv_2}]$, where the sequence of states in brackets stands for the suffix part $\tau_{{i}}^{\text{suf},0}$. Observe that in $\texttt{path}_i^0$ there exists indices $\kappa_i^2=5$, $\kappa_i^3=6$, $\kappa_i^4=9$ and $\kappa_i^5=13$, so that $\kappa_i^2<\kappa_i^3<\kappa_i^4<\kappa_i^5$ as~dictated~by~$\texttt{sched}_i$.
\label{ex:path0}
\end{ex}

\begin{rem}[Initialization]\label{rem:init}
Note that there are cases where feasible initial communication points can be easily identified by inspection, e.g., if there exists a communication point $\bbv_j$, $j\in\ccalC_m$, that (i) does not appear in the atomic propositions $\pi_i^{\bbv_e}$ that capture the tasks $\phi_i$ assigned to robots $i\in\ccalT_m$, and (ii) is directly connected to all locations $\bbv_e$, $e\in\ccalI$, that robots $i\in\ccalT_m$ should visit to accomplish their tasks, i.e., the atomic propositions $\pi_i^{\bbv_e}$ appear in the tasks $\phi_i$, $i\in\ccalT_m$. Then, $\bbv_j$, $j\in\ccalC_m$, is a feasible communication point for team $\ccalT_m$, since it does not violate the tasks $\phi_i$ for all $i\in\ccalT_m$ and it does not affect the communication points the other teams can select due to (i). Also, due to (ii) robots $i\in\ccalT_m$ can visit $\bbv_j$ directly from any location $\bbv_e$ without passing through locations that may violate $\phi_i$.  Finally, if the negation operator does not appear in the tasks $\phi_i$ of all robots $i\in\ccalT_m$, then any communication point $\bbv_j$, $j\in\ccalC_m$, assigned to team $\ccalT_m$ is feasible.
\end{rem}



\begin{rem}[Formula $\phi_{\text{com},i}$]\label{rem:phicom}
An alternative selection for $\phi_{\text{com},i}$, defined in \eqref{eq:phicomi}, is 
$\phi_{\text{com},i}'=\square(\Diamond\bbv_{j\in\ccalC_m}\wedge(\Diamond \bbv_{e\in\ccalC_h}\wedge(\Diamond \bbv_{d\in\ccalC_g}\wedge\dots)))$ that requires robot $i$ to visit communication points for all teams $\ccalT_m$, $m\in\ccalM_i$ in an given order that respects the schedules $\texttt{sched}_i$. However, using this formula, there is still no guarantee that all communication points will appear in the suffix part $\tilde{\tau}_i^{\text{suf},0}$ in an order that respects $\texttt{sched}_i$, as this depends on the structure of the LTL formula $\phi_i$ and the $\text{wTS}_i$.
Therefore, we have chosen \eqref{eq:phicomi}, instead of $\phi_{\text{com},i}'$, since \eqref{eq:phicomi} corresponds to a much smaller NBA that makes the proposed algorithm more computationally efficient.
\end{rem}

\subsection{Online Construction of Paths}\label{sec:pathni}

\begin{algorithm}[t]
\caption{Distributed construction of $\texttt{path}_i^{n_i+1}$, $\forall i\in\ccalT_m$, $\forall n_i\in\mathbb{N}$.}
\label{alg:plan}
\KwIn {Set $\ccalC_m$, $\text{wTS}_i$, $n_i$}
\KwOut{Paths: $\texttt{path}_i^{n_i+1}$, $\forall i\in\ccalT_m$}\label{plan:out}
Initialize $c_i=1$\;\label{plan:line1}
\While{$c_i\leq|\ccalM_i|$}{\label{plan:line2}
\If{team $\ccalT_m$ with $m=\ccalM_i(c_i)$ communicates}{\label{plan:line3}
\For{$j\in\ccalC_m$}{\label{plan:line4}
Define $\psi_i$ by \eqref{eq:psi} given (i) $\bbv_j$ for team $\ccalT_m$ and (ii) the selected communication points for other teams $\ccalT_h$, $h\in\ccalM_i\setminus\set{m}$\;\label{plan:line5}
Construct $P_i$ and synthesize a suffix loop $\rho_{P_i}^{\text{suf},j}$ (if it exists) around $q_{P_i}^F$ defined in \eqref{eq:rho0} that minimizes $J(\Pi|_{\text{wTS}_i}\rho_{P_i}^{\text{suf},j})$\;\label{plan:line6}
Compute $\tilde{\tau}_i^{\text{suf},j}=\Pi|_{\text{wTS}_i}\rho_{P_i}^{\text{suf},j}$\;\label{plan:line7}
}}
Define $\texttt{Cost}_{j}=\sum_{r\in\ccalT_m} J(\tilde{\tau}_r^{\text{suf},j})$, for all $j\in\ccalC_m$\;\label{plan:line8}
Compute $j^*=\argmin_{j\in\ccalC_m} \{\texttt{Cost}_{j}\}_{j\in\ccalC_m}$ \;\label{plan:line9}
Initialize paths $\texttt{path}_i^{n_i+1,c_i}=\tilde{\tau}_i^{\text{suf},j^*}$, for all $i\in\ccalT_m$\;\label{plan:line10}
\While {$\texttt{path}_i^{n_i+1,c_i}$ does not respect $\texttt{sched}_i$}{\label{plan:line11}
Update $\texttt{path}_i^{n_i+1,c_i} = \texttt{path}_i^{n_i+1,c_i}  | \tilde{\tau}_i^{\text{suf},j^*}$}\label{plan:line12}
Update $c_i=c_i+1$\;\label{plan:line13}}
Return path $\texttt{path}_i^{n_i+1}=\texttt{path}_i^{n_i+1,|\ccalM_i|}$\;\label{plan:line14}
\end{algorithm}

The construction of the paths $\texttt{path}_i^{n_i}$ occurs online and in an iterative fashion, for all $n_i\in\mathbb{N}_+$, as the robots navigate the workspace. Specifically, $\texttt{path}_i^{n_i+1}$ is constructed and updated every time robot $i$ participates at communication events, as it executes $\texttt{path}_i^{n_i}$. Hereafter, we denote by $\texttt{path}_i^{n_i+1,c_i}$ the path constructed when robot $i$ participates at the $c_i$-th communication event in $\texttt{path}_i^{n_i}$. The iteration index $c_i$ is initialized as $c_i=1$ at the beginning of execution of $\texttt{path}_i^{n_i}$ and is updated as $c_i=c_i+1$ when the path  $\texttt{path}_i^{n_i+1,c_i}$ is constructed. Once robot $i$ has participated in $|\ccalM_i|$ communication events, i.e., $c_i=|\ccalM_i|$, then the next path $\texttt{path}_i^{n_i+1}=\texttt{path}_i^{n_i+1,|\ccalM_i|}$ has been constructed and will be executed after the execution of $\texttt{path}_i^{n_i}$.

In what follows, we present the distributed construction of $\texttt{path}_i^{n_i+1}$, which is also summarized in Algorithm \ref{alg:plan} and illustrated in Figure \ref{fig:plan}.
Also, in Algorithm \ref{alg:plan}, for simplicity of notations, we assume that the indices of the teams in the sets $\ccalM_i$ are ordered as per the respective schedules $\texttt{sched}_i$. This means that if the robots in team $\ccalT_m$, $m=\ccalM_i(c_i)$, communicate then the next communication event that robot $i$ needs to participate during the execution of $\texttt{path}_i^{n_i}$ is $\ccalM_i(c_i+1)$.
Assume that the robots $i\in\ccalT_m$, $m=\ccalM_i(c_i)$, communicate during the execution of the paths $\texttt{path}_i^{n_i}$.
To design the paths $\texttt{path}_i^{n_i+1,c_i}$, the robots $i\in\ccalT_m$ need to select a new communication point $\bbv_j$, $j\in\ccalC_m$ and possibly update the waypoints $\bbv_j$, $j\in\ccalI$ so that the $\text{LTL}_{-\bigcirc}$  tasks $\phi_i$ are satisfied.
The paths $\texttt{path}_i^{n_i+1,c_i}$ are constructed in a similar way as the paths $\texttt{path}_i^{0}$ in Section \ref{sec:init}. The only difference lies in the definition of the LTL formula $\psi_i$ in \eqref{eq:psi}, since now the robots need to autonomously select a new optimal communication point for team $\ccalT_m$ given the already selected communication points for all other teams. Specifically, all robots $i\in\ccalT_m$ perform in parallel the following two steps for all candidate new communication points $\bbv_j$, $j\in\ccalC_m$, for team $\ccalT_m$ [lines \ref{plan:line2}-\ref{plan:line4}, Alg. \ref{alg:plan}]. First, every robot $i\in\ccalT_m$ constructs the LTL formula $\psi_i$, defined in \eqref{eq:psi}, for every candidate new communication point $\bbv_j$, $j\in\ccalC_m$ for team $\ccalT_m$, and given the already selected communication points for all other teams $\ccalT_h$, $h\in\ccalM_i\setminus\set{m}$; see \eqref{eq:psi} [line \ref{plan:line5}, Alg. \ref{alg:plan}]. Second, given the $\text{wTS}_i$ and the NBA $B_i$ that corresponds to $\psi_i$, every robot $i\in\ccalT_m$ constructs the corresponding PBA $P_i=\text{wTS}_i\otimes B_i$ and computes the \textit{optimal} suffix loop, denoted by $\rho_{P_i}^{\text{suf},j}$, around the same PBA final state $q_{P_i}^F=(q_{\text{wTS}_i}^F,q_B^F)$ that was used to construct the initial suffix loop of $\rho_{P_i}^0 $ in \eqref{eq:rho0}. Note that by optimal suffix loop $\rho_{P_i}^{\text{suf},j}$, we refer to the path that minimizes the cost $J(\Pi|_{\text{wTS}_i}\rho_{P_i}^{\text{suf},j})$.
The projection of this optimal suffix loop $\rho_{P_i}^{\text{suf},j}$ on the state-space of $\text{wTS}_i$ is denoted by $\tilde{\tau}_i^{\text{suf},j}$ [lines \ref{plan:line6}-\ref{plan:line7}, Alg. \ref{alg:plan}].

\begin{figure}[t]
    \centering
    \subfigure[Communication within $\ccalT_1$]{
    \label{com1}
    \includegraphics[width=0.44\linewidth]{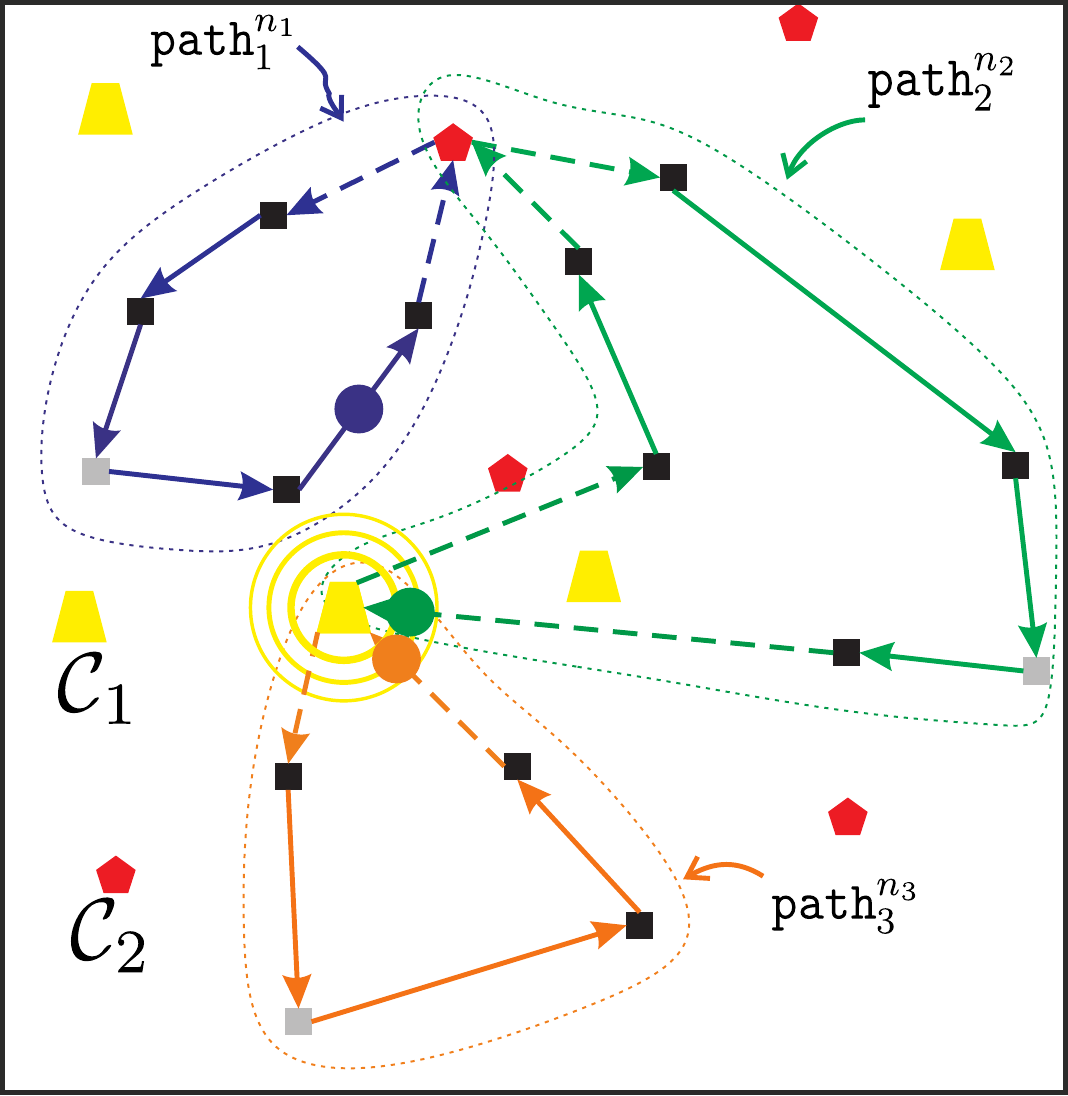}} 
    \subfigure[Selection of new $\bbv_j$, $j\in\ccalC_1$]{
    \label{upd1}
       \includegraphics[width=0.44\linewidth]{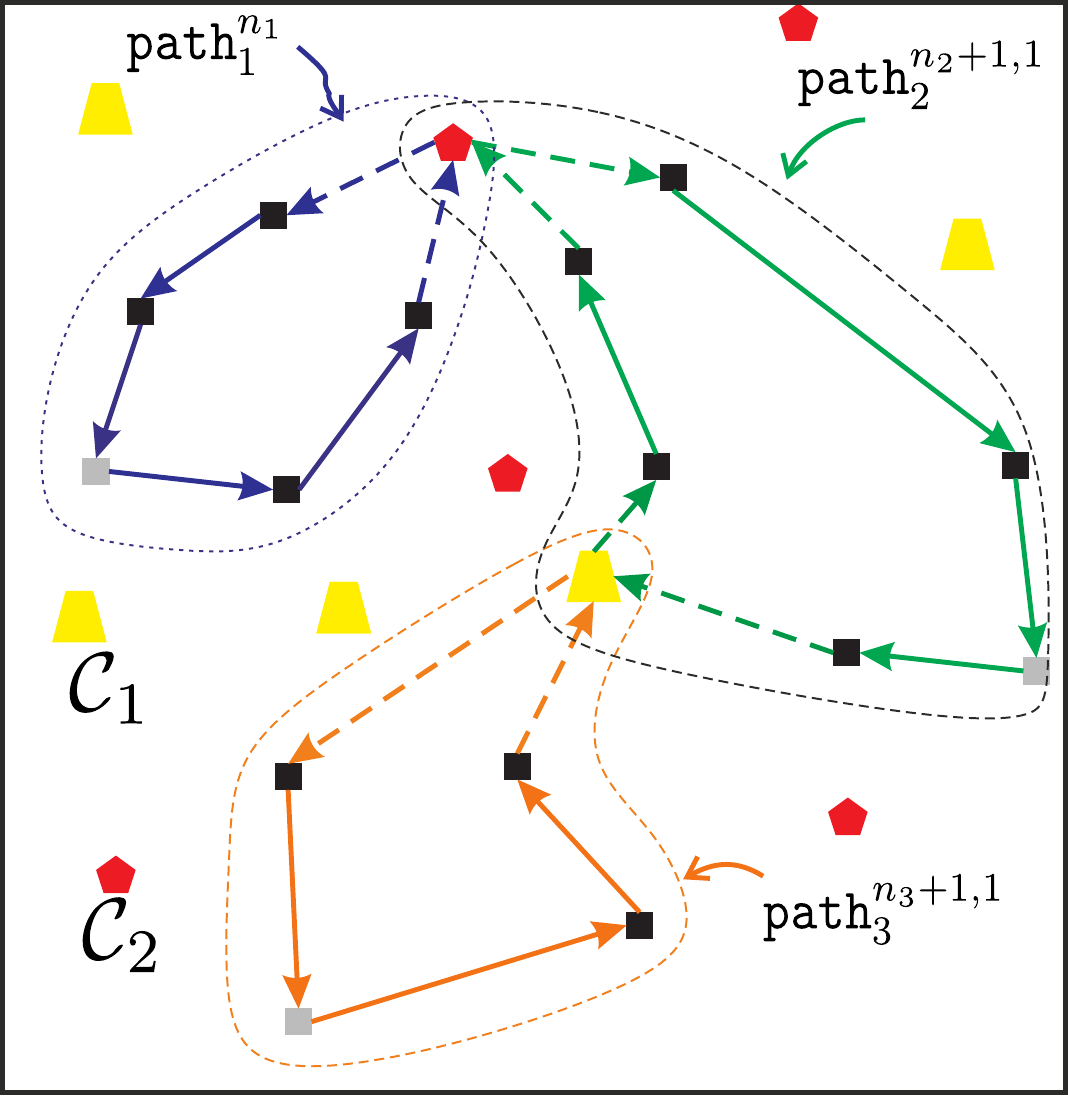}}                                
           \caption{Illustration of Algorithm \ref{alg:plan} for network of $N=3$ robots (colored dots) with schedules $\texttt{sched}_1=[X,2]^\omega$, $\texttt{sched}_2=[1,2]^\omega$, and $\texttt{sched}_3=[1,X]^\omega$. All robots currently execute paths $\texttt{path}_i^{n_i}$ constructed by Algorithm \ref{alg:plan}. Figure \ref{com1} illustrates the communication events within team $\ccalT_1$. The corresponding paths  $\texttt{path}_i^{n_i+1,c_i}$ constructed at this communication event is depicted in Figure \ref{upd1}. Observe in Figure \ref{upd1} that robots $3$ has finalized the construction of the paths $\texttt{path}_3^{n_3+1}$ since $|\ccalM_3|=1$. The gray square denotes the state $\Pi|_{\text{wTS}_i}q_{P_i}^F$. 
        }                            
     \label{fig:plan}
\end{figure}

Once all robots $i\in\ccalT_m$ have constructed the suffix parts $\tilde{\tau}_i^{\text{suf},j}$ for all $j\in\ccalC_m$, they compute the total cost  $\texttt{Cost}_{j}=\sum_{i\in\ccalT_m} J(\tilde{\tau}_i^{\text{suf},j})$ [line \ref{plan:line8}, Alg. \ref{alg:plan}]. This cost captures the  distance that all robots $i\in\ccalT_m$ need to travel during a single execution of the suffix parts $\tilde{\tau}_i^{\text{suf},j}$ if the new communication point for team $\ccalT_m$ is $\bbv_j$, $j\in\ccalC_m$. Among all the suffix parts $\tilde{\tau}_i^{\text{suf},j}$, all robots $i\in\ccalT_m$ select the suffix part $\tilde{\tau}_i^{\text{suf},j^*}$, with  $j^*=\argmin_{j} \{\texttt{Cost}_{j}\}_{j\in\ccalC_m}$ [line \ref{plan:line9}, Alg. \ref{alg:plan}]. 

Given the optimal suffix part $\tilde{\tau}_i^{\text{suf},j^*}$, we construct $\texttt{path}_i^{n_i+1,c_i}$ exactly as the initial paths $\texttt{path}_i^0$. Specifically, first, the paths $\texttt{path}_i^{n_i+1,c_i}$ are initialized as 
 $\texttt{path}_i^{n_i+1,c_i}=\tilde{\tau}_i^{\text{suf},j^*}$ [line \ref{plan:line10}, Alg. \ref{alg:plan}]. Then, we append  $\tilde{\tau}_i^{\text{suf},j^*}$ to $\texttt{path}_i^{n_i+1,c_i}$ as many times as needed to satisfy the schedules $\texttt{sched}_i$ [lines \ref{plan:line11}-\ref{plan:line12}, Alg. \ref{alg:plan}]. Note that since the state $q_i^{\bbv_j}$, $j\in\ccalC_m$ appears at least once in the suffix part of $\tilde{\tau}_i^{\text{suf},j^*}$, for all $m\in\ccalM_i$, the suffix part $\tilde{\tau}_i^{\text{suf},j^*}$ will be appended at most $|\ccalM_i|-1$ times to $\texttt{path}_i^{n_i+1,c_i}$. After the construction of $\texttt{path}_i^{n_i+1,c_i}$, the iteration index $c_i$ is updated as $c_i=c_i+1$ and points to the next path $\texttt{path}_i^{n_i+1,c_i}$ that will be constructed when robot $i$ communicates with the robots in team $\ccalT_h$, $h=\ccalM_i(c_i)$ [line \ref{plan:line12}, Alg. \ref{alg:plan}]. \footnote{Note that the next communication event $\ccalM_i(c_i)$ respects the schedules $\texttt{sched}_i$, by construction of $\ccalM_i$. }
If $c_i=|\ccalM_i|$, then this corresponds to the last communication event that robot $i$ needs to participate during the execution of $\texttt{path}_i^{n_i}$ and, therefore, the construction of $\texttt{path}_i^{n_i+1}$ is finalized, i.e., $\texttt{path}_i^{n_i+1}=\texttt{path}_i^{n_i+1,|\ccalM_i|}$ [line \ref{plan:line14}, Alg. \ref{alg:plan}]. In this case, $c_i$ is re-initialized as $c_i=1$ [line \ref{plan:line1}, Alg. \ref{alg:plan}]. 


\begin{rem}[Implicit synchronization across robots]\label{rem:ni}
While the robots transition from $\texttt{path}_i^{n_i}$ to $\texttt{path}_i^{n_i+1}$ asynchronously, there is an implicit synchronization in the system since, for any iteration $n\in\mathbb{N}_+$, the robots that finish the execution of $\texttt{path}_i^{n}$, first cannot finish the execution of $\texttt{path}_i^{n+1}$ until all other robots $r$ have finished the execution of their paths $\texttt{path}_r^{n}$. The reason is that (i) every robot $i$ has to participate in $|\ccalM_i|$ communication events during the execution of $\texttt{path}_i^{n}$ and (ii) the graph of teams $\ccalG_\ccalT$ is connected by construction of the teams. Therefore, if there exist robots $i$ and $r$ where robot $i$ executes the path $\texttt{path}_i^{n+2}$ and robot $r$ executes the path $\texttt{path}_r^{n}$ it must be the case that robot $i$ has skipped at least one communication event during the execution of $\texttt{path}_i^{n+1}$, which cannot happen by construction of the proposed algorithm. Therefore, there exist time instants $t_n$ so that  $\texttt{path}_i^{n_i}=\texttt{path}_i^{n}$, for every  $n\in\mathbb{N}_+$ and for all $i\in\ccalN$. 
\end{rem}

\begin{rem}[Computational Cost]\label{rem:comp}
Note that to design the path $\texttt{path}_i^{n_i+1,c_i}$, every robot $i$ needs to solve $|\ccalC_m|$ optimal control synthesis problems. Therefore, the computational cost of Algorithm \ref{alg:plan} increases with $|\ccalC_m|$. To reduce the computational burden, Algorithm \ref{alg:plan} can be executed over subsets $\bar{\ccalC}_m\subseteq\ccalC_m$ that can change with iterations $n_i$ but always include the current communication point for team $\ccalT_m$. The latter is required to ensure that paths $\texttt{path}_i^{n_i}$ can be synthesized for all $n_i>0$, if a solution to Problem \ref{pr:pr1} exists; see Proposition \ref{prop:feas}. Moreover, sampling-based approaches can be used to synthesize the suffix parts $\tilde{\tau}_i^{\text{suf},j}$ that do not require the explicit construction of the PBA or the application of computationally expensive graph-search methods \cite{kantaros2017Csampling}. Finally, in Proposition \ref{prop:converg2}, we show that Algorithm \ref{alg:plan} terminates after a finite number of iterations, i.e., a repetitive pattern in the paths $\texttt{path}_i^{n_i}$ is eventually detected, for all $i\in\ccalN$. This means that the computational cost is bounded.
\end{rem}  

\begin{rem}[Fixed final state $q_{P_i}^F$]
Recall that the fixed PBA final state $q_{P_i}^F$, defined in \eqref{eq:rho0}, is used to construct the paths $\texttt{path}_i^{n_i+1}$, for all $n_i\in\mathbb{N}$ and for all $i\in\ccalN$,  This requirement can be relaxed by defining  the paths $\texttt{path}_i^{n_i+1,c_i}$ as $\texttt{path}_i^{n_i+1,c_i}=\Pi_{\text{wTS}_i}\rho_{c_i}$, where $\rho_{c_i}=\rho_{c_i,1}|\rho_{c_i,2}|\dots,|\rho_{c_i,K}$ is a feasible path in the state-space of $P_i$, $\rho_{c_i,k}$ a feasible path in the state-space of $P_i$ that connects two possibly different PBA final states, for all $k\in\set{1,\dots,K}$, and $K<|\ccalM_i|$ is determined so that  execution of $\texttt{path}_i^{n_i+1,c_i}$, for any $c_i$, ensures that robot $i$ will communicate exactly once with all teams $\ccalT_m$, $m\in\ccalM_i$.\footnote{Observe that if all paths $\rho_{c_i,k}$ are defined as the shortest loops around $q_{P_i}^F$, then $\rho_{c_i,k}$ coincides with the $\rho_{P_i}^{\text{suf},j}$, for all $k\in\set{1,\dots,K}$.}
In this case $\texttt{path}_i^{n_i+1}$ is not a periodic path that can be executed infinitely and, therefore, \eqref{eq:taui} cannot be used to model the solution of Algorithm 2, which will now be an infinite aperiodic sequence of states. 
Also, allowing the paths $\texttt{path}_i^{n_i+1}$ to be associated with multiple PBA final states would increase the computational burden of Algorithm \ref{alg:plan}, as it requires the computation of $K$ paths in the PBA $P_i$.
%
\end{rem}




\subsection{Asynchronous Execution}\label{sec:execution}
%
In the majority of global LTL-based motion planning, robots are assumed to execute their assigned motion plans synchronously, i.e., all the robots pick synchronously their next states, see e.g.,\cite{kloetzer2010automatic,kantaros15asilomar}. However, assuming that robot motion is performed in a synchronous way is conservative due to, e.g., uncertainty and exogenous disturbances in the arrival times of the robots at their next locations as per the discrete path $\texttt{path}_i^{n_i}$. To the contrary, here the discrete plans $\texttt{path}_i^{n_i}$ are executed asynchronously across the robots, as per Algorithm \ref{alg:implement}. 

In Algorithm \ref{alg:implement}, $\texttt{path}_i^{n_i}(\kappa_i)$ stands for the $\kappa_i$-th state of the discrete path $\texttt{path}_i^{n_i}$. The different indices $\kappa_i$ for the robots's states in the plans $\texttt{path}_i^{n_i}$ allow us to model the situation where the robots pick asynchronously their next states in $\text{wTS}_i$. Also, in Algorithm \ref{alg:implement}, the set $\ccalK_i^{n_i}$ collects an index $\kappa_i^m$ for all teams $\ccalT_m$, $m\in\ccalM_i$ that (i) satisfy $\texttt{path}_i^{n_i}(\kappa_i^m)=q_i^{\bbv_j}$, where $q_i^{\bbv_j}$ is associated with a communication point $\bbv_j$, $j\in\ccalC_m$, $m\in\ccalM_i$ and (ii) respect the schedules as described in Section \ref{sec:init}. Note that such indices  $\kappa_i^m$ exist by construction of the paths $\texttt{path}_i^{n_i}$.
According to Algorithm \ref{alg:implement}, when the state of robot $i$ is $\texttt{path}_i^{n_i}(\kappa_i)=q_i^{\bbv_j}$, $j\in\ccalI$ i.e., when robot $i$ arrives at a location $\bbv_j$ in the workspace, it checks if $\kappa_i\in\ccalK_i^{n_i}$ [lines \ref{implement:line2}-\ref{implement:line3}, Alg. \ref{alg:implement}]. If so, then robot $i$ performs the following control policy [line \ref{implement:line4}, Alg. \ref{alg:implement}]:
\begin{defn}[Control policy at communication locations]\label{def:policy}
Every robot $i$ that arrives at a communication location $\bbv_j$, $j\in\ccalC_m$, $m\in\ccalM_i$, selected by Algorithm \ref{alg:plan} waits there indefinitely, or until all other robots in the team arrive.
\end{defn}

When all the other robots of team $\ccalT_m$ arrive at the communication location $\bbv_j$, $j\in\ccalC_m$, communication for team $\ccalT_m$ occurs and Algorithm \ref{alg:plan} is executed to synthesize $\texttt{path}_i^{n_i+1,c_i}$ [lines \ref{implement:line5}-\ref{implement:line6}, Alg. \ref{alg:implement}]. After that, robot $i$ moves towards the next state $\texttt{path}_i^{n_i}(\kappa_i+1)$ [line \ref{implement:for}, Alg. \ref{alg:implement}]. In line \ref{implement:for} of Alg. \ref{alg:implement}, $K_i^{n_i}$ denotes the number of waypoints/states in $\texttt{path}_i^{n_i}$. This process is repeated until robot $i$ visits all locations in $\texttt{path}_i^{n_i}$.  Once robot $i$ visit all waypoints of $\texttt{path}_i^{n_i}$, it starts executing the path $\texttt{path}_i^{n_i+1}$ [line \ref{implement:next}, Alg. \ref{alg:implement}]. If $n_i$ is the last iteration of Algorithm \ref{alg:plan}, then $\texttt{path}_i^{n_i}$ is executed indefinitely. 
\begin{algorithm}[t]
\caption{Asynchronous execution of $\texttt{path}_i^{n_i}$}
\label{alg:implement}
\KwIn {Discrete path $\texttt{path}_i^{0}$  and set $\ccalK_i^0$}
$n_i=0$\;
 \For{$\kappa_i=1:K_i^{n_i}$}{\label{implement:for}
	Move towards the state $\texttt{path}_i^{n_i}(\kappa_i)$\;\label{implement:line2}
	\If {$\kappa_i\in\ccalK_i^{n_i}$}{ \label{implement:line3}
		\textit{Wait} at communication point $\bbv_j$, $j\in\ccalC_m$ [Definition \ref{def:policy}]\;  \label{implement:line4}
		\If{all robots in $\ccalT_m$ are present at node $\bbv_j$}{ \label{implement:line5}
			Communication occurs within team~$\ccalT_m$ and execution of Algorithm \ref{alg:plan} \; \label{implement:line6}
			}}}
			Execute the next path $\texttt{path}_i^{n_i+1}$\;\label{implement:next}
\end{algorithm}

\section{Algorithm Analysis}\label{sec:analysis}

In this section, we  present results pertaining to completeness and optimality of the proposed distributed control framework. Specifically, in Section \ref{sec:compl}, we show that if there exists a solution to Problem \ref{pr:pr1}, then the proposed distributed framework will generate prefix-suffix plans $\tau_i^{n_i}$, defined in \eqref{eq:taui}, that can be executed asynchronously according to Algorithm \ref{alg:implement}, and satisfy the assigned LTL tasks and the intermittent connectivity requirement, for every iteration $n_i\geq 0$. Then, in Section \ref{sec:opt} we show that the cost of the suffix part of the plans in \eqref{eq:taui} decreases with every iteration of Algorithm 2 while in Section \ref{sec:complex} we show that these plans converge in a finite number of iterations. Note that since the proposed algorithm is online, synthesis and execution take place concurrently and this is reflected in the subsequent results.
\subsection{Completeness}\label{sec:compl}

First, we show that if there exists a feasible solution to Problem \ref{pr:pr1} then, feasible paths $\texttt{path}_i^{n_i}$ i.e., feasible loops $\rho_{P_i}^{n_i}$ defined over the state-space of the corresponding PBA $P_i$, can be designed, for all $n_i\in\mathbb{N}$. This implies that Algorithm \ref{alg:plan} can generate plans $\tau_i^{n_i}$, for any $n_i\geq 0$ and that robots $i$ in any team $\ccalT_m$, for $m\in\ccalM_i$, can stop executing Algorithm \ref{alg:plan} at any iteration $n_i^m\geq 0$.
%

\begin{prop}[Feasibility]\label{prop:feas}
Assume that there exists a solution to Problem \ref{pr:pr1}. Then, feasible plans $\texttt{path}_i^{n_i}$ can be constructed for all $n_i\geq 0$.
\end{prop}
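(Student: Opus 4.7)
The plan is to proceed by induction on $n_i$, using Lemma \ref{lem:init} to handle the base case and exploiting the fact that the currently selected communication points remain admissible candidates at every subsequent iteration.

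\textbf{Base case.} When $n_i = 0$, the hypothesis that Problem \ref{pr:pr1} has a solution implies the existence of motion plans that satisfy each $\phi_i$ while visiting the relevant team communication locations infinitely often. In particular, there exists at least one global combination $\texttt{comb}_b$ of communication points for which the LTL formulas $\psi_i$ in \eqref{eq:psi} are realizable. By Lemma \ref{lem:init}, if every robot solves locally its $\prod_{m \in \ccalM_i} |\ccalC_m|$ control synthesis problems, then the robots collectively detect at least one such feasible $\texttt{comb}_b$. Standard automata-theoretic synthesis on $P_i$ then yields an accepting run $\rho_{P_i}^0$ of the form \eqref{eq:rho0}, from which $\texttt{path}_i^{0}$ is obtained by the construction in Section \ref{sec:init}; the loop that appends $\tilde{\tau}_i^{\mathrm{suf},0}$ terminates in at most $|\ccalM_i| - 1$ steps since each required communication state appears in $\tilde{\tau}_i^{\mathrm{suf},0}$ at least once.

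\textbf{Inductive step.} Assume $\texttt{path}_i^{n_i}$ has been constructed for every $i \in \ccalN$. When robots in some team $\ccalT_m$ meet and trigger construction of $\texttt{path}_i^{n_i+1,c_i}$, Algorithm \ref{alg:plan} searches over candidate communication points $j \in \ccalC_m$ (or $\bar{\ccalC}_m \subseteq \ccalC_m$ as in Remark \ref{rem:comp}). The key observation is that the current communication point $\bbv_{j^{\mathrm{cur}}}$ used for team $\ccalT_m$ in $\texttt{path}_i^{n_i}$ lies in $\ccalC_m$ and, by the requirement stated in Remark \ref{rem:comp}, also in $\bar{\ccalC}_m$. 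Selecting $j = j^{\mathrm{cur}}$ yields the same formula $\psi_i$, hence the same NBA $B_i$ and PBA $P_i$ with the same final state $q_{P_i}^F$ used in the previous iteration. Consequently the suffix loop around $q_{P_i}^F$ inherited from $\texttt{path}_i^{n_i}$ is still a feasible candidate in line \ref{plan:line6} of Algorithm \ref{alg:plan}, the minimum in line \ref{plan:line9} is attained, and a valid $\tilde{\tau}_i^{\mathrm{suf},j^*}$ is returned. The appending loop in lines \ref{plan:line11}--\ref{plan:line12} then terminates after at most $|\ccalM_i| - 1$ iterations, exactly as in the base case, so $\texttt{path}_i^{n_i+1,c_i}$ is well defined. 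Concatenating these fragments over $c_i \in \{1, \dots, |\ccalM_i|\}$ produces the desired $\texttt{path}_i^{n_i+1}$.

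The main technical obstacle is justifying that choosing $j = j^{\mathrm{cur}}$ reproduces \emph{exactly} the same $\psi_i$ and PBA $P_i$ used at iteration $n_i$, since $\psi_i$ also depends on the currently fixed communication points of the other teams $\ccalT_h$, $h \in \ccalM_i \setminus \{m\}$. I expect this to follow from the fact that Algorithm \ref{alg:plan} updates only the communication point of the team currently meeting, so all other entries of the combination remain unchanged between successive iterations until those teams themselves meet, thereby preserving $q_{P_i}^F$ and the admissibility of the previous suffix loop.
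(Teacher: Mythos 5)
Your proof is correct and follows essentially the same route as the paper: the base case via Lemma \ref{lem:init} and the exhaustive initial search, and the inductive step via the observation that the currently selected communication point remains among the candidates searched by Algorithm \ref{alg:plan}, so the previously feasible suffix loop is always a feasible candidate at line \ref{plan:line6}. The "technical obstacle" you flag at the end is exactly what the paper resolves by running an inner induction over the communication events $c_i\in\{1,\dots,|\ccalM_i|\}$ (via the sets $\ccalF_{c_i}^{n_i+1}$ of feasible suffixes), noting that each step changes the communication point of exactly one team, which is precisely the argument you sketch.
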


\begin{proof}
First observe that if there exists a solution to Problem \ref{pr:pr1}, then feasible initial paths $\tilde{\tau}_i^0$ that satisfy $\psi_i$ in \eqref{eq:psi}, for all robots $i\in\ccalN$, will be detected since at initialization we exhaustively search through all available communication points assigned to the teams $\ccalT_m$, $m\in\ccalM$, as shown in Lemma \ref{lem:init}. Therefore, initial feasible paths $\texttt{path}_i^{0}$ can be constructed. Then, to prove this result, it suffices to show that if there exists a feasible path  $\texttt{path}_i^{n_i}$, then Algorithm \ref{alg:plan} can construct a feasible path  $\texttt{path}_i^{n_i+1}$ for all $n_i\geq 0$.  This means that Algorithm \ref{alg:plan} will not deadlock. Note that Algorithm \ref{alg:plan} does not search over all combinations of communication points assigned to the teams.

In what follows, we show by induction that if there exists a feasible path $\texttt{path}_i^{n_i}$ then, Algorithm \ref{alg:plan} will construct feasible paths  $\texttt{path}_i^{n_i+1,c_i}$ for all $c_i\in\set{1,\dots,|\ccalM_i|}$ and, consequently, it will construct a feasible path $\texttt{path}_i^{n_i+1,|\ccalM_i|}=\texttt{path}_i^{n_i+1}$ for all $n_i\geq 0$. To show this, we first define the sets $\ccalF_{c_i}^{n_i+1}$ that collect the suffix parts $\tilde{\tau}_i^{\text{suf},j}$ constructed by Algorithm \ref{alg:plan} during the construction of $\texttt{path}_i^{n_i+1,c_i}$, for all $c_i\in\set{1,\dots,|\ccalM_i|}$. Now, assume that there exists a feasible path $\texttt{path}_i^{n_i}$. This means that $\ccalF_{0}^{n_i+1}:=\{ \tilde{\tau}_i^{\text{suf},j^*,n_i}\}\neq\emptyset$, where $\tilde{\tau}_i^{\text{suf},j^*,n_i}$ is the suffix part used for the construction of the path $\texttt{path}_i^{n_i}$. 
%
First, we show that $\ccalF_{1}^{n_i+1}\neq\emptyset$, i.e., that Algorithm \ref{alg:plan} will construct a feasible plan $\texttt{path}_i^{n_i+1,1}$. Note that the only difference between the paths $\texttt{path}_i^{n_i+1,1}$ and $\texttt{path}_i^{n_i}=\texttt{path}_i^{n_i,|\ccalM_i|}$, in terms of the selected communication points for teams $\ccalT_m$, $m\in\ccalM_i$, lies in the selected communication point of exactly one team $\ccalT_m$, $m\in\ccalM_i$. Also, recall that Algorithm \ref{alg:plan} searches over all communication points $j\in\ccalC_m$, including the current communication point of $\ccalT_m$ that appears in $\texttt{path}_i^{n_i}$, to select the new communication point for team $\ccalT_m$. Therefore, there exists an optimal control synthesis problem that is solved by Algorithm \ref{alg:plan} during the computation of $\texttt{path}_i^{n_i+1,1}$ such that the
LTL formula $\psi_i$ is defined over the communication points selected in $\texttt{path}_i^{n_i,|\ccalM_i|}$.
Since this optimal control synthesis problem is feasible, by the assumption that $\texttt{path}_i^{n_i}$ is a feasible path, the generated suffix part, which was also used to construct $\texttt{path}_i^{n_i,|\ccalM_i|}$, belongs to $\ccalF_{1}^{n_i+1}$, i.e., $\ccalF_{1}^{n_i+1}\neq\emptyset$. 
The inductive step follows. Assume that $\ccalF_{c_i}^{n_i+1}\neq\emptyset$. Then, following the same logic as before we can show that the feasible suffix path used to construct  $\texttt{path}_i^{n_i+1,c_i}$ belongs to $\ccalF_{c_i+1}^{n_i+1}$, i.e., $\ccalF_{c_i+1}^{n_i+1}\neq\emptyset$. By induction we conclude that if $\ccalF_{0}^{n_i+1}\neq\emptyset$, i.e., if there exists a feasible path  $\texttt{path}_i^{n_i}$, then $\ccalF_{c_i}^{n_i+1}\neq\emptyset$ for all $c_i\in\set{1,\dots,|\ccalM_i|}$ and all $n_i\geq 0$ completing the proof. \end{proof}
 


To prove task satisfaction and intermittent communication, we also need to show that the network is \textit{deadlock-free} when the paths $\texttt{path}_i^{n_i}$ are executed according to Algorithm \ref{alg:implement}. Specifically, we assume that there is a \textit{deadlock}, if there are robots of any team $\ccalT_m$ that are waiting forever at a communication point, selected by Algorithm \ref{alg:plan}, for the arrival of all other robots of team $\ccalT_m$ due to the control policy in Definition \ref{def:policy}. 

\begin{prop}[Deadlock-free]\label{prop:deadlock}
The mobile robot network is deadlock-free when the paths $\tau_i^{n_i}$ in \eqref{eq:taui} are executed according  to Algorithm \ref{alg:implement}.
\end{prop}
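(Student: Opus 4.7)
The plan is to prove deadlock-freedom by induction on a schedule-based total order of communication events, rather than by tracing contention on physical time, which lets us sidestep the asynchrony across robots.

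I first extract the structural property that drives the argument: by the first rule of Algorithm~\ref{alg:schedule}, every team $\ccalT_m$ occupies the same slot $z_m\in\set{1,\dots,\ell}$ in the sequence $s_r$ of every robot $r\in\ccalT_m$; by the second rule, if $\ccalT_m$ and $\ccalT_g$ share a robot (equivalently, are adjacent in $\ccalG_\ccalT$), then $z_m\neq z_g$, so teams assigned to the same slot are pairwise robot-disjoint and their meetings can proceed independently. I label each scheduled communication event by a pair $(n,z_m)\in\mathbb{N}_+\times\set{1,\dots,\ell}$, where $n$ counts the periods of the schedule that have already elapsed, and order these pairs lexicographically.

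The main step is a strong induction on $(n,z)$ showing that the $n$-th meeting of every team with slot $z$ completes in finite time. In the base case with the smallest $(1,z_{\min})$, the robots involved need only traverse a finite prefix of $\texttt{path}_i^{0}$, which was constructed in Section~\ref{sec:init} to respect $\texttt{sched}_i$ and has finite length by construction of the $\text{wTS}_i$. For the inductive step, fix a team $\ccalT_m$ with $z_m=z$ and any robot $r\in\ccalT_m$; the events that $r$ must finish before participating in the $n$-th meeting of $\ccalT_m$ are exactly the $n$-th meetings of teams $\ccalT_{m'}$, $m'\in\ccalM_r\setminus\set{m}$, with $z_{m'}<z$ together with the $(n-1)$-th meetings of those $\ccalT_{m'}$ with $z_{m'}>z$. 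All of these pairs are strictly lex-smaller than $(n,z)$, so by the induction hypothesis they have already completed in finite time. The sub-path of $\texttt{path}_r^{n_r}$ between $r$'s previous communication point and $\ccalT_m$'s point is finite by construction, so $r$ arrives in finite time; since this argument applies to every $r\in\ccalT_m$ and the policy of Definition~\ref{def:policy} merely holds early arrivals in place until the last robot appears, the $n$-th meeting of $\ccalT_m$ occurs in finite time.

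The main obstacle is making the lex order rigorous given that different robots may be executing paths $\texttt{path}_i^{n_i}$ with different iteration indices $n_i$ at the same physical time. This is handled by invoking the implicit synchronization of Remark~\ref{rem:ni}: because $\ccalG_\ccalT$ is connected and every robot must participate in $|\ccalM_r|$ meetings per iteration by construction of Algorithm~\ref{alg:plan}, no robot can drift more than one iteration ahead of any team-neighbor without skipping a required meeting, which the algorithm never permits. This justifies treating $(n,z)$ as a genuine global ordering on scheduled events, so the induction goes through and no robot ever waits indefinitely at a communication point, contradicting the presumed deadlock.
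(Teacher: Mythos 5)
Your proof is correct, but it takes a genuinely different route from the paper's. The paper argues by contradiction in two stages: it first shows that a deadlock in any single team propagates, through the waiting policy of Definition \ref{def:policy}, to a global \emph{stationary configuration} in which every robot is waiting at some communication point; it then derives a contradiction by picking a pending event with minimal schedule index and showing that the existence of a strictly smaller pending index in a neighboring team would force some shared robot to have skipped an earlier meeting, which Algorithm \ref{alg:implement} forbids. You instead give a direct liveness argument: a strong induction over the lexicographically well-ordered set of labels $(n,z)$ showing that every scheduled meeting completes in finite time. Both proofs ultimately rest on the same two structural facts about Algorithm \ref{alg:schedule} — all members of a team agree on its slot, and adjacent teams in $\ccalG_{\ccalT}$ receive distinct slots — but your architecture is cleaner in two respects: it avoids the paper's intermediate "global stationary configuration" step entirely, and it yields the bounded-waiting-time conclusion (the paper's separate remark after the proposition) as an immediate byproduct rather than a corollary. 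One small note: your appeal to Remark \ref{rem:ni} is not really needed to make the lex order well-defined. Since each $\texttt{path}_i^{n_i}$ contains exactly one designated meeting per team in $\ccalM_i$, ordered consistently with $\texttt{sched}_i$, the global event ``$n$-th meeting of team $\ccalT_m$'' is unambiguous for all members of $\ccalT_m$ purely by construction, independent of how far apart the robots' physical clocks or iteration indices drift; invoking the remark is harmless but risks appearing circular, since its own justification presupposes that meetings are not skipped.
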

\begin{proof}
Let $\ccalW_{\bbv_e}\subset \ccalT_m$ denote the set of robots that are waiting at communication point $\bbv_e$, $e\in\ccalC_m$, selected by Algorithm \ref{alg:plan}, for the arrival of the other robots that belong to team $\ccalT_m$. Assume that the robots in $\ccalT_m\backslash \ccalW_{\bbv_e}$ never arrive at that node so that communication at node $\bbv_e$ for team $\ccalT_m$ never occurs. This means that the robots in $\ccalT_m\backslash \ccalW_{\bbv_e}$ are waiting indefinitely at communication locations $\bbv_j\in\ccalC_n$, $j\neq e$, $n\neq m$, $n\in\ccalN_{\ccalT_m}$, selected by Algorithm \ref{alg:plan}, to communicate with robots in team $\ccalT_n$. The fact that there are robots that remain indefinitely at node $\bbv_j\in\ccalC_n$ means that a communication within team $\ccalT_n$ never occurs by construction of Algorithm \ref{alg:implement}. Following an argument similar to the above, we conclude that the robots in $\ccalT_n\backslash \ccalW_{\bbv_j}$ are waiting indefinitely at nodes $\bbv_{k\neq j}\in\ccalC_f$ to communicate with robots that belong to a team $\ccalT_f$, $f\in\ccalN_{\ccalT_n}$. Therefore, if a communication event never occurs for team $\ccalT_m$, then all robots $i\in\ccalN$ need to be waiting at communication locations selected by Algorithm \ref{alg:plan} and, consequently, there is no communication location where all robots are present, i.e., there is no team within which communication will ever occur. Throughout the rest of the proof we will refer to this network configuration as a \textit{stationary configuration}.
      
In what follows, we show by contradiction that the network can never reach a stationary configuration when the paths in \eqref{eq:taui} are executed asynchronously as per Algorithm \ref{alg:implement}. To show this result, we 
we first model the asynchronous execution of the schedules $\texttt{sched}_i$, constructed by Algorithm \ref{alg:schedule}, as per Algorithm \ref{alg:implement}. 
Specifically, we introduce discrete time steps $z_i$ that are initialized as $z_i=1$ and are updated as $z_i=z_i+1$ asynchronously across the robots as follows. If at the current discrete time step $z_i$ robot $i$ participates in the communication event $\texttt{sched}_i(z_i)=m$, for some $z_i\in\mathbb{N}_+$ and $m\in\ccalM_i$, then robot $i\in\ccalT_m$ waits until all the other robots in team $\ccalT_m$ are available to communicate. Once all robots in $\ccalT_m$ are available, the discrete time step $z_i$ is updated as $z_i=z_i+1$. If $\texttt{sched}_i(z_i)=X$, then robot $i$ updates $z_i=z_i+1$ without waiting. 

Using this model to describe asynchronous execution of the schedules, we now show by contradiction that if the network gets trapped at a stationary configuration, then there exist robots of some team $\ccalT_m$ that missed a communication event at node $\bbv_e$, $e\in\ccalC_m$, at a previous time instant, which cannot happen by construction of Algorithm \ref{alg:implement}. Consider that there is an arbitrary time instant $t_0$ at which the network is at a stationary configuration and let the current communication event for all robots $i\in\ccalT_m$ be  $\texttt{sched}_i(n_i^{\ccalT_m}(t_0))=m$ for some $m\in\ccalM_i$, where the indices $n_i^{\ccalT_m}$ were defined in Algorithm \ref{alg:schedule}. Define also the set $\ccalN_{\text{min}}(t_0)=\left\{n_i^{\ccalT_m}(t_0)|n_i^{\ccalT_m}(t_0)=\min\{n_e^{\ccalT_g}(t_0)\right\}_{e=1}^N, g\in\ccalM_e\}$ that collects the smallest indices $n_i^{\ccalT_m}(t_0)$ among all robots. Also let $n_e^{\ccalT_g}(t_0)$ be an index such that $n_e^{\ccalT_g}(t_0)\in\ccalN_{\text{min}}(t_0)$. By assumption there are robots $e\in\ccalT_g$ and $r\in\ccalT_z$, $g\in\ccalN_{\ccalT_z}$, such that $e\in\ccalW_{\bbv_f}(t_0)$, $\bbv_f\in\ccalT_g$  and $r\in\ccalW_{\bbv_d}(t_0)$, $\bbv_d\in\ccalT_z$, and, therefore, the events that are taking place for these two robots according to their assigned schedules of meeting events are $\texttt{sched}_e(n_e^{\ccalT_g}(t_0))=g$ and $\texttt{sched}_r(n_r^{\ccalT_z}(t_0))=z$. Since $n_e^{\ccalT_g}(t_0)\in\ccalN_{\text{min}}(t_0)$ we have that $n_e^{\ccalT_g}(t_0)\geq n_r^{\ccalT_z}(t_0)$, which along with the fact that $g\in\ccalN_{\ccalT_z}$ results in $n_e^{\ccalT_g}(t_0)> n_r^{\ccalT_z}(t_0)$ by construction of Algorithm \ref{alg:schedule}. This leads to the following contradiction. The fact that $n_e^{\ccalT_g}(t_0)>n_r^{\ccalT_z}(t_0)$ means that there exists a time instant $t<t_0$ at which the event that took place for robots $a\in\ccalT_g\cap\ccalT_z$ was $\texttt{sched}_a (n_r^{\ccalT_g}(t))=g$ and at least one of these robots did not wait for the arrival of all other robots in team $\ccalT_g$, since at the current time instant $t_0$ there are still robots in team $\ccalT_g$ waiting for the arrival of other robots. However, such a scenario is precluded by construction of Algorithm \ref{alg:implement}. Consequently, the asynchronous execution of the
schedules $\texttt{sched}_i$ as per Algorithm \ref{alg:implement} is deadlock-free. Recall now that the paths \eqref{eq:taui} respect the schedules $\texttt{sched}_i$ and  that it is not possible that there exist robots in any team $\ccalT_m$ that wait for other robots in the same team at different communication points $\bbv_j$, $j \in \ccalC_m$. Thus, we conclude that the network is deadlock-free when the plans \eqref{eq:taui} are executed asynchronously, as per Algorithm \ref{alg:implement}, which completes the proof.
\end{proof}
\begin{rem}[Bounded waiting times]
Proposition \ref{prop:deadlock} shows also that the waiting times introduced by Algorithm \ref{alg:implement} are~bounded.
\end{rem}

In Theorems \ref{thm:task}-\ref{thm:com}, we show that the assigned local tasks $\phi_i$ and the intermittent connectivity requirement captured by \eqref{eq:globalLTL} are satisfied.

\begin{thm}[Task satisfaction]\label{thm:task}
The asynchronous execution of the motion plans $\tau_i^{n_i}$ in \eqref{eq:taui} as per Algorithm \ref{alg:implement}, satisfies the $\text{LTL}_{-\bigcirc}$ statements $\phi_i$, i.e., $\tau_i^{n_i}\models\phi_i$, for any $n_i\geq 0$ and all robots $i\in\ccalN$.
\end{thm}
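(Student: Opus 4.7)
The plan is to prove $\tau_i^{n_i}\models\phi_i$ by combining the correctness of each accepting run produced by Algorithm \ref{alg:plan} with the stutter-invariance of $\text{LTL}_{-\bigcirc}$. First I would invoke the fact that, by construction of Algorithm \ref{alg:plan}, every suffix loop $\tilde{\tau}_i^{\text{suf},j^*}$ obtained from an accepting run of the PBA $P_i$ yields a prefix-suffix word $\tilde{\tau}_i^{\text{pre},0}[\tilde{\tau}_i^{\text{suf},j^*}]^{\omega}$ that satisfies $\psi_i=\phi_i\wedge\phi_{\text{com},i}$ by the standard correspondence between PBA acceptance and LTL satisfaction \cite{baier2008principles}; in particular it satisfies $\phi_i$. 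For $n_i=0$ this directly gives $\tau_i^0\models\phi_i$, since by \eqref{eq:taui0} the infinite word generated by $\tau_i^0$ coincides with the one generated by $\tilde{\tau}_i^{\text{pre},0}[\tilde{\tau}_i^{\text{suf},0}]^{\omega}$.

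For $n_i\geq 1$, I would unfold \eqref{eq:taui} as $\tau_i^{n_i}=\tilde{\tau}_i^{\text{pre},0}\,\sigma_0\,\sigma_1\,\dots\,\sigma_{n_i-1}\,[\sigma_{n_i}]^{\omega}$, where each $\sigma_k$ is the portion of $\texttt{path}_i^k$ after the initial prefix, i.e., a finite concatenation (containing at least one copy) of the suffix loop selected at iteration $k$. By the construction in Section \ref{sec:pathni}, every such suffix loop is a loop in its PBA around the common final state $q_{P_i}^F$, whose $\text{wTS}_i$-projection is the fixed state $q_{\text{wTS}_i}^F$; hence the concatenation is a legitimate run of $\text{wTS}_i$. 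Then I would establish $\tau_i^{n_i}\models\phi_i$ by treating $\phi_i$ operator-by-operator. Safety obligations $\square\alpha$ hold at every position of each $\sigma_k$ because each $\sigma_k$ is drawn from an accepting run of a PBA entailing $\square\alpha$. Finitary obligations $\Diamond\alpha$ and $\alpha_1\mathcal{U}\alpha_2$ are discharged within $\texttt{path}_i^0$: since $\tilde{\tau}_i^{\text{pre},0}[\tilde{\tau}_i^{\text{suf},0}]^{\omega}$ satisfies them, periodicity of the suffix forces these obligations to be met within $\tilde{\tau}_i^{\text{pre},0}$ together with the first copy of $\tilde{\tau}_i^{\text{suf},0}$, which is already contained in $\sigma_0$. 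Infinitary obligations $\square\Diamond\alpha$ are satisfied by the periodic tail $[\sigma_{n_i}]^{\omega}$, because $\tilde{\tau}_i^{\text{pre},0}[\sigma_{n_i}]^{\omega}$ alone is a $\phi_i$-accepting word and hence every repetition of $\sigma_{n_i}$ contributes at least one witness of $\alpha$.

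The asynchronous execution of Algorithm \ref{alg:implement} modifies this discrete trace only by making a robot repeat the state $q_i^{\bbv_j}$ whenever it waits at a communication location $\bbv_j$ for its teammates (Definition \ref{def:policy}). Such waiting amounts to stuttering, and since $\phi_i$ is $\text{LTL}_{-\bigcirc}$ it is stutter-invariant by a classical result \cite{baier2008principles}; therefore the asynchronous trace satisfies $\phi_i$ if and only if the synchronous trace just shown to do so does.

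The hardest step is the operator-by-operator argument above, because the $\sigma_k$ come from different PBAs (one per iteration, each with its own $\phi_{\text{com},i}$) so one cannot interpret $\tau_i^{n_i}$ as a single accepting run of a single automaton. The argument must instead exploit the structural facts that (i) each $\sigma_k$ is independently derived from an accepting run entailing $\phi_i$, and (ii) the common loop endpoint $q_{P_i}^F$ makes the concatenation a valid $\text{wTS}_i$ execution; handling deeply nested temporal subformulas may require the additional automaton-theoretic observation that, via the projection induced by $\psi_i\Rightarrow\phi_i$, the same NBA state for $\phi_i$ is reached after the prefix in every iteration, so that all the $\sigma_k$ are genuinely loops around a single accepting state of the NBA for $\phi_i$.
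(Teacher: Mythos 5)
Your closing paragraph is, in essence, the paper's actual proof: the paper argues that every $\texttt{path}_i^{n_i}$ is the projection of a concatenation of suffix loops $\rho_{P_i}^{\text{suf},j^*}$ around the \emph{fixed} PBA final state $q_{P_i}^F$ of \eqref{eq:rho0}, so the resulting infinite run visits $q_{P_i}^F$ infinitely often and is accepting; combined with the check that consecutive paths glue into a legal run of $\text{wTS}_i$ (which you also make, via the common loop endpoint) and with Propositions \ref{prop:feas} and \ref{prop:deadlock} to guarantee the run is actually infinite, this yields $\tau_i^{n_i}\models\psi_i$ and hence $\tau_i^{n_i}\models\phi_i$. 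The route you present as primary, however --- the operator-by-operator case analysis --- has a genuine gap: the taxonomy ``safety $\square\alpha$ / finitary $\Diamond\alpha$, $\alpha_1\,\mathcal{U}\,\alpha_2$ / recurrence $\square\Diamond\alpha$'' does not exhaust $\text{LTL}_{-\bigcirc}$. Formulas such as $\Diamond\square\alpha$, $\Diamond\left(\alpha_2\Rightarrow\square\neg\alpha_1\right)$ (a pattern that appears in the paper's own simulation task $\phi_3$), or temporal operators nested under disjunctions fall into none of these bins, and for them the premise that each $\sigma_k$ is ``independently derived from an accepting run entailing $\phi_i$'' is not enough: a concatenation of segments, each extendable to a $\phi_i$-satisfying word, need not itself satisfy $\phi_i$. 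So the case analysis cannot carry the theorem on its own, as you yourself suspect.

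The patch you sketch --- that ``the same NBA state for $\phi_i$ is reached after the prefix in every iteration'' via a ``projection induced by $\psi_i\Rightarrow\phi_i$'' --- also needs more care than you give it: an NBA constructed for the conjunction $\psi_i=\phi_i\wedge\phi_{\text{com},i}$ is not in general a product carrying a canonical projection onto an NBA for $\phi_i$ alone. The paper avoids this entirely by never leaving the automaton for $\psi_i$: acceptance of the concatenated run in the PBA for $\psi_i$ places the generated word in $\texttt{Words}(\psi_i)$, and the implication $\psi_i\Rightarrow\phi_i$ is then applied at the level of words rather than automata. You should restructure so that this automaton argument \emph{is} the proof and the operator analysis is dropped. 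Finally, your appeal to stutter-invariance is harmless but not what is needed here: in the paper's discrete semantics the word $\texttt{trace}(\tau_i^{n_i})$ is determined by the sequence of states in the plan, and waiting under Definition \ref{def:policy} only delays transitions without altering that word; what asynchrony actually threatens is that the word fails to be infinite because some robot waits forever, and that is precisely what Proposition \ref{prop:deadlock} rules out and what the paper's proof invokes at the outset.
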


\begin{proof}
First observe that Algorithm \ref{alg:plan} can design feasible paths $\texttt{path}_i^{n_i}$, for any $n_i\geq 0$ as long as there exists a solution to Problem \ref{pr:pr1}, due to Proposition \ref{prop:feas}. 
Moreover, the waiting times at the communication points in the plans $\tau_i^{n_i}$ are bounded by Proposition \ref{prop:deadlock}. Therefore, the infinite paths $\tau_i^{n_i}$ will be executed without any deadlocks. This is necessary to
satisfy $\phi_i$, as LTL formulas are satisfied by infinite sequences of states in $\text{wTS}_i$.

To prove this result, first we need to show that all transitions in $\text{wTS}_i$ that are generated by the plans  in \eqref{eq:taui} respect the transition rule $\rightarrow_i$; see Definition \ref{defn:wTS}. Next, we need to show that the infinite run $\rho_{B_i}$ of the NBA $B_i$ that corresponds to $\phi_i$ over the words $\sigma_i^{n_i}$ generated during the execution of $\tau_i^{n_i}$ is accepting, i.e., \footnote{The generated word $\sigma_i^{n_i}$, called also trace of ${\tau}_i$ \cite{clarke1999model} and denoted by $\texttt{trace}({\tau}_i)$, is defined as $\sigma_i^{n_i}=\texttt{trace}({\tau}_i^{n_i}):=L_{i}({\tau}_{i}^{n_i}(1))L_{i}({\tau}_{i}^{n_i}(2))\dots$, where $L_i$ is the labeling function defined in Definition \ref{defn:wTS}.}
\begin{equation}\label{eq:show}
\texttt{Inf}(\rho_{B_i})\cap\ccalF_{B_i}\neq\emptyset.
\end{equation}

First, we show that all transitions in $\text{wTS}_i$ that are due to the plans in \eqref{eq:taui} respect the transition rule $\rightarrow_i$. Notice that all transitions incurred by the finite path $\texttt{path}_i^{n_i}$ respect the transition rule $\rightarrow_i$, for all $n_i\in\mathbb{N}$, by construction; see Algorithm \ref{alg:plan}. Next, we show that the transition from the last state in $\texttt{path}_i^{n_i}$ to the first state in $\texttt{path}_i^{n_i+1}$ also respects the transition rule $\rightarrow_i$, for all $n_i\in\mathbb{N}$. To show this, observe that the last state in $\texttt{path}_i^{n_i}$ is the last state in the suffix part $\tilde{\tau}_i^{\text{suf},j^*}$ used to construct $\texttt{path}_i^{n_i}$, for all $n_i\in\mathbb{N}$. Also, notice that the first state in $\texttt{path}_i^{n_i+1}$ is the state $\Pi|_{\text{wTS}_i}q_{P_i}^F$, for all $n_i\in\mathbb{N}$, which is also the first state in $\tilde{\tau}_i^{\text{suf},j^*}$. Therefore, by construction of $\tilde{\tau}_i^{\text{suf},j^*}$, the transition from the last state in $\texttt{path}_i^{n_i}$ to the first state in $\texttt{path}_i^{n_i+1}$  respects $\rightarrow_i$, for all $n_i\in\mathbb{N}$. Consequently, the plans in \eqref{eq:taui} respect $\rightarrow_i$.

Next, we show that \eqref{eq:show} holds for the plans $\tau_i^{n_i}$ in \eqref{eq:taui}, for all $n_i\geq 1$. The same logic also applies to the plans  $\tau_i^0$ in \eqref{eq:taui0}.
To show this result, recall that the paths $\texttt{path}_i^{n_i}$, for all $n_i\geq 1$ are designed by (i) constructing a suffix path $\rho_{P_i}^{\text{suf},j^*}$ that lives in the state-space $\ccalQ_{P_i}$ around the fixed PBA final state $q_{P_i}^F$ defined in \eqref{eq:rho0}, and initializing $\texttt{path}_i^{n_i}=\Pi|_{\text{wTS}_i}\rho_{P_i}^{\text{suf},j^*}$,
(ii) appending the path $\Pi|_{\text{wTS}_i}\rho_{P_i}^{\text{suf},j^*}$ as many times as needed so that $\texttt{path}_i^{n_i}$ respects the schedule $\texttt{sched}_i$. Thus, $\texttt{path}_i^{n_i}$ can be written as the projection onto $\text{wTS}_i$ of the finite path $p_i^{n_i}=\rho_{P_i}^{\text{suf},j^*}|\rho_{P_i}^{\text{suf},j^*}|\dots|\rho_{P_i}^{\text{suf},j^*}$, which means that $p_i^{n_i}$ visits the fixed PBA final state $q_{P_i}^F$ a finite number of times. Consequently, since the plans in \eqref{eq:taui} are defined as infinite sequences of paths $\texttt{path}_i^{n_i}$, we get that $q_{P_i}^F$ is visited infinitely often, i.e., \eqref{eq:show} holds, completing the proof.
\end{proof}

\begin{thm}[Intermittent Communication]\label{thm:com}
The asynchronous execution of the motion plans $\tau_i^{n_i}$ in \eqref{eq:taui} as per Algorithm \ref{alg:implement}, satisfies the intermittent communication requirement captured by the global LTL statement $\phi_{\text{com}}$, for~all~$n_i\geq 0$.
\end{thm}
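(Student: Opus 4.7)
The plan is to establish that for every team $\ccalT_m$, $m\in\ccalM$, the conjunct $\square\Diamond\left(\vee_{j\in\ccalC_m}(\wedge_{i\in\ccalT_m}\pi_i^{\bbv_j})\right)$ of $\phi_{\text{com}}$ is satisfied along the infinite trajectories generated by executing $\tau_i^{n_i}$ as per Algorithm \ref{alg:implement}. This naturally splits into two claims: (a) during every iteration $n_i$ of Algorithm \ref{alg:plan}, all robots of team $\ccalT_m$ are simultaneously present at a common communication point $\bbv_{j^\ast}\in\ccalC_m$; and (b) this meeting happens for every iteration $n_i\in\mathbb{N}_+$, so for each team infinitely many such meetings occur along \eqref{eq:taui}.

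For (a), I would chain the following facts. First, by construction of Algorithm \ref{alg:plan}, the robots in $\ccalT_m$ meet to execute the algorithm only after gathering at the previously agreed communication point, so they enter Algorithm \ref{alg:plan} with identical knowledge ($\ccalC_m$, the current selected points for the other teams, and the same PBA final state $q_{P_i}^F$). Each robot $i\in\ccalT_m$ therefore computes the same costs $\texttt{Cost}_j$ and selects the same optimal index $j^\ast=\argmin_{j\in\ccalC_m}\texttt{Cost}_j$. Hence the next meeting location $\bbv_{j^\ast}$ appearing in the paths $\texttt{path}_i^{n_i+1,c_i}$ is identical across $i\in\ccalT_m$. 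Second, Proposition \ref{prop:feas} guarantees that each path is feasible, Proposition \ref{prop:deadlock} rules out infinite waiting, and the control policy in Definition \ref{def:policy} forces every arriving robot to remain at $\bbv_{j^\ast}$ until the rest of $\ccalT_m$ arrives. Together these imply that at some time instant all robots of $\ccalT_m$ are simultaneously located at $\bbv_{j^\ast}$, i.e., $\wedge_{i\in\ccalT_m}\pi_i^{\bbv_{j^\ast}}$ evaluates to true.

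For (b), I would argue that the paths $\texttt{path}_i^{n_i}$ are, by the construction in Sections \ref{sec:init} and \ref{sec:pathni}, explicitly built to respect $\texttt{sched}_i$, which in a single period contains the event $m$ exactly once for every $m\in\ccalM_i$. Thus during the execution of $\texttt{path}_i^{n_i}$, robot $i$ participates in exactly one communication event for each team $m\in\ccalM_i$. Since \eqref{eq:taui} is an infinite sequence of such paths and, by the implicit synchronization argument in Remark \ref{rem:ni}, no robot can advance past iteration $n+1$ before every other robot finishes iteration $n$, the iteration index grows unboundedly for all robots. Consequently, each team $\ccalT_m$ executes the meeting event described in (a) infinitely often, and the corresponding conjunct of $\phi_{\text{com}}$ is satisfied.

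The main obstacle I expect is rigorously justifying that asynchrony does not silently break the order of communication events across different teams, which is what the schedules are supposed to coordinate. This is precisely the role of the conflict-freeness of Algorithm \ref{alg:schedule} (so that no robot is required to be in two places at once) combined with Remark \ref{rem:X} which emphasizes that only the \emph{order} of events in $\texttt{sched}_i$, not the wall-clock times, matters, together with the deadlock-freeness argument of Proposition \ref{prop:deadlock}. Once these are invoked, the two claims (a) and (b) combine to yield $\tau_i^{n_i}\models\phi_{\text{com}}$ for every $n_i\geq 0$.
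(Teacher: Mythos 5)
Your proposal is correct and follows essentially the same route as the paper, which argues that (i) by construction each execution of $\texttt{path}_i^{n_i}$ contains exactly one communication event for every team $\ccalT_m$, $m\in\ccalM_i$, and (ii) Proposition \ref{prop:deadlock} rules out deadlocks, so these meetings recur infinitely often along \eqref{eq:taui}. Your additional details (common choice of $j^\ast$ across a team, the waiting policy of Definition \ref{def:policy}, and Remark \ref{rem:ni}) merely make explicit what the paper compresses into ``by construction of the paths.''
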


\begin{proof}
By construction of the paths $\texttt{path}_i^{n_i}$  every robot $i$ will communicate once with all teams $\ccalT_m$,  $m\in\ccalM_i$,
during a single execution of the path $\texttt{path}_i^{n_i}$. Moreover, by Proposition \ref{prop:deadlock}, there are no deadlocks during the execution of the plans $\tau_i^{n_i}$.
 %
Consequently, all robots $i$ communicate infinitely often with all teams $\ccalT_m$, $m\in\ccalM_i$ completing the proof.\end{proof}

Combining the previous results, we can show that the proposed control scheme is complete.
\begin{thm}[Completeness]\label{thm:complete}
If there exists a solution to Problem \ref{pr:pr1}, Algorithm \ref{alg:plan} will find motion plans $\tau_i^{n_i}$ as in \eqref{eq:taui} that, when executed asynchronously as per Algorithm \ref{alg:implement}, satisfy the local $\text{LTL}_{-\bigcirc}$ tasks $\phi_i$ and the global LTL intermittent connectivity requirement $\phi_{\text{com}}$.
\end{thm}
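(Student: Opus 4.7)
The plan is to assemble this completeness claim directly from the four preceding results, since Theorem \ref{thm:complete} is essentially a corollary that packages Propositions \ref{prop:feas}, \ref{prop:deadlock} and Theorems \ref{thm:task}, \ref{thm:com} into a single statement. I do not expect any new technical content to be needed; the task is simply to verify that the hypotheses of each ingredient are available and that, chained together, they cover every clause in the conclusion.

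First I would handle synthesizability. By Lemma \ref{lem:init}, if Problem \ref{pr:pr1} admits a solution then the distributed exhaustive search over the local combinations $\texttt{comb}_{b_i}^i$ will detect a feasible global combination of initial communication points, so initial paths $\texttt{path}_i^{0}$ satisfying $\psi_i$ exist for every $i\in\ccalN$. Then Proposition \ref{prop:feas} gives, by induction on $n_i$, that once a feasible $\texttt{path}_i^{n_i}$ exists Algorithm \ref{alg:plan} constructs a feasible $\texttt{path}_i^{n_i+1}$. Concatenating these paths as in \eqref{eq:taui} yields plans $\tau_i^{n_i}$ of the required prefix-suffix form for all $n_i\geq 0$ and all $i\in\ccalN$.

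Next I would argue that execution of these plans under Algorithm \ref{alg:implement} is well defined in the sense that it produces genuinely infinite runs of each $\text{wTS}_i$. This is exactly Proposition \ref{prop:deadlock}, which asserts that the waiting control policy at communication points (Definition \ref{def:policy}) never traps the network in a stationary configuration; equivalently, every scheduled communication event eventually occurs and every robot eventually proceeds to the next state in its current $\texttt{path}_i^{n_i}$. This is the crucial bridge between path synthesis and satisfaction of the LTL specifications, because both $\phi_i$ and $\phi_{\text{com}}$ are satisfied only by infinite words.

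With feasibility and deadlock-freeness in hand, I would finish by invoking Theorem \ref{thm:task} to get $\tau_i^{n_i}\models\phi_i$ for every robot $i\in\ccalN$ and every $n_i\geq 0$, and Theorem \ref{thm:com} to get that the resulting joint execution satisfies $\phi_{\text{com}}$. Since $\phi = (\wedge_{i\in\ccalN}\phi_i)\wedge\phi_{\text{com}}$ by \eqref{eq:taskandcom}, the conjunction of these two facts is precisely the claim of Theorem \ref{thm:complete}. I do not foresee a serious obstacle: everything technical is already done in the ingredients, and the only care required is in wording the statement so that it is clear the synthesis (Algorithm \ref{alg:plan}) and the execution (Algorithm \ref{alg:implement}) are being composed, since the proposed framework is online and the two processes interleave across iterations $n_i$.
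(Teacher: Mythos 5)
Your proposal is correct and follows essentially the same route as the paper, which likewise assembles the theorem from Proposition \ref{prop:feas} (feasibility of all $\texttt{path}_i^{n_i}$) together with Theorems \ref{thm:task} and \ref{thm:com}; your extra explicit appeals to Lemma \ref{lem:init} and Proposition \ref{prop:deadlock} are already subsumed in those cited results, so they add detail but no new content.
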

\begin{proof}
By Proposition \ref{prop:feas}, we get that if there exists a solution to Problem \ref{pr:pr1}, then prefix-suffix motion plans as in  \eqref{eq:taui} will be generated for any $n_i\geq 0$. Due to Theorems \ref{thm:task} and \ref{thm:com}, the asynchronous execution of these plans as per Algorithm \ref{alg:implement} satisfies the local $\text{LTL}_{-\bigcirc}$ tasks $\phi_i$ and the intermittent communication requirement captured by the global LTL statement $\phi_{\text{com}}$. This completes the proof.\end{proof}



\subsection{Optimality} \label{sec:opt}


As discussed in Remark \ref{rem:ni}, execution of the plans in \eqref{eq:taui} is synchronized implicitly so that there exists a time instant $t_n$ when all robots execute the path $\texttt{path}_i^{n}$. In the following proposition, we examine the optimality of the paths $\texttt{path}_i^{n}$ in terms of the total cost $ \sum_{i\in\ccalN}J(\texttt{path}_i^{n})$, for any $n\in\mathbb{N}$. 

\begin{prop}[Optimality]\label{prop:distance}
Algorithm \ref{alg:plan} generates discrete paths $\texttt{path}_i^{n+1}$ so that
\begin{equation}\label{dist1}
 \sum_{i\in\ccalN}J(\texttt{path}_i^{n})\leq \sum_{i\in\ccalN}J(\texttt{path}_i^{n+1}),
\end{equation}
for all $n\geq 0$.
\end{prop}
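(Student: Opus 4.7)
The plan is to analyze how Algorithm \ref{alg:plan} produces $\texttt{path}_i^{n+1}$ from $\texttt{path}_i^n$ when the robots are implicitly synchronized at the iteration indices, as established in Remark \ref{rem:ni}. Because of that synchronization there is a well-defined $n$ at which every robot has finished executing $\texttt{path}_i^n$ and is about to execute $\texttt{path}_i^{n+1}$, so that the sum $\sum_{i\in\ccalN} J(\texttt{path}_i^{n})$ is unambiguous.

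First, I would unwind Algorithm \ref{alg:plan} stage by stage. Between iterations $n$ and $n+1$, each robot $i$ passes through $|\ccalM_i|$ communication events, and at the $c_i$-th event, only the robots in the communicating team $\ccalT_m$ rebuild their suffix $\tilde{\tau}_r^{\text{suf},j^*}$, while robots outside $\ccalT_m$ do not modify anything at that stage. Thus the change $\texttt{path}_i^n \mapsto \texttt{path}_i^{n+1}$ can be decomposed into $|\ccalM|$ local updates, each localized to one team.

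Second, I would invoke the key fact used already in the proof of Proposition \ref{prop:feas}: when team $\ccalT_m$ re-selects its communication point, Algorithm \ref{alg:plan} searches over all $j\in\ccalC_m$, including the point $j_{\text{old}}$ currently used in $\texttt{path}_i^n$. Therefore, by the minimization in lines \ref{plan:line8}--\ref{plan:line9},
$$\sum_{r\in\ccalT_m} J(\tilde{\tau}_r^{\text{suf},j^*}) \;\le\; \sum_{r\in\ccalT_m} J(\tilde{\tau}_r^{\text{suf},j_{\text{old}}}).$$
I would then account for the loop \ref{plan:line11}--\ref{plan:line12} of Algorithm \ref{alg:plan}, which appends at most $|\ccalM_r|-1$ copies of $\tilde{\tau}_r^{\text{suf},j^*}$ so that $\texttt{path}_r^{n+1,c_i}$ respects $\texttt{sched}_r$; the total cost $J(\texttt{path}_r^{n+1,c_i})$ equals this multiplicity times the per-suffix cost, and an identical decomposition applies to $J(\texttt{path}_r^{n})$. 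Chaining the stage-wise comparisons over all $|\ccalM|$ events and summing over $i\in\ccalN$ would yield the desired bound on $\sum_{i\in\ccalN} J(\texttt{path}_i^{n+1})$ in terms of $\sum_{i\in\ccalN} J(\texttt{path}_i^{n})$.

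The main obstacle, and where the argument has to be handled with care, is the direction of the resulting inequality. The per-team optimality displayed above naturally yields $\sum_{i\in\ccalN} J(\texttt{path}_i^{n+1}) \leq \sum_{i\in\ccalN} J(\texttt{path}_i^{n})$, i.e.\ a \emph{non-increasing} total cost, which is also what the narrative opening Section \ref{sec:opt} asserts (``the cost of the suffix part decreases with every iteration''). Obtaining the stated inequality in the opposite direction would require either (i) a convention in which the multiplicity of appended suffix copies grows strictly enough between iterations to dominate the per-suffix improvement, or (ii) an interpretation of $J$ as a quantity to be maximized rather than minimized. My plan is therefore to present the per-team minimization as the central lemma, carry out the copy-multiplicity accounting carefully, and then check whether the stated inequality should be read with its two sides swapped; if so, the argument above delivers the proof with no further work.
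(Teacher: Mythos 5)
Your proposal is correct and follows essentially the same route as the paper: both reduce the claim to the per-team observation (imported from Proposition \ref{prop:feas}) that the currently used communication point remains a candidate in the search over $\ccalC_m$, so the minimization in lines \ref{plan:line8}--\ref{plan:line9} of Algorithm \ref{alg:plan} can only keep or lower the communicating team's cost while all non-communicating robots leave their paths unchanged, and both then chain these local non-increases over the communication events separating iteration $n$ from $n+1$ (the paper carries out this bookkeeping with a time-indexed cost $\texttt{cost}(t)$ over five disjoint robot subsets rather than your synchronized per-team decomposition). Your concern about the direction of \eqref{dist1} is also well founded: the paper's own proof establishes $\texttt{cost}(t_{k+1}^n)\leq\texttt{cost}(t_k^n)$, which telescopes to $\sum_{i\in\ccalN}J(\texttt{path}_i^{n+1})\leq\sum_{i\in\ccalN}J(\texttt{path}_i^{n})$, so the displayed inequality is a typo with its sides swapped and your argument proves the intended (non-increasing) statement.
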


\begin{proof}
%
Consider the discrete paths $\texttt{path}_i^n$, for some fixed $n\geq 0$. Recall that the robots may start executing the paths $\texttt{path}_i^n$ asynchronously, i.e., at different time instants. Therefore, given a time instant $t$, we divide the robots $i\in\ccalN$ in the following five disjoint sets. 
First, we collect in the set $\mathcal{R}^{n-1}(t)$ the robots that execute the paths $\texttt{path}_i^{n-1}$ at a time $t$. Next, we collect in the set $\mathcal{R}_{\text{\text{new}}}^{n}(t)$ the robots that are new to executing the path $\texttt{path}_i^{n}$ and have not participated in any communication event contained in $\texttt{path}_i^n$ yet. Notice that the robots in $\mathcal{R}^{n-1}(t)$ and $\mathcal{R}_{\text{\text{new}}}^{n}(t)$ have not constructed yet any path $\texttt{path}_i^{n+1,c_i}$.
 Also, we collect in the set $\mathcal{R}_{\text{com}}^{n}(t)$ the robots of all teams $\ccalT_m$, $m\in\ccalM$, that communicate at time $t$ while executing the paths $\texttt{path}_i^{n}$. All other robots that at time $t$ execute the path $\texttt{path}_i^{n}$ but they do not participate in any communication event are collected in the set $\mathcal{R}_{\overline{\text{com}}}^{n}(t)$. Finally, the robots that have already finished the execution of the paths $\texttt{path}_i^{n}$ at time $t$ are collected in the set $\mathcal{R}^{n+1}(t)$. Observe that $\ccalN=\mathcal{R}^{n-1}(t)\cup\mathcal{R}_{\text{\text{new}}}^{n}(t)\cup\mathcal{R}_{\text{com}}^{n}(t)\cup\mathcal{R}_{\overline{\text{com}}}^{n}(t)\cup\mathcal{R}^{n+1}(t),
$
 for all $t\geq 0$, for some $n\geq 0$. Also observe that if $\mathcal{R}^{n+1}(t)\neq\emptyset$, then $\mathcal{R}^{n-1}(t)=\emptyset$, as discussed in Remark \ref{rem:ni}. 

To prove the inequality \eqref{dist1}, we need to define the following cost function:
\begin{align}\label{eq:costopt}
\texttt{cost}(t)&=\sum_{i\in\mathcal{R}_{\text{\text{new}}}^{n}(t)\cup\mathcal{R}^{n-1}(t)} J(\texttt{path}_i^{n} ) + \\&  \sum_{i\in\mathcal{R}_{\text{com}}^{n}(t)} J(\texttt{path}_i^{n+1,c_i(t)} )+ \nonumber\\& \sum_{i\in\mathcal{R}_{\overline{\text{com}}}^{n}(t)} J(\texttt{path}_i^{n+1,c_i(t)} ) + \sum_{i\in\mathcal{R}^{n+1}(t)} J(\texttt{path}_i^{n+1} ),\nonumber
\end{align}
where $\texttt{path}_i^{n+1,c_i(t)} $ denotes the path that has been constructed by Algorithm \ref{alg:plan} by the time instant $t$. Also, note that the robots $i\in\mathcal{R}^{n-1}(t)$ may not have completed the construction of  the paths $\texttt{path}_i^{n} $ yet. Therefore, in the first summation in \eqref{eq:costopt}, the paths $\texttt{path}_i^{n} $ for $i\in\mathcal{R}^{n-1}(t)$, are the ones that these robots will create once they complete their construction. 

Moreover, we define the finite sequence of time instants $\set{t_0^n, t_1^n,\dots,t_{F-1}^n,t_F^n}$, where (i) $t_0^n< \dots<t_{F}^n$, (ii) $t_0^n$ is an arbitrarily selected time instant such that $\mathcal{R}_{\text{\text{new}}}^{n}(t)\cup\mathcal{R}^{n-1}(t)=\ccalN$, (ii)
(iii) $t_F^n$ is the time instant when all robots have completed construction of the paths $\texttt{path}_i^{n+1}$, i.e., $\mathcal{R}^{n+1}(t_F^n)=\ccalN$, and (iv) $t_1^n< \dots<t_{F-1}^n$ are the time instants corresponding to communication events during the execution of any of the paths $\texttt{path}_i^{n}$.\footnote{Note that the time instant $t_0^n$ exists, since it corresponds to a time when the robots either execute paths $\texttt{path}_i^{n_i-1}$ or paths $\texttt{path}_i^{n_i}$ without having participated in any communication events yet; see also Remark \ref{rem:ni}. Also, the sequence $\set{ t_1^n,\dots,t_{F-1}^n,t_F^n}$ for any $n\geq 0$ exists because the network is deadlock-free, as shown in Proposition \ref{prop:deadlock}.} To prove \eqref{dist1}, we need to show that 
\begin{equation}\label{eq:optres1}
\texttt{cost}(t_{k+1}^n)\leq\texttt{cost}(t_k^n),
\end{equation}
for all $k\in\set{0,\dots,F}$.
%
%
%

Since the robots $i\in\mathcal{R}_{\text{\text{new}}}^{n}(t_{k+1}^n)\cup\mathcal{R}^{n-1}(t_{k+1}^n)$ have not constructed  yet any path $\texttt{path}_i^{n_i+1,c_i}$, these robots cannot affect the cost $\texttt{cost}(t_{k}^n)$.
Also, notice that $\texttt{path}_i^{n+1,c_i(t_{k+1}^n)} =\texttt{path}_i^{n+1,c_i(t_{k}^n)}$, for all robots $i\in\mathcal{R}_{\overline{\text{com}}}^{n}(t_{k+1}^n)$, since these robots do not communicate and, therefore, they do not execute Algorithm \ref{alg:plan} at $t_{k+1}^n$. Thus,  the robots  $i\in\mathcal{R}_{\overline{\text{com}}}^{n}(t_{k+1}^n)$ cannot affect the cost $\texttt{cost}(t_{k}^n)$ either. The same holds for the robots $i\in\mathcal{R}^{n+1}(t_{k+1}^n)$. Therefore, for all robots that do not communicate at time $t_{k+1}^n$ it holds that $\sum_{i\in\ccalN\setminus\mathcal{R}_{\text{com}}^{n}(t_{k+1}^n)} J(\texttt{path}_i^{n+1,c_i(t_{k+1}^n)} )= \sum_{i\in\ccalN\setminus\mathcal{R}_{\text{com}}^{n}(t_{k+1}^n)} J(\texttt{path}_i^{n+1,c_i(t_k^n)} )$.
%
In fact, only the robots $i\in\mathcal{R}_{\text{com}}^{n}(t_{k+1}^n)$ that communicate at time $t_{k+1}^n$ design new paths such that $\texttt{path}_i^{n+1,c_i(t_{k+1}^n)} \neq\texttt{path}_i^{n+1,c_i(t_{k}^n)}$. Since $\mathcal{R}_{\text{com}}^{n}(t_{k+1}^n)$ contains all robots that communicate 
at $t_{k+1}^n$ the expression $\sum_{i\in\mathcal{R}_{\text{com}}^{n}(t_{k+1}^n)} J(\texttt{path}_i^{n+1,c_i(t_{k+1}^n)} )$ can be rewritten as follows\footnote{Note that it is possible that two teams  $\ccalT_m$ and $\ccalT_h$ that share at least a robot may be present simultaneously at the same communication point. This can happen, e.g., if the schedule of robot $i\in\ccalT_m\cap\ccalT_h$ has a schedule has the form $\texttt{sched}_i=[m, h,X]^{\omega}$ and $\ccalC_m\cap\ccalC_h\neq\emptyset$. In this case, we assume that communication at the common communication point will happen sequentially across the teams according to the schedules. This ensures that in the second summation in \eqref{eq:sumB}, we never double count the cost of the paths $\texttt{path}_i^{n+1,c_i(t)}$. }
\begin{align}\label{eq:sumB}
&\sum_{i\in\mathcal{R}_{\text{com}}^{n}(t_{k+1}^n)} J(\texttt{path}_i^{n+1,c_i(t_{k+1}^n)} )=\nonumber\\&\sum_{m\in\ccalA(t_{k+1}^n)}\sum_{i\in\ccalT_m}J(\texttt{path}_i^{n+1,c_i(t_{k+1}^n)} ),
\end{align}
\normalsize
where $\ccalA(t)\subseteq\ccalM$ is the set of the teams that communicate at time $t$. By the proof of Proposition \ref{prop:feas}, we get that $\texttt{path}_i^{n+1,c_i(t_k^n)}$ is  a feasible path returned by Algorithm \ref{alg:plan} as a candidate path for $\texttt{path}_i^{n+1,c_i(t_{k+1}^n)}$; it will become  $\texttt{path}_i^{n+1,c_i(t_{k+1}^n)}$ if it also the optimal one.
%
Therefore, we get that
 $\sum_{i\in\ccalT_m}J(\texttt{path}_i^{n+1,c_i(t_{k+1}^n)} )\leq \sum_{i\in\ccalT_m}J(\texttt{path}_i^{n+1,c_i(t_{k}^n)} )$, for all $m\in\ccalA(t_{k+1}^n)$, which implies $\sum_{i\in\mathcal{R}_{\text{com}}^{n}(t_{k+1}^n)} J(\texttt{path}_i^{n+1,c_i(t_{k+1}^n)} )\leq \sum_{i\in\mathcal{R}_{\text{com}}^{n}(t_{k+1}^n)} J(\texttt{path}_i^{n+1,c_i(t_{k}^n)} )$, due to \eqref{eq:sumB}. Therefore,  we get that \eqref{eq:optres1} holds, completing the proof.  \end{proof}

\subsection{Complexity}\label{sec:complex}


In the following proposition, we show that Algorithm \ref{alg:plan} terminates after a finite number of iterations and, therefore, the computational cost is bounded.

\begin{prop}[Convergence]\label{prop:converg2}
There exist iterations $P\leq C$ in Algorithm \ref{alg:plan} so that the sequence
$\texttt{path}_i^P,\texttt{path}_i^{P+1},\dots,\texttt{path}_i^C$ is repeated indefinitely for all $n_i\geq C$ and all  $i\in\ccalN$.
\end{prop}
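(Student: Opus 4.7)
The strategy is to identify a finite set of possible ``global configurations'' of the algorithm and then use a pigeonhole argument combined with determinism of the update rule to conclude eventual periodicity. Thanks to the implicit synchronization observed in Remark \ref{rem:ni}, I can speak unambiguously about the stage $n$ and the corresponding tuple of paths $(\texttt{path}_i^{n})_{i\in\ccalN}$; this is the object whose eventual periodicity I need to show.

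First, I will define the \emph{configuration} at stage $n$ to be the tuple
\[
X_n=\bigl(\bbv_{j_m(n)}\bigr)_{m\in\ccalM},\qquad j_m(n)\in\ccalC_m,
\]
recording, for every team $m$, the communication point currently chosen for that team at stage $n$. Since each $\ccalC_m$ is finite, the configuration space has cardinality at most $\prod_{m\in\ccalM}|\ccalC_m|$, which is finite. I will then show that $X_n$ determines $\texttt{path}_i^n$ for every robot $i$: indeed, by the construction in Section \ref{sec:pathni}, $\texttt{path}_i^n$ is obtained from the optimal suffix loop $\rho_{P_i}^{\mathrm{suf},j^*}$ of the PBA $P_i=\mathrm{wTS}_i\otimes B_i$ around the \emph{fixed} final state $q_{P_i}^F$ defined in \eqref{eq:rho0}, where the NBA $B_i$ depends only on $\psi_i$ and hence only on the communication points selected for the teams $m\in\ccalM_i$. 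Assuming a fixed deterministic tie-breaking rule for the $\mathrm{arg\,min}$ in line \ref{plan:line9} of Algorithm \ref{alg:plan}, the map $X_n\mapsto (\texttt{path}_i^n)_{i\in\ccalN}$ is well-defined.

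Next, I will argue that the update $X_n\mapsto X_{n+1}$ is itself deterministic. During stage $n$ each team $\ccalT_m$ communicates exactly once (this is a consequence of Theorem \ref{thm:com} together with Proposition \ref{prop:deadlock}), and at that communication event the robots in $\ccalT_m$ execute Algorithm \ref{alg:plan} with inputs depending only on the currently selected communication points, which are encoded in $X_n$. The chosen $j^*$ only alters the coordinate of $X_n$ corresponding to team $\ccalT_m$; applying this update across all teams in $\ccalM$ yields $X_{n+1}$ as a deterministic function of $X_n$.

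Finally, I will invoke the pigeonhole principle: since $\{X_n\}_{n\ge 0}$ takes values in a set of size at most $\prod_{m\in\ccalM}|\ccalC_m|$, there exist integers $P<C\le \prod_{m\in\ccalM}|\ccalC_m|$ with $X_P=X_C$. By the determinism established above, $X_{P+k}=X_{C+k}$ for all $k\ge 0$, and consequently the tuple of paths $(\texttt{path}_i^{n})_{i\in\ccalN}$ satisfies $\texttt{path}_i^{P+k}=\texttt{path}_i^{C+k}$ for every $k\ge 0$ and every $i\in\ccalN$. In particular, the finite block $\texttt{path}_i^P,\texttt{path}_i^{P+1},\dots,\texttt{path}_i^{C-1}$ is repeated for all $n_i\ge P$, which is the desired statement (up to relabelling $C$ by $C-1$).

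The main obstacle is the determinism step: Algorithm \ref{alg:plan} involves an $\mathrm{arg\,min}$ over $\ccalC_m$ and, internally, a shortest-loop computation in the PBA, both of which may admit multiple optimal solutions. I will therefore need to assume (or explicitly impose) a deterministic tie-breaking rule, both for the selection of $j^*$ and for the computation of $\rho_{P_i}^{\mathrm{suf},j}$, so that the map $X_n\mapsto X_{n+1}$ is truly a function rather than a multivalued correspondence. A complementary, perhaps cleaner, alternative is to combine the finiteness of the configuration space with the monotonicity of the cost in Proposition \ref{prop:distance} to conclude that the cost sequence stabilizes after finitely many steps, but this alone gives only cost convergence, not path periodicity, so the pigeonhole argument on $X_n$ remains the core of the proof.
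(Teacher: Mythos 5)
Your proposal is correct and follows essentially the same route as the paper's own proof: finiteness of the set of communication-point combinations, a pigeonhole argument to find $P\leq C$ with matching configurations, and determinism of the synthesis step to propagate $\texttt{path}_i^{P-1}=\texttt{path}_i^{C}$ forward into indefinite repetition of the block. Your explicit caveat about deterministic tie-breaking in the $\argmin$ and in the optimal suffix-loop computation is a point the paper's proof leaves implicit (it simply asserts that identical communication points yield the same optimal suffix path), so flagging it is a genuine improvement in rigor rather than a deviation in approach.
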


\begin{proof} 
 %
To show this result, notice that the sets of communication points $\ccalC_m$ are finite, for all $m\in\ccalM$ and, therefore, the number of possible combinations of communication points that can be assigned to the teams is finite. Therefore, there exists an index $n$ where the paths $\texttt{path}_i^{n}$ contain communication points that have appeared in a previous path $n'\leq n$, as well, for all $i\in\ccalN$. Let $C$ be the first index $n$ when it holds that the communication points that appear in the paths $\texttt{path}_i^{C}$ have already appeared in a previous path $\texttt{path}_i^{P-1}$, for some $P\leq C$ and for all $i\in\ccalN$.
Since the selected communication points in the paths $\texttt{path}_i^{P-1}$ and $\texttt{path}_i^{C}$  are the same, we have that Algorithm \ref{alg:plan} generates the same optimal suffix path $\tilde{\tau}_i^{\text{suf},j^*}$ to synthesize  both $\texttt{path}_i^{P-1}$ and $\texttt{path}_i^{C}$. Therefore, we get that $\texttt{path}_i^{P-1}=\texttt{path}_i^{C}$. Consequently, the path $\texttt{path}_i^{C+1}$ will be the same as the path constructed at iteration 
$P$, i.e., $\texttt{path}_i^{C+1}=\texttt{path}_i^{P}$, since the optimal control synthesis problems that are solved to construct the path $\texttt{path}_i^{C+1}$ and $\texttt{path}_i^{P}$ are the same, for all robots $i\in\ccalN$. Similarly, we have that $\texttt{path}_i^{C+2}=\texttt{path}_i^{P+1}$. By inspection of the repetitive pattern, we conclude that for any $n\in\mathbb{N}$ it holds that $\texttt{path}_i^{C+n}=\texttt{path}_i^{C+n-\left(\left\lfloor \left(C+n\right)/\left(C-P+1\right)\right\rfloor-1\right)\left(C-P+1\right)}$,
where $\left\lfloor \cdot\right\rfloor$ stands for the floor function.  We conclude that the sequence $\texttt{path}_i^P,\texttt{path}_i^{P+1},\dots,\texttt{path}_i^C$ is repeated indefinitely for all iterations $n_i\geq C$ of Algorithm \ref{alg:plan} and for all robots $i\in\ccalN$ completing the proof.
which completes the proof.\end{proof}

\begin{rem}[Optimality of Algorithm \ref{alg:plan}]
Notice that Propositions \ref{prop:distance}-\ref{prop:converg2} do not guarantee that Algorithm \ref{alg:plan} will find the optimal prefix-suffix plan that minimizes the cost $J_p(\tau_i)=\alpha\sum_{i\in\ccalN}J(\tau_i^{\text{pre}})+(1-\alpha)\sum_{i\in\ccalN}J(\tau_i^{\text{suf}})$. Instead they only ensure that the total cost $\sum_{i\in\ccalN}J(\texttt{path}_i^{n})$ decreases with every iteration $n$ until $n=P$, when $\sum_{i\in\ccalN}J(\texttt{path}_i^{P})=\sum_{i\in\ccalN}J(\texttt{path}_i^{P+1})=\dots=\sum_{i\in\ccalN}J(\texttt{path}_i^{C})$ while for all iterations $n_i\geq C$ the sequence of paths $\texttt{path}_i^P,\texttt{path}_i^{P+1},\dots,\texttt{path}_i^C$ is repeated indefinitely. Therefore, the best plans $\tau_i^{n_i}$ \eqref{eq:taui} are obtained for any $n_i\geq P$, for all robots $i\in\ccalN$.
%
Sub-optimality is due to the decomposition of Problem \ref{pr:pr1} into intermittent communication control (Section \ref{sec:commun}) and task planning (Section \ref{sec:integration}) that are solved independently. The optimal plan can be found by translating the global LTL formula \eqref{eq:taskandcom} into a NBA, constructing a product automaton across all robots in the network as, e.g., in \cite{ulusoy2013optimality, ulusoy2014optimal}, and using graph search methods to find the optimal plan. However, such centralized methods are computationally expensive and resource demanding as it is also discussed in the Introduction. Moreover, recall that in this work we assume that the teams $\ccalT_m$ are fixed and never change. Note that the total cost of the plans $\tau_i^{n_i}$ can be further minimized if the robots in every team $\ccalT_m$ update not only the communication point $\bbv_j$, $j\in\ccalC_m$, but also the teams they belong to. Optimal design of the teams is part of our future work.
\end{rem}




\section{Simulation Studies}\label{sec:sim}
 In this section, we present a simulation study, implemented using MATLAB R2015b on a computer with Intel Core i7 2.2GHz and 4Gb RAM that illustrates our approach for a network of $N=12$ robots. Robots are categorized into $M=12$ teams as follows: $\ccalT_1=\{1,2,9\}$, $\ccalT_2=\{3,4,5\}$, $\ccalT_3=\{3, 6\}$, $\ccalT_4=\{1,3\}$, $\ccalT_5=\{2,5,6,11\}$, $\ccalT_6=\{4,12\}$, $\ccalT_7=\{5,9\}$, $\ccalT_8=\{4,9,12\}$, $\ccalT_9=\{6,7,10\}$, $\ccalT_{10}=\{7,8,11\}$, $\ccalT_{11}=\{8,10,11,12\}$, and $\ccalT_{12}=\{7,10\}$. Notice that the construction of teams $\ccalT_m$ results in a connected graph $\ccalG_{\ccalT}$ with $\max\{d_{\ccalT_m}\}_{m=1}^M=7$, as discussed in Section \ref{sec:prob}.  Mobility of each robot in the workspace is captured by a wTS with $|\ccalQ_i|=300$ states that represent $W=300$ locations of interest and weights $w_i$ that capture the distance between its states. Among the $W=300$ locations of interest, $R=70$ locations correspond to possible communication points. Also, every team has  $4\leq\left|\ccalC_m\right|\leq 6$ communication points while $\ccalC_m\cap\ccalC_n=\varnothing$, for all $m,n\in\ccalM$. Also, the parameter $\alpha$ in \eqref{eq:cost2} is selected as $\alpha=0.5$.
To model uncertainty in robot mobility, caused by exogenous disturbances that may affect the arrival times of the robots at the communication locations, we assume that the time required for robot $i$ to travel from location $\bbv_e$ to $\bbv_j$, with $(q_i^{\bbv_e},q_i^{\bbv_j})\in\rightarrow_i$, is generated by a uniform distribution on $[1,2]$, at the moment when robot $i$ arrives at location $\bbv_e$. 

The $\text{LTL}_{-\bigcirc}$ tasks for robots $1$, and $3$ are $\phi_1=\square\Diamond(\pi_1^{\bbv_{20}}\vee\pi_1^{\bbv_{10}}\vee\pi_1^{\bbv_{11}} ) \wedge \square\Diamond(\pi_1^{\bbv_{61}})\wedge \square\Diamond(\pi_1^{\bbv_{91}}\vee\pi_1^{\bbv_{100}}\vee\pi_1^{\bbv_{5}} \vee\pi_1^{\bbv_{60}})  \wedge\square (\neg \pi_1^{\bbv_{44}})\wedge \Diamond(\pi_1^{\bbv_{6}}\vee\pi_1^{\bbv_{7}}\vee\pi_1^{\bbv_{133}} )$ and $\phi_3=\square\Diamond(\xi_3^1\vee \xi_3^2)
\wedge[\square\Diamond(\xi_3^3)]\wedge\Diamond[\xi_3^2\rightarrow\square(\neg \xi_3^1) ]\wedge
(\neg\xi_3^3\mathcal{U}\xi_3^1)$, respectively, where $\xi_3^1=\pi_3^{\bbv_{81}}\vee \pi_3^{\bbv_{91}}$, $\xi_3^2=\pi_3^{\bbv_{120}}\vee \pi_3^{\bbv_{91}}\vee\pi_3^{\bbv_{31}}$, and $\xi_3^3=\pi_3^{\bbv_{91}}\vee \pi_3^{\bbv_{110}}\vee \pi_3^{\bbv_{15}}\vee \pi_3^{\bbv_{130}}$.
 All other robots are responsible for similar LTL tasks. For instance, the LTL formula in $\phi_3$ requires robot $3$ to (i) satisfy infinitely often either the Boolean formula $\xi_3^1$ or $\xi_3^2$; (ii) satisfy infinitely often the Boolean formula $\xi_3^3$; (iii) never satisfy $\xi_3^1$ if $\xi_3^2$ is ever satisfied; and (iv) never satisfy $\xi_3^3$ until $\xi_3^1$ is satisfied.
The Boolean formula $\xi_3^1$ is satisfied if robot $3$ visits either $\bbv_{81}$ or $\bbv_{91}$. The Boolean formulas $\xi_3^2$ and $\xi_3^3$ are interpreted similarly. Also, note that robot $1$ is responsible for visiting a user located at $\bbv_{61}$ infinitely often to transmit all collected information.


The schedules of communication events constructed as per Algorithm \ref{alg:schedule} have the following form with length $\ell=4\leq \max\{d_{\ccalT_m}\}_{m=1}^{12}+1=8$.

\footnotesize{
\begin{align}
\texttt{sched}_1&=[1,~4,~X,~X]^{\omega},~\texttt{sched}_7=[9,~12,~10,~X]^{\omega},\nonumber\\
\texttt{sched}_2&=[1,~5,~X,~X]^{\omega},~\texttt{sched}_8=[X,~X,~10,~11]^{\omega},\nonumber\\
\texttt{sched}_3&=[2,~4,~3,~X]^{\omega},~~\texttt{sched}_9=[1,~X,~8,~7]^{\omega},\nonumber\\
\texttt{sched}_4&=[2,~6,~8,~X]^{\omega},~~\texttt{sched}_{10}=[~9,~12,~X,~11]^{\omega},\nonumber\\
\texttt{sched}_5&=[2,~5,~X,~7]^{\omega},~~\texttt{sched}_{11}=[X,~5,~10,~11]^{\omega}.\nonumber\\
\texttt{sched}_6&=[9,~5,~3,~X]^{\omega},~~\texttt{sched}_{12}=[X,~6,~8,~11]^{\omega}.\nonumber
\end{align}}

\normalsize

Then, given the above schedules, feasible initial paths $\texttt{path}_i^0$ are constructed for all robots in $3$ seconds approximately using \cite{kantaros2017Csampling}. Specifically, given communication points for all teams $\ccalT_m$, $m\in\ccalM_i$,  \cite{kantaros2017Csampling} can synthesize a feasible plan $\tilde{\tau}_i^{0}$ that satisfies $\psi_i$ in $0.35$ seconds on average for all $i\in\ccalN$. Similar runtimes are reported if off-the-shelf model checkers, such as  NuSMV \cite{cimatti2002nusmv}, are employed for initialization. 
Moreover, Algorithm \ref{alg:plan} constructs online paths $\texttt{path}_i^{n_i}$ with $P=C=5$. The size of the NBA $B_i$ that corresponds to $\psi_i$ in \eqref{eq:psi} satisfies $7\leq|\ccalQ_{B_i}|\leq 16$, for all $i\in\ccalN$ while the average runtime to solve a single optimal control synthesis problem to generate the optimal suffix path $\tilde{\tau}_i^{\text{suf},j}$ was $45$ seconds. Since $4\leq |\ccalC_m|\leq 6$, for all $m\in\ccalC_m$, the average runtime of Algorithm \ref{alg:plan} per iteration $c_i$ is between $4\times 45=180$ seconds and $6\times 45=270$ seconds. Note that this runtime depends only on the size of the sets $\ccalC_m$ and not on the size of the teams $\ccalT_m$.  Note also that this runtime is higher than the initialization runtime, since during initialization only feasible plans are required while for the online construction of $\texttt{path}_{i}^{n_i}$ optimal suffix paths are created.  More computational efficient methods are discussed in Remark \ref{rem:comp} that can decrease the corresponding runtime.

To illustrate that the designed motion plans ensure intermittent communication among the robots infinitely often, we implement a consensus algorithm over the dynamic network $\ccalG_c$. Specifically, we assume that initially all robots generate a random number $v_{i}(t_0)$ and when all robots $i\in\ccalT_m$ meet at a communication point $j\in\ccalC_m$ they perform the following consensus update $v_{i}(t)=\frac{1}{\left|\ccalT_m\right|}\sum\nolimits_{e\in\ccalT_m}v_{e}(t)$. 
Figure \ref{fig:cons} shows that eventually all robots reach a consensus on the numbers $v_i(t)$, which means that communication among robots takes place infinitely often, as proven in Theorem \ref{thm:com}. Moreover, Figure \ref{fig:tn} shows the time instants when robots $1,~2,$ and $3$ started executing the paths $\texttt{path}_i^{n}$, for all $n\in\set{1,\dots,14}$. Observe in Figure \ref{fig:tn} that there exist time instants $t_n$ when all three robots are executing their respective paths $\texttt{path}_i^{n}$ for a common $n$, for all $n\in\set{1,\dots,14}$, as discussed in Remark~\ref{rem:ni}. The communication events over time for teams $\ccalT_1$ and $\ccalT_5$ are depicted in Figure \ref{fig:meetings}. Observe in
Figure \ref{fig:meetings} that the communication time instances do not depend linearly on time, which means that communication within these teams is aperiodic. Figure \ref{fig:distance} shows that the total traveled distance $\sum_{i=1}^{N}J(\texttt{path}_i^{n})$ with respect to $n\in\mathbb{N}$ which decreases as expected due to Proposition~\ref{prop:distance}.  The corresponding simulation video can be found in \cite{LTLvideo}.


\begin{figure}[t]
  \centering
  \subfigure[]{
  \label{fig:cons}
    \includegraphics[width=0.46\linewidth]{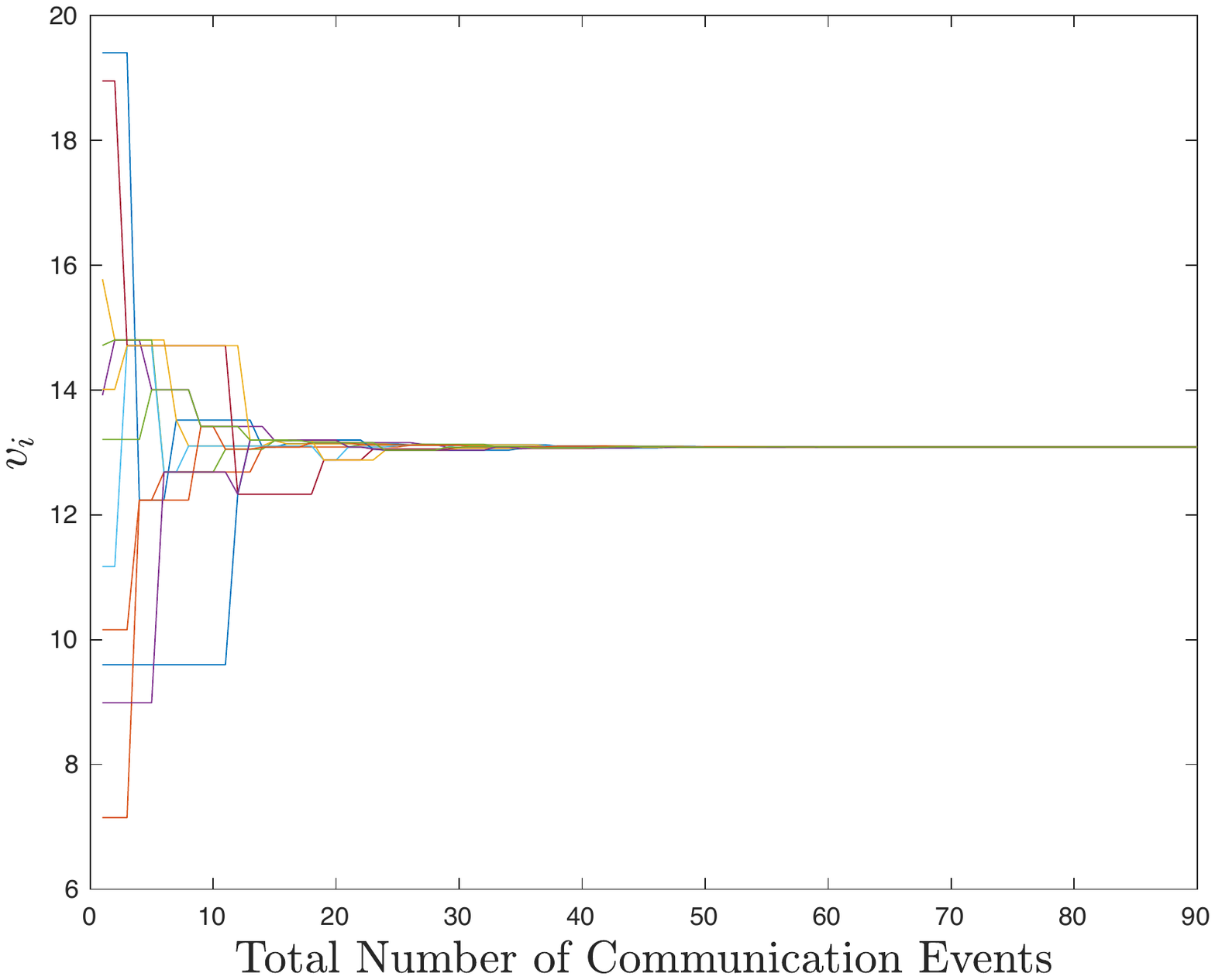}}
      \subfigure[]{
  \label{fig:tn}
    \includegraphics[width=0.47\linewidth]{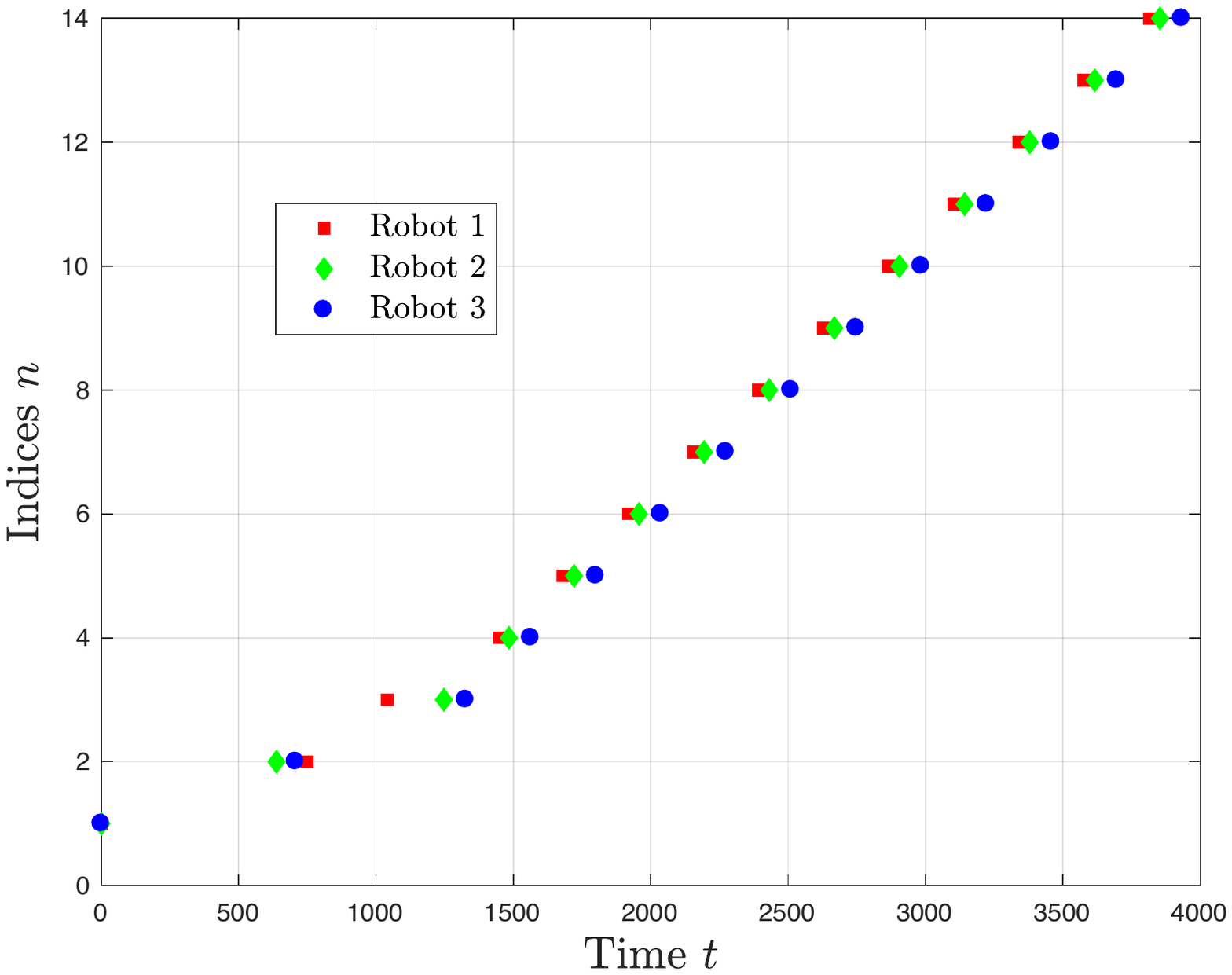}}
  \caption{Figure \ref{fig:cons} depicts the consensus of numbers $v_{i}(t)$. Figure \ref{fig:tn} illustrates the time instants when the robots $1,2,$ and $3$ started executing the paths $\texttt{path}_i^n$. For instance, the time between the second and the third red square denotes the time required by robot $1$ to travel along the path $\texttt{path}_1^2$.}
  \label{fig:two}
\end{figure}

\begin{figure}[t]
    \centering
    \subfigure[Team 1]{
    \label{team1}
    \includegraphics[width=0.44\linewidth]{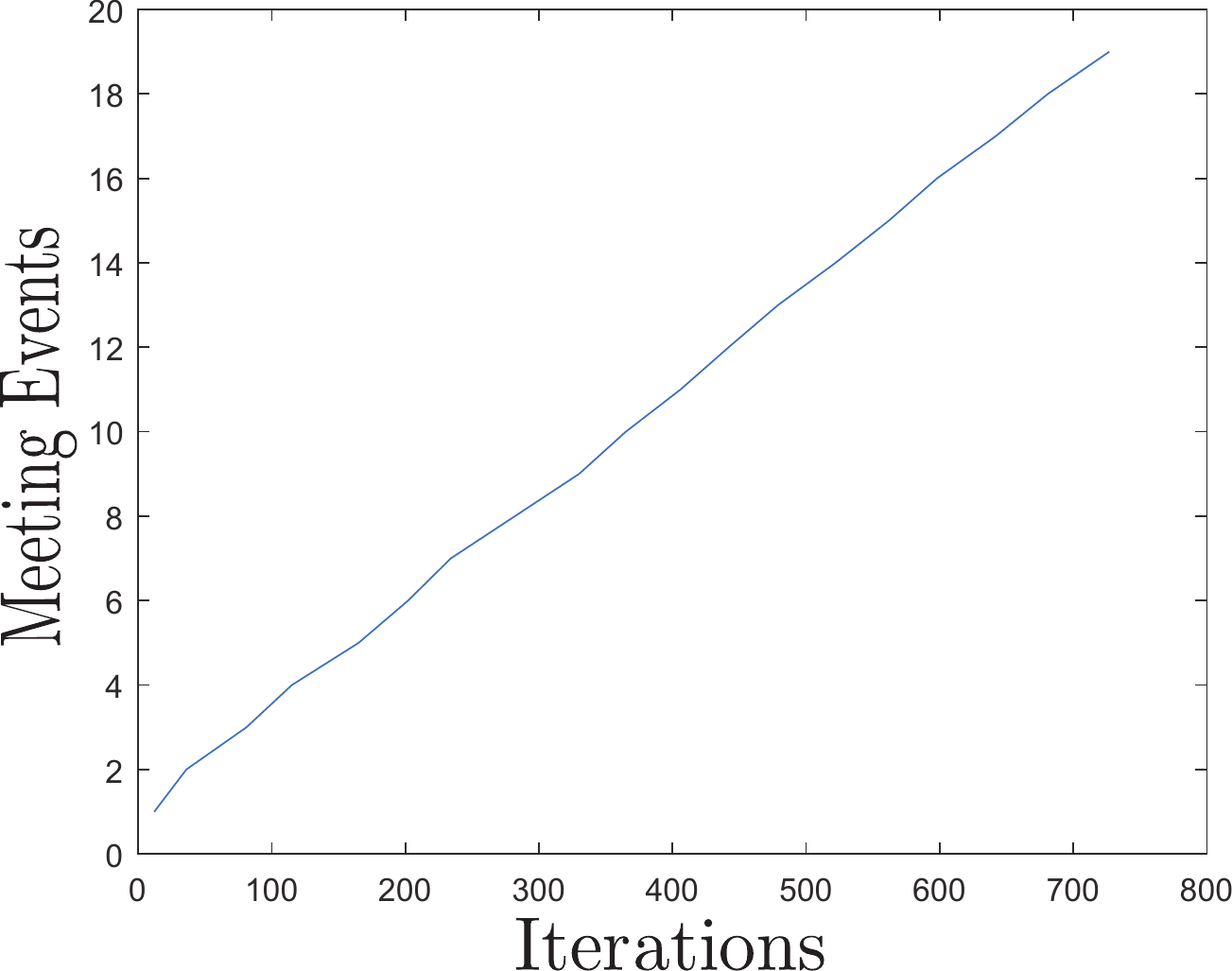}} 
    \subfigure[Team 4]{
    \label{team5}
       \includegraphics[width=0.44\linewidth]{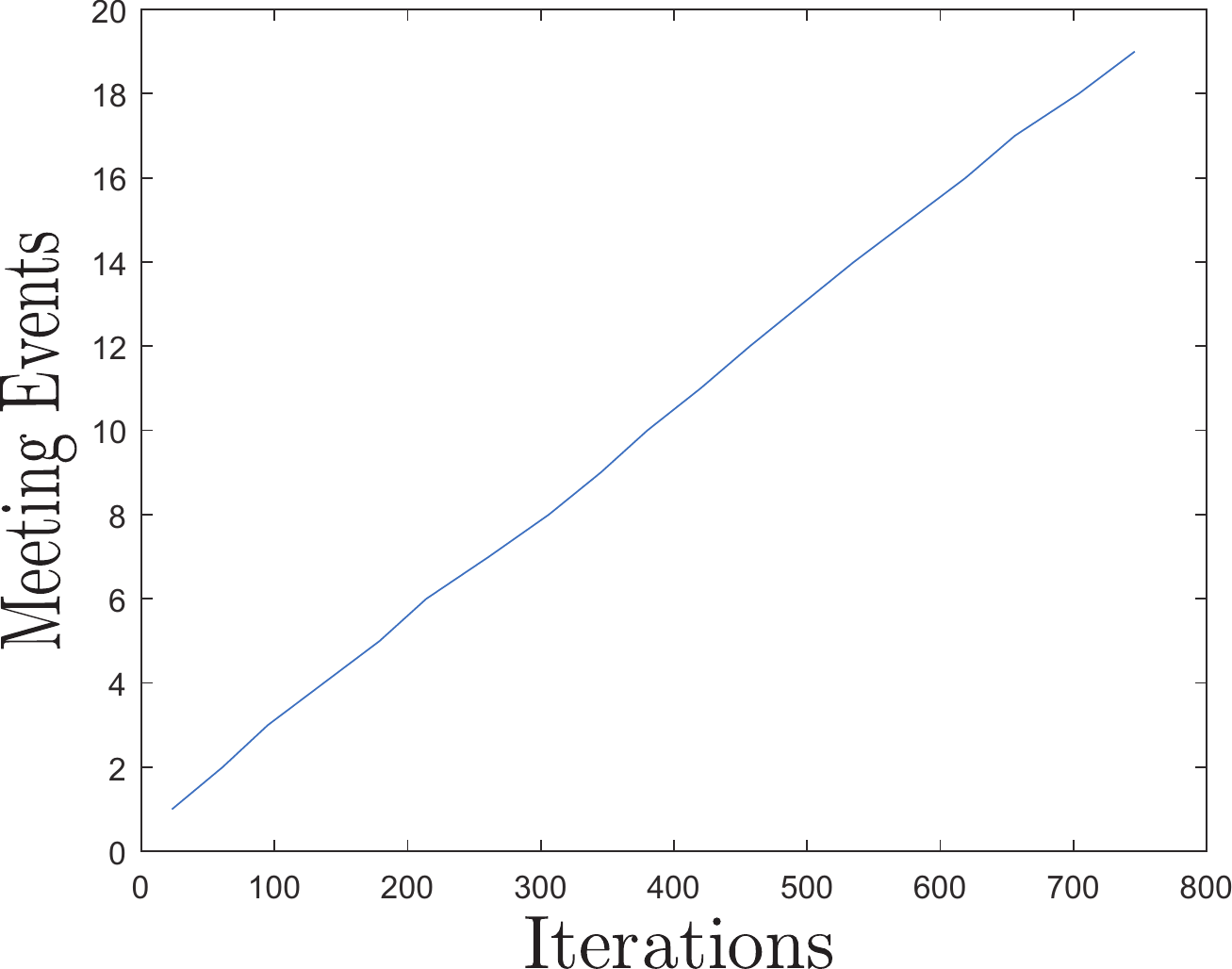}}                                
        \caption{Graphical depiction of communication events for team $\ccalT_1$ (Figure \ref{team1}) and $\ccalT_4$ (Figure \ref{team5}) with respect to time. }
     \label{fig:meetings}
\end{figure}

\begin{figure}[t]
  \centering
    \includegraphics[width=0.5\linewidth]{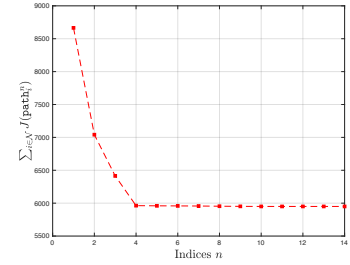}\\
  \caption{Evolution of the total cost $\sum_{i=1}^{12}J(\texttt{path}_i^n)$ with respect to iterations $n$. Note that there is also a slight decrease in $\sum_{i=1}^{12}J(\texttt{path}_i^n)$ from $n=4$ to $n=5$. After $n=5$, a repetitive pattern in $\texttt{path}_i^n$ is detected giving rise to motion plans $\tau_i$ in a prefix-suffix form. }
  \label{fig:distance}
\end{figure}


Note also that due to excessive memory requirements it would be impossible to generate optimal motion plans ${\tau}_{i}$ by using either the optimal control synthesis methods presented in \cite{kantaros15asilomar,ulusoy2013optimality,ulusoy2014optimal} that rely on the construction of a synchronous product automaton or off-the-shelf model checkers \cite{cimatti2002nusmv,holzmann2004spin} that can construct feasible but not optimal paths. Specifically, \cite{kantaros15asilomar,ulusoy2013optimality,ulusoy2014optimal} rely on the construction of a product transition system (PTS), whose state space has dimension $|\mathcal{Q}_{\text{PTS}}|=\times_{\forall i}|\ccalQ_{i}|=W^{|N|}=300^{12}=5.3144\times 10^{29}$. This PTS is combined with the B$\ddot{\text{u}}$chi Automaton $B$ that corresponds to the LTL statement $\phi=(\wedge_{\forall i\in\ccalN}\phi_i)\wedge\phi_{\text{com}}$ to construct a Product B$\ddot{\text{u}}$chi Automaton whose state space has dimension $|\mathcal{Q_{\text{PBA}}}|=|\mathcal{Q}_{\text{PTS}}|\times|\ccalQ_{B}|=5.3144\times 10^{29}\times|\ccalQ_B|$ which is too large to manipulate in practice let alone searching for an optimal accepting infinite run. Finally, we validated the efficacy of the proposed distributed algorithm by experimental results that are omitted due to space limitations. The video showing the conducted experiment along with its description can be found in~\cite{int_com_video}.

\section{Conclusion}\label{sec:conclusion}
In this paper, we developed the first distributed and online intermittent communication framework for networks of mobile robots with limited communication capabilities that are responsible for accomplishing temporal logic tasks. Our proposed distributed online control framework jointly determines local plans that allow all robots to fulfill their assigned $\text{LTL}_{-\bigcirc}$ tasks, schedules of communication events that guarantee information exchange infinitely often, and optimal communication locations that minimize a desired distance metric. We showed that the proposed method can solve optimally very large-scale problems that are impossible to solve using current off-the-shelf model-checkers.

\bibliographystyle{IEEEtran}
\bibliography{YK_bib}

\begin{thebibliography}{10}
\providecommand{\url}[1]{#1}
\csname url@samestyle\endcsname
\providecommand{\newblock}{\relax}
\providecommand{\bibinfo}[2]{#2}
\providecommand{\BIBentrySTDinterwordspacing}{\spaceskip=0pt\relax}
\providecommand{\BIBentryALTinterwordstretchfactor}{4}
\providecommand{\BIBentryALTinterwordspacing}{\spaceskip=\fontdimen2\font plus
\BIBentryALTinterwordstretchfactor\fontdimen3\font minus
  \fontdimen4\font\relax}
\providecommand{\BIBforeignlanguage}[2]{{%
\expandafter\ifx\csname l@#1\endcsname\relax
\typeout{** WARNING: IEEEtran.bst: No hyphenation pattern has been}%
\typeout{** loaded for the language `#1'. Using the pattern for}%
\typeout{** the default language instead.}%
\else
\language=\csname l@#1\endcsname
\fi
#2}}
\providecommand{\BIBdecl}{\relax}
\BIBdecl

\bibitem{zavlanos2007potential}
M.~M. Zavlanos and G.~J. Pappas, ``Potential fields for maintaining
  connectivity of mobile networks,'' \emph{IEEE Transactions on Robotics,},
  vol.~23, no.~4, pp. 812--816, 2007.

\bibitem{ji2007distributed}
M.~Ji and M.~B. Egerstedt, ``Distributed coordination control of multi-agent
  systems while preserving connectedness.'' \emph{IEEE Transactions on
  Robotics}, vol.~23, no.~4, pp. 693--703, August 2007.

\bibitem{Zavlanos_IEEETRO08}
M.~Zavlanos and G.~Pappas, ``Distributed connectivity control of mobile
  networks,'' \emph{IEEE Transactions on Robotics}, vol.~24, no.~6, pp.
  1416--1428, 2008.

\bibitem{sabattini2013decentralized}
L.~Sabattini, N.~Chopra, and C.~Secchi, ``Decentralized connectivity
  maintenance for cooperative control of mobile robotic systems,'' \emph{The
  International Journal of Robotics Research}, vol.~32, no.~12, pp. 1411--1423,
  2013.

\bibitem{Zavlanos_IEEE11}
M.~Zavlanos, M.~Egerstedt, and G.~Pappas, ``Graph theoretic connectivity
  control of mobile robot networks,'' \emph{Proc. of the IEEE}, vol.~99, no.~9,
  pp. 1525--1540, 2011.

\bibitem{zavlanos2013network}
M.~M. Zavlanos, A.~Ribeiro, and G.~J. Pappas, ``Network integrity in mobile
  robotic networks,'' \emph{IEEE Transactions on Automatic Control}, vol.~58,
  no.~1, pp. 3--18, 2013.

\bibitem{yan2012robotic}
Y.~Yan and Y.~Mostofi, ``Robotic router formation in realistic communication
  environments,'' \emph{IEEE Transactions on Robotics}, vol.~28, no.~4, pp.
  810--827, 2012.

\bibitem{kantaros2016distributed}
Y.~Kantaros and M.~M. Zavlanos, ``Distributed communication-aware coverage
  control by mobile sensor networks,'' \emph{Automatica}, vol.~63, pp.
  209--220, 2016.

\bibitem{kantaros2016global}
Y.~Kantaros and M.~M. Zavlanos, ``Global planning for multi-robot communication
  networks in complex environments,'' \emph{IEEE Transactions on Robotics},
  vol.~32, no.~5, pp. 1045--1061, October 2016.

\bibitem{stephan2017concurrent}
J.~Stephan, J.~Fink, V.~Kumar, and A.~Ribeiro, ``Concurrent control of mobility
  and communication in multirobot systems,'' \emph{IEEE Transactions on
  Robotics}, 2017.

\bibitem{wen2014distributed}
G.~Wen, Z.~Duan, W.~Ren, and G.~Chen, ``Distributed consensus of multi-agent
  systems with general linear node dynamics and intermittent communications,''
  \emph{International Journal of Robust and Nonlinear Control}, vol.~24,
  no.~16, pp. 2438--2457, 2014.

\bibitem{wang2010awareness}
Y.~Wang and I.~I. Hussein, ``Awareness coverage control over large-scale
  domains with intermittent communications,'' \emph{Automatic Control, IEEE
  Transactions on}, vol.~55, no.~8, pp. 1850--1859, 2010.

\bibitem{lindgren2003probabilistic}
A.~Lindgren, A.~Doria, and O.~Schel{\'e}n, ``Probabilistic routing in
  intermittently connected networks,'' \emph{ACM SIGMOBILE mobile computing and
  communications review}, vol.~7, no.~3, pp. 19--20, 2003.

\bibitem{jones2007practical}
E.~P. Jones, L.~Li, J.~K. Schmidtke, and P.~A. Ward, ``Practical routing in
  delay-tolerant networks,'' \emph{Mobile Computing, IEEE Transactions on},
  vol.~6, no.~8, pp. 943--959, 2007.

\bibitem{dimarogonas2012distributed}
D.~V. Dimarogonas, E.~Frazzoli, and K.~H. Johansson, ``Distributed
  event-triggered control for multi-agent systems,'' \emph{IEEE Transactions on
  Automatic Control}, vol.~57, no.~5, pp. 1291--1297, 2012.

\bibitem{tabuada2007event}
P.~Tabuada, ``Event-triggered real-time scheduling of stabilizing control
  tasks,'' \emph{IEEE Transactions on Automatic Control}, vol.~52, no.~9, pp.
  1680--1685, 2007.

\bibitem{kloetzer2008distributed}
M.~Kloetzer and C.~Belta, ``Distributed implementations of global temporal
  logic motion specifications,'' in \emph{IEEE International Conference on
  Robotics and Automation}, Pasadena, CA, USA, May 2008, pp. 393--398.

\bibitem{chen2011synthesis}
Y.~Chen, X.~C. Ding, and C.~Belta, ``Synthesis of distributed control and
  communication schemes from global {LTL} specifications,'' in \emph{50th IEEE
  Conference on Decision and Control and European Control Conference}, Orlando,
  FL, USA, December 2011, pp. 2718--2723.

\bibitem{chen2012formal}
Y.~Chen, X.~C. Ding, A.~Stefanescu, and C.~Belta, ``Formal approach to the
  deployment of distributed robotic teams,'' \emph{IEEE Transactions on
  Robotics}, vol.~28, no.~1, pp. 158--171, 2012.

\bibitem{ulusoy2013optimality}
A.~Ulusoy, S.~L. Smith, X.~C. Ding, C.~Belta, and D.~Rus, ``Optimality and
  robustness in multi-robot path planning with temporal logic constraints,''
  \emph{The International Journal of Robotics Research}, vol.~32, no.~8, pp.
  889--911, 2013.

\bibitem{ulusoy2014optimal}
A.~Ulusoy, S.~L. Smith, and C.~Belta, ``Optimal multi-robot path planning with
  ltl constraints: guaranteeing correctness through synchronization,'' in
  \emph{Distributed Autonomous Robotic Systems}.\hskip 1em plus 0.5em minus
  0.4em\relax Springer, 2014, pp. 337--351.

\bibitem{kantaros2017Csampling}
\BIBentryALTinterwordspacing
Y.~Kantaros and M.~M. Zavlanos, ``Sampling-based optimal control synthesis for
  multi-robot systems under global temporal tasks,'' \emph{IEEE Transactions on
  Automatic Control}, 2017, (accepted). [Online]. Available:
  \url{https://arxiv.org/pdf/1706.04216.pdf}
\BIBentrySTDinterwordspacing

\bibitem{kantaros2017Dsampling}
Y.~Kantaros and M.~M. Zavlanos, ``Distributed optimal control synthesis for
  multi-robot systems under global temporal tasks,'' in \emph{9th ACM/IEEE
  International Conference on Cyber-Physical Systems (ICCPS)}, Porto, Portugal,
  April 2018, pp. 162--173.

\bibitem{sahin2017provably}
Y.~E. Sahin, P.~Nilsson, and N.~Ozay, ``Provably-correct coordination of large
  collections of agents with counting temporal logic constraints,'' in
  \emph{Proceedings of the 8th International Conference on Cyber-Physical
  Systems}.\hskip 1em plus 0.5em minus 0.4em\relax Pittsburgh,PA: ACM, 2017,
  pp. 249--258.

\bibitem{kloetzer2010automatic}
M.~Kloetzer and C.~Belta, ``Automatic deployment of distributed teams of robots
  from temporal logic motion specifications,'' \emph{IEEE Transactions on
  Robotics}, vol.~26, no.~1, pp. 48--61, 2010.

\bibitem{pola2016decentralized}
G.~Pola, P.~Pepe, and M.~D. Di~Benedetto, ``Decentralized supervisory control
  of networks of nonlinear control systems,'' \emph{arXiv preprint
  arXiv:1606.04647}, 2016.

\bibitem{kantaros2016distributedInterm}
Y.~Kantaros and M.~M. Zavlanos, ``Distributed intermittent connectivity control
  of mobile robot networks,'' \emph{Transactions on Automatic Control},
  vol.~62, no.~7, pp. 3109--3121, July 2017.

\bibitem{kantaros2016simultaneous}
Y.~Kantaros and M.~M. Zavlanos, ``Simultaneous intermittent communication
  control and path optimization in networks of mobile robots,'' in
  \emph{Conference on Decision and Control (CDC)}.\hskip 1em plus 0.5em minus
  0.4em\relax Las Vegas, NV: IEEE, December 2016, pp. 1794--1795.

\bibitem{kantaros15asilomar}
Y.~Kantaros and M.~M. Zavlanos, ``Intermittent connectivity control in mobile
  robot networks,'' in \emph{49th Asilomar Conference on Signals, Systems and
  Computers}, Pacific Grove, CA, USA, November, 2015, pp. 1125--1129.

\bibitem{guo2017distributed}
M.~Guo and M.~M. Zavlanos, ``Distributed data gathering with buffer constraints
  and intermittent communication,'' in \emph{IEEE Internationtal Conference on
  Robotics and Automation (ICRA)}, Singapore, May-June 2017, pp. 279--284.

\bibitem{zavlanos2010synchronous}
M.~M. Zavlanos, ``Synchronous rendezvous of very-low-range wireless agents,''
  in \emph{49th IEEE Conference on Decision and Control (CDC)}, Atlanta, GA,
  USA, December 2010, pp. 4740--4745.

\bibitem{hollinger2010multi}
G.~Hollinger and S.~Singh, ``Multi-robot coordination with periodic
  connectivity,'' in \emph{IEEE International Conference on Robotics and
  Automation (ICRA)}, Anchorage, Alaska, May 2010, pp. 4457--4462.

\bibitem{baier2008principles}
C.~Baier and J.-P. Katoen, \emph{Principles of model checking}.\hskip 1em plus
  0.5em minus 0.4em\relax MIT press Cambridge, 2008, vol. 26202649.

\bibitem{vardi1986automata}
M.~Y. Vardi and P.~Wolper, ``An automata-theoretic approach to automatic
  program verification,'' in \emph{1st Symposium in Logic in Computer Science
  (LICS)}.\hskip 1em plus 0.5em minus 0.4em\relax IEEE Computer Society, 1986.

\bibitem{clarke1999model}
E.~M. Clarke, O.~Grumberg, and D.~Peled, \emph{Model checking}.\hskip 1em plus
  0.5em minus 0.4em\relax MIT press, 1999.

\bibitem{kloetzer2008fully}
M.~Kloetzer and C.~Belta, ``A fully automated framework for control of linear
  systems from temporal logic specifications,'' \emph{IEEE Transactions on
  Automatic Control}, vol.~53, no.~1, pp. 287--297, 2008.

\bibitem{smith2011optimal}
S.~L. Smith, J.~Tumova, C.~Belta, and D.~Rus, ``Optimal path planning for
  surveillance with temporal-logic constraints,'' \emph{The International
  Journal of Robotics Research}, vol.~30, no.~14, pp. 1695--1708, 2011.

\bibitem{guo2015multi}
M.~Guo and D.~V. Dimarogonas, ``Multi-agent plan reconfiguration under local
  {LTL} specifications,'' \emph{The International Journal of Robotics
  Research}, vol.~34, no.~2, pp. 218--235, 2015.

\bibitem{cimatti2002nusmv}
A.~Cimatti, E.~Clarke, E.~Giunchiglia, F.~Giunchiglia, M.~Pistore, M.~Roveri,
  R.~Sebastiani, and A.~Tacchella, ``Nusmv 2: An opensource tool for symbolic
  model checking,'' in \emph{International Conference on Computer Aided
  Verification}.\hskip 1em plus 0.5em minus 0.4em\relax Springer, 2002, pp.
  359--364.

\bibitem{LTLvideo}
SimulationVideo, \texttt{\url{https://vimeo.com/274366915}}, 2018.

\bibitem{holzmann2004spin}
G.~J. Holzmann, \emph{The SPIN model checker: Primer and reference
  manual}.\hskip 1em plus 0.5em minus 0.4em\relax Addison-Wesley Reading, 2004,
  vol. 1003.

\bibitem{int_com_video}
ExperimentVideo, \texttt{\url{https://vimeo.com/239508876}}, 2017.

\end{thebibliography}

\end{document}